%% file: main.tex
\documentclass{article}
\usepackage{microtype}
\usepackage{amsmath,amssymb,booktabs,graphicx,tabularx,adjustbox,multirow}
\usepackage[hidelinks]{hyperref}
\usepackage{xurl}
\usepackage[accepted]{icml2025}
\renewcommand{\algorithmicrequire}{\textbf{Input.}}
\usepackage{xcolor}
\usepackage{amsthm}
\usepackage{caption,subcaption}
\usepackage{needspace,placeins}
\usepackage[framemethod=default]{mdframed}
\definecolor{examplecolor}{HTML}{199D9A}
\definecolor{caseblue}{HTML}{607FA8}
\definecolor{casegreen}{HTML}{199D9A}
\definecolor{codepink}{HTML}{C23C7B}
\newmdenv[linecolor=examplecolor,backgroundcolor=examplecolor!4,linewidth=0.6pt,innerleftmargin=10pt,innerrightmargin=10pt,innertopmargin=8pt,innerbottommargin=8pt,skipabove=12pt,skipbelow=12pt,frametitlefont=\bfseries,frametitlebackgroundcolor=examplecolor!10]{examplebox}
\newtheorem{proposition}{Proposition}[section]
\usepackage[T1]{fontenc}
\usepackage[utf8]{inputenc}

\setcitestyle{authoryear,round,citesep={,},aysep={,},yysep={,}}
\icmltitlerunning{JevSpawn: Adaptive Agentic Inference through Compositional Action Spaces}
\hypersetup{pdftitle={JevSpawn: Adaptive Agentic Inference through Compositional Action Spaces},pdfauthor={Haoyang Su, Weiran Huang},pdfsubject={Preprint},hidelinks}
\begin{document}
\twocolumn[{
\icmltitle{JevSpawn: Adaptive Agentic Inference through Compositional Action Spaces}
\begin{icmlauthorlist}
\begin{tabular}{@{}c@{\hspace{2.5em}}c@{}}
\icmlauthor{Haoyang Su}{} & \icmlauthor{Weiran Huang}{} \\
{\small Fudan University} & {\small Shanghai Jiao Tong University} \\
{\small Shanghai Innovation Institute} & {\small Shanghai Innovation Institute}
\end{tabular}
\end{icmlauthorlist}
\vskip 0.12in
\begin{minipage}{\textwidth}
\captionsetup{type=figure,hypcap=false}
\centering
\begin{subfigure}[t]{0.55\linewidth}
\centering
\includegraphics[width=\linewidth]{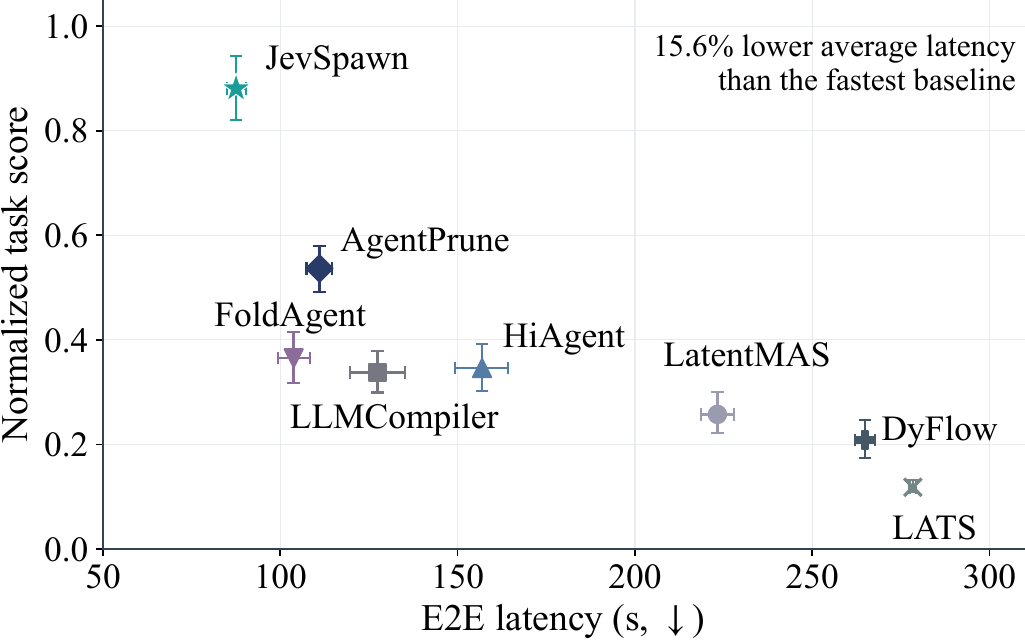}
\caption{Quality and latency}
\end{subfigure}
\hfill
\begin{subfigure}[t]{0.42\linewidth}
\centering
\includegraphics[width=\linewidth]{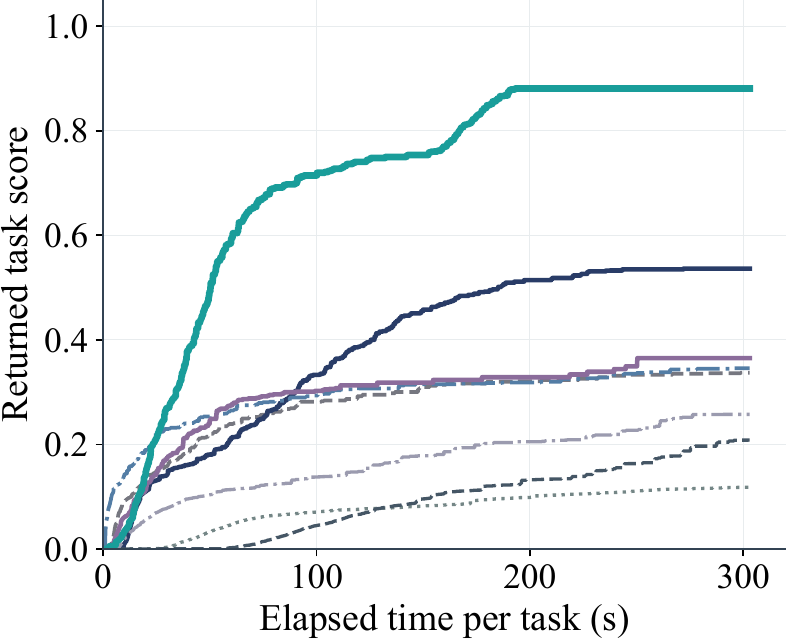}
\caption{Task scores over time}
\end{subfigure}
\par\medskip
\includegraphics[scale=0.625000]{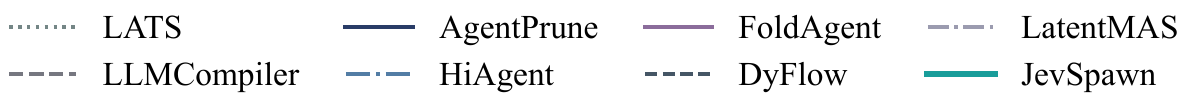}
\caption{Quality and latency across eight benchmark tasks for JevSpawn and seven agent baselines. (a) Average quality and latency with 95\% paired bootstrap intervals. (b) Scores accumulated as tasks finish.}
\label{fig:overview}
\end{minipage}
\vskip 0.15in
}]

\begin{abstract}
LLM agents generate intermediate reasoning and actions token by token, making extended interactions slow and computationally expensive. Jev-style models offer fast probabilistic predictions over finite fields, but require those fields to be specified in advance. This requirement limits autonomous task solving, where the available actions must be derived from natural language instructions and adapted through interaction. We introduce JevSpawn, a compositional policy that connects natural language task specifications to finite probabilistic exploration. Parallel action spawning is coupled with feedback driven branch selection, representation revision, and recovery from retained alternatives. Shared action structure and model prefixes reduce repeated generation and context computation without additional training. Evaluations on eight benchmark tasks against seven agent baselines and a TypeSafe Jev variant establish JevSpawn as a promising approach to structured agentic inference, with improved task performance and faster navigation. Code is available at \textcolor{codepink}{\url{https://github.com/Hoyant-Su/JevSpawn}}.
\end{abstract}

\section{Introduction}

LLM agents solve tasks through tool use and repeated interaction with an environment \citep{yao2023react,NEURIPS2023_1b44b878,Guo_2024}. Reasoning and actions are generated token by token, while observations enlarge the context for later steps. Exploring alternative actions adds further cost, limiting both the breadth and duration of interaction under a fixed inference budget.

Existing methods use this budget more effectively by organizing model calls and reducing intermediate information \citep{https://doi.org/10.48550/arxiv.2601.14192,https://doi.org/10.48550/arxiv.2502.14321}. Search favors promising trajectories, while workflow optimization removes redundant calls and parallelizes independent tools \citep{zhou2024lats,kim2024llmcompiler,ICLR2025_5492ecbc,ICLR2025_36b7acf6,wang2025dyflow}. Communication pruning, memory management, and latent exchange reduce the information passed between steps \citep{zhang2025agentprune,wang-etal-2025-agentdropout,hu-etal-2025-hiagent,sun2026scaling,zou2026latent}. Further savings are possible within the actions themselves, since alternatives often share a form and differ in only a few arguments.

To exploit this shared structure, repeated action descriptions can be replaced by predictions over finite fields. Constrained decoding enforces output rules and can skip deterministic continuations \citep{Beurer_Kellner_2023,geng-etal-2023-grammar,MLSYS2025_5c20ca4b,arxiv231207104}. TypeSafe Jev returns typed values and probabilities for supplied questions without generating response strings \citep{almeida2026jev}. The application must, however, define the fields in advance. For autonomous agents, suitable action fields must be inferred from instructions and revised as new observations arrive. The central challenge is to retain the speed of finite prediction while allowing the action space to change during interaction.

JevSpawn addresses this challenge by making the action fields part of the policy. Field values are combined into executable actions, and joint probabilities guide parallel spawning. Execution feedback guides branch selection and field revision, while retained branches support recovery from unsuccessful choices. Shared action structure and model context reduce repeated generation and computation. The resulting policy extends Jev-style prediction to sustained task solving without additional training.

Our contributions are threefold. First, a compositional action representation connects natural language goals and interaction rules to executable finite fields, extending Jev-style prediction to tasks including spatial navigation. Second, a unified Jev-style architecture couples parallel spawning with feedback driven state transitions, representation revision, and recovery through retained branches. Third, parallel causal evaluation and shared prefix reuse provide a common computational basis for the finite policy, reducing repeated action generation and context processing.

\input{related}
\input{method}
\input{experiments}

\section{Conclusion}

JevSpawn extends Jev-style prediction to agentic inference by making the finite action space adaptive. A compositional policy couples parallel spawning with feedback driven exploration, representation revision, and branch recovery. Shared structure and prefix computation reduce the cost of evaluating alternatives without additional training. Experiments demonstrate higher scores across several tasks and reduced latency in navigation.

\bibliographystyle{references}
\bibliography{references}
\clearpage
\onecolumn
\raggedbottom
\appendix
\input{derivations}
\section{Experimental Details}
\input{setup}

\input{ablations}
\clearpage
\input{cases}
\end{document}

%% file: related.tex
\section{Related Work}

\label{sec:related}

\subsection{Efficient Agentic Inference}

Efficient agentic inference depends on both the computation requested by an agent and the cost of executing those requests \citep{Guo_2024,https://doi.org/10.48550/arxiv.2601.14192,https://doi.org/10.48550/arxiv.2502.14321}. At the agent level, existing methods organize exploration or reduce intermediate information. Search and workflow optimization determine where model calls are spent \citep{yao2023treeofthoughts,ICLR2025_5492ecbc,ICLR2025_36b7acf6}. LATS uses environment feedback to guide search \citep{zhou2024lats}, LLMCompiler parallelizes tool calls \citep{kim2024llmcompiler}, and DyFlow adapts workflows during execution \citep{wang2025dyflow}. Fewer or better scheduled calls reduce wasted computation, but each interaction still carries messages and history needed for later actions.

Reducing this information burden is the focus of communication and memory management \citep{https://doi.org/10.48550/arxiv.2308.08155,https://doi.org/10.48550/arxiv.2402.14034,https://doi.org/10.48550/arxiv.2304.03442,packer2024memgptllmsoperatingsystems,NEURIPS2023_1b44b878,https://doi.org/10.48550/arxiv.2409.07429}. AgentPrune removes communication edges \citep{zhang2025agentprune}, while AgentDropout removes unnecessary agents \citep{wang-etal-2025-agentdropout}. HiAgent organizes subgoal memory \citep{hu-etal-2025-hiagent}, and Context Folding summarizes history \citep{sun2026scaling}. LatentMAS further reduces reliance on text by exchanging latent representations \citep{zou2026latent}. Together, call scheduling and information reduction determine how much work reaches the model. The cost of that remaining work depends on how context is processed and outputs are generated.

At this computational level, inference methods accelerate individual model requests. Transformer inference \citep{https://doi.org/10.48550/arxiv.1706.03762} benefits from efficient attention \citep{https://doi.org/10.48550/arxiv.2205.14135,https://doi.org/10.48550/arxiv.2307.08691,https://doi.org/10.48550/arxiv.2407.08608,https://doi.org/10.48550/arxiv.2501.01005}, compact key and value representations \citep{https://doi.org/10.48550/arxiv.1911.02150,https://doi.org/10.48550/arxiv.2305.13245}, and improved scheduling and memory management \citep{https://doi.org/10.48550/arxiv.2401.09670,https://doi.org/10.48550/arxiv.2311.18677,https://doi.org/10.48550/arxiv.2308.16369,https://doi.org/10.48550/arxiv.2403.02310,https://doi.org/10.48550/arxiv.2405.04437}. Partitioning, offloading, and activation sparsity address additional resource constraints \citep{https://doi.org/10.48550/arxiv.2207.00032,https://doi.org/10.48550/arxiv.2303.06865,https://doi.org/10.48550/arxiv.2312.12456}. For repeated agent interactions, these per-request improvements can be extended by reusing the context shared across successive calls.

Prefix and text caching provide such reuse \citep{https://doi.org/10.48550/arxiv.2405.16444,https://doi.org/10.48550/arxiv.2311.04934,https://doi.org/10.48550/arxiv.2402.05099,https://doi.org/10.48550/arxiv.2402.15220}, with prefix aware scheduling extending reuse across workers \citep{https://doi.org/10.48550/arxiv.2407.00023}. Cache compression and selective retention or retrieval reduce the cost of accumulated history \citep{https://doi.org/10.48550/arxiv.2309.17453,https://doi.org/10.48550/arxiv.2306.14048,https://doi.org/10.48550/arxiv.2305.17118,https://doi.org/10.48550/arxiv.2404.14469,https://doi.org/10.48550/arxiv.2406.02069,https://doi.org/10.48550/arxiv.2406.10774,https://doi.org/10.48550/arxiv.2402.02750,https://doi.org/10.48550/arxiv.2405.14366,https://doi.org/10.48550/arxiv.2310.07240,https://doi.org/10.48550/arxiv.2404.04793,https://doi.org/10.48550/arxiv.2310.01801,https://doi.org/10.48550/arxiv.2312.04985}, while alternative context models use recurrence, compression, sparse attention, or linear attention \citep{https://doi.org/10.48550/arxiv.1901.02860,https://doi.org/10.48550/arxiv.1911.05507,https://doi.org/10.48550/arxiv.2004.05150,https://doi.org/10.48550/arxiv.2007.14062,https://doi.org/10.48550/arxiv.2001.04451,https://doi.org/10.48550/arxiv.2009.14794,https://doi.org/10.48550/arxiv.2006.04768,https://doi.org/10.48550/arxiv.2006.16236}. These methods reduce the cost of shared inputs, but exploring alternative actions can still require repeated output generation. JevSpawn targets this remaining cost by reusing compositional action fields across branches and selecting field values through finite probabilities.

\subsection{Constrained Decoding}

Replacing repeated action generation requires a representation that preserves executable outputs. Constrained decoding provides a basis for such representations through fixed or input dependent grammars \citep{Beurer_Kellner_2023,geng-etal-2023-grammar}. Syntactic and semantic constraints improve program validity \citep{https://doi.org/10.48550/arxiv.2109.05093,https://doi.org/10.48550/arxiv.2201.11227}, although valid structure alone does not ensure accurate predictions \citep{https://doi.org/10.48550/arxiv.2501.10868,https://doi.org/10.48550/arxiv.2408.02442}. Formatting, option order, and context position can still affect the selected output \citep{https://doi.org/10.48550/arxiv.2310.11324,https://doi.org/10.48550/arxiv.2307.03172,https://doi.org/10.48550/arxiv.2102.09690,https://doi.org/10.48550/arxiv.2104.08786,https://doi.org/10.48550/arxiv.2309.03882}. A useful structured policy must therefore preserve both output validity and meaningful choices within the permitted space.

Meeting these requirements during decoding raises two further issues, the cost of enforcing constraints and the effect of constraints on output probabilities. XGrammar largely hides this overhead through cached token checks and overlap with GPU execution \citep{MLSYS2025_5c20ca4b}, while SGLang skips deterministic grammar paths and reuses context \citep{arxiv231207104}. Constraint overhead is reduced, while distributional fidelity requires separate treatment. DOMINO preserves valid continuations across tokenization boundaries \citep{pmlr-v235-beurer-kellner24a}, and grammar aligned decoding corrects distributional distortion from local token masking \citep{NEURIPS2024_2bdc2267}. Even with efficient enforcement and improved fidelity, choices not fixed by the grammar are still generated sequentially.

This remaining sequential cost motivates methods that produce or verify several tokens at once. Speculative decoding accelerates generation through parallel verification \citep{https://doi.org/10.48550/arxiv.2211.17192,https://doi.org/10.48550/arxiv.2302.01318,https://doi.org/10.48550/arxiv.2503.01840,https://doi.org/10.48550/arxiv.2402.02057,https://doi.org/10.48550/arxiv.2305.09781}, using drafts from auxiliary models, prediction heads, retrieval, or earlier layers \citep{https://doi.org/10.48550/arxiv.2401.10774,https://doi.org/10.48550/arxiv.2401.15077,https://doi.org/10.48550/arxiv.2406.16858,https://doi.org/10.48550/arxiv.2311.08252,https://doi.org/10.48550/arxiv.2309.08168,https://doi.org/10.48550/arxiv.2404.16710,https://doi.org/10.48550/arxiv.2402.11131,https://doi.org/10.48550/arxiv.2403.09919}. Nonautoregressive and insertion based models change the output factorization \citep{https://doi.org/10.48550/arxiv.1711.02281,https://doi.org/10.48550/arxiv.1904.09324,https://doi.org/10.48550/arxiv.1905.11006,https://doi.org/10.48550/arxiv.1902.03249,https://doi.org/10.48550/arxiv.1808.08583}, while diffusion models refine sequences iteratively \citep{https://doi.org/10.48550/arxiv.2205.14217,https://doi.org/10.48550/arxiv.2310.16834,https://doi.org/10.48550/arxiv.2406.07524,https://doi.org/10.48550/arxiv.2502.09992}. These methods accelerate the production of output sequences. When the possible values are already known, the task can instead be expressed as finite prediction over those alternatives.

TypeSafe Jev provides this finite interface by evaluating supplied questions in parallel and returning typed values with probabilities \citep{almeida2026jev}. Response strings need not be generated, but the application must define the fields being predicted. JevSpawn addresses this dependence on predefined fields by making the action space part of the policy. Compositional fields are inferred from task context and revised through feedback, coupling finite prediction with parallel spawning and recovery across interaction rounds.

%% file: method.tex
\section{Method}
\label{sec:method}

JevSpawn extends Jev-style prediction to agentic tasks by allowing the model to define and revise finite action fields. Spawning explores combinations of field values in parallel, with actions selected by joint probability. Execution feedback guides branch selection and field revision. Figure~\ref{fig:method} presents the method.

\begin{figure*}[t]
\centering
\includegraphics[width=\linewidth,trim={33.75bp 20.21bp 33.75bp 36.75bp},clip]{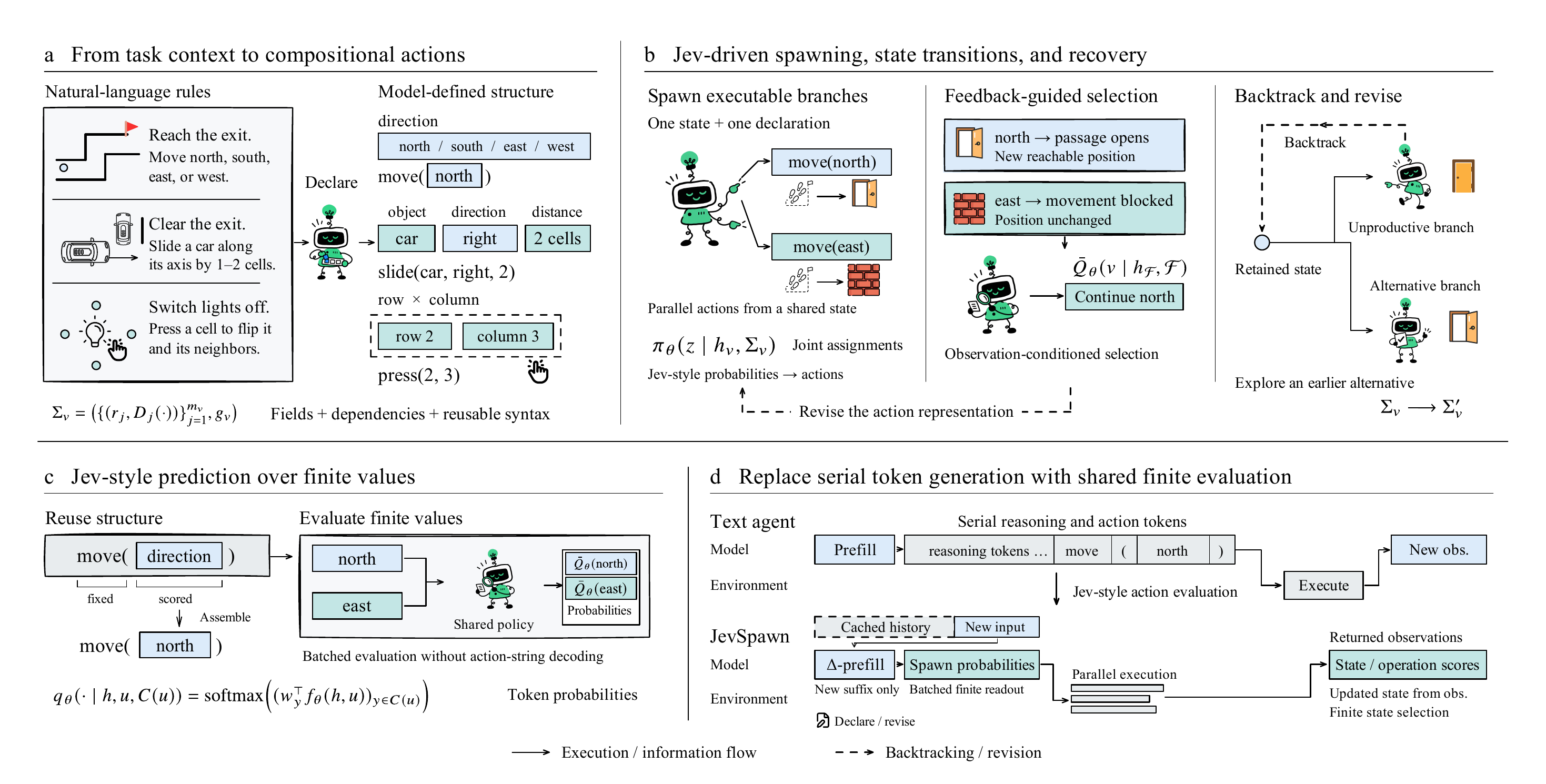}
\caption{JevSpawn architecture. (a) Compositional actions derived from natural language. (b) Probabilistic spawning, feedback-driven transitions, and recovery. (c) Finite action probabilities. (d) Shared computation in the interaction loop.}
\label{fig:method}
\end{figure*}

\subsection{Compositional Jev Policies}

The action representation is inferred as part of the policy. Let $v$ index a branch, $x$ contain the task context and interaction rules, $H_v$ denote the action, observation, and declaration feedback history of $v$, and $M$ collect observations shared across branches. The model receives $h_v=\operatorname{prompt}(x,M,H_v)$ together with the current declaration and finite alternatives. A pretrained model with fixed parameters $\theta$ constructs
\begin{equation}
 \Sigma_v=\bigl(\{(r_j,D_j(\cdot))\}_{j=1}^{m_v},g_v\bigr),
 \label{eq:declaration}
\end{equation}
Here $\Sigma_v$ is the action declaration for branch $v$, $m_v$ is the number of fields, and $j$ indexes a field. Each $r_j$ specifies a field name and meaning, while $D_j(\cdot)$ gives the finite values permitted under preceding assignments. For a complete assignment $z=(z_1,\ldots,z_{m_v})$, $g_v(z)$ renders the field values as an executable action. Shared syntax follows the task rules, while missing fields are generated from context. Appendix~\ref{app:representation} gives a worked construction.

Fields are evaluated in conditional blocks $b_1,\ldots,b_L$, where $b_\ell$ indexes the fields evaluated jointly at composition step $\ell$ and $L$ is the maximum composition depth. The tuple $z_{b_\ell}$ contains the values assigned to that block, and $z_{<b_\ell}$ contains all preceding assignments. Completed assignments receive only an empty extension of probability one. The compositional policy assigns probability $\pi_\theta(z\mid h_v,\Sigma_v)$ to a complete assignment,
\begin{equation}
 \pi_\theta(z\mid h_v,\Sigma_v)=
 \prod_{\ell=1}^{L}\bar Q_\theta\bigl(z_{b_\ell}\mid h_v^{z_{<b_\ell}},\mathcal D_{b_\ell}(z_{<b_\ell})\bigr),
 \label{eq:proposal-policy}
\end{equation}
The context $h_v^{z_{<b_\ell}}$ augments $h_v$ with the preceding assignments. The joint domain $\mathcal D_{b_\ell}(z_{<b_\ell})$ contains the block assignments allowed by the field domains $D_j$. The finite distribution $\bar Q_\theta$ gives the probability of a block assignment within that domain and is normalized over a value tree, with equal probability shares for duplicate serialized values. Equation~\ref{eq:proposal-policy} defines a policy over the declared action space without prescribing a trajectory. The distribution and absorbing empty extension are derived in Appendix~\ref{app:value-tree}.

\subsection{Adaptive Branch Exploration}

Spawning expands a state into executable alternatives under the compositional policy. A beam of width $K$ retains up to $K$ assignments. Each retained assignment $z_w$ defines a child branch $w$ and an action $a_w=g_v(z_w)$. Let $s_v$ be the parent environment state. The interaction operator $\mathcal E$ executes $a_w$ in an independent copy of $s_v$, producing the child state $s_w$ and observation $o_w$,
\begin{equation}
 (s_w,o_w)=\mathcal E(s_v,a_w),\qquad
 H_w=H_v\mathbin{\Vert}(a_w,o_w).
 \label{eq:environment-step}
\end{equation}
Here $\Vert$ appends the action--observation pair to the parent history, giving the child history $H_w$. The continuation policy scores retained branches using returned observations. Action preferences can therefore be revised after execution, and earlier branches can be resumed without regenerating the preceding trajectory.

The policy can also revise $\Sigma_v$ from accumulated feedback. Recovery explores retained states, while revision changes the available action space. Answer submission or budget exhaustion terminates interaction. Algorithm~\ref{alg:jevspawn} specifies branch retention, operation selection, and expansion.

\subsection{Parallel Policy Evaluation}

Declared values are known token sequences, whose causal hidden states can be evaluated in parallel. JevSpawn projects onto vocabulary entries that distinguish continuations and combines the resulting probabilities into $\bar Q_\theta$. The same finite evaluation supports action proposals, branch selection, and operation choice. Appendix~\ref{app:derivations} derives the distribution and computation.

Shared action structure avoids regenerating complete descriptions, while prefix reuse avoids recomputing common context. For $B$ evaluations with prefix length $n_p$ and suffix lengths $n_1,\ldots,n_B$, processed input tokens decrease from $Bn_p+\sum_{b=1}^{B}n_b$ to $n_p+\sum_{b=1}^{B}n_b$, before padding and value prefix evaluation. Suffix attention to the shared prefix is retained. Appendix~\ref{app:attention} establishes the conditions for equivalent attention and recurrent computation.

Text generation supplies missing declarations, revisions, and final answers when required. Repeated exploration uses the finite policy, with the pretrained parameters unchanged.

%% file: experiments.tex
\section{Experiments}
\label{sec:results}

\input{tables/main_scores}
\input{tables/main_time}

\begin{figure*}[!t]
\centering
\includegraphics[scale=0.615385]{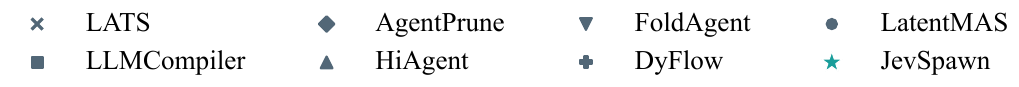}
\par\medskip
\begin{subfigure}[t]{0.265015\linewidth}
\centering
\includegraphics[width=\linewidth]{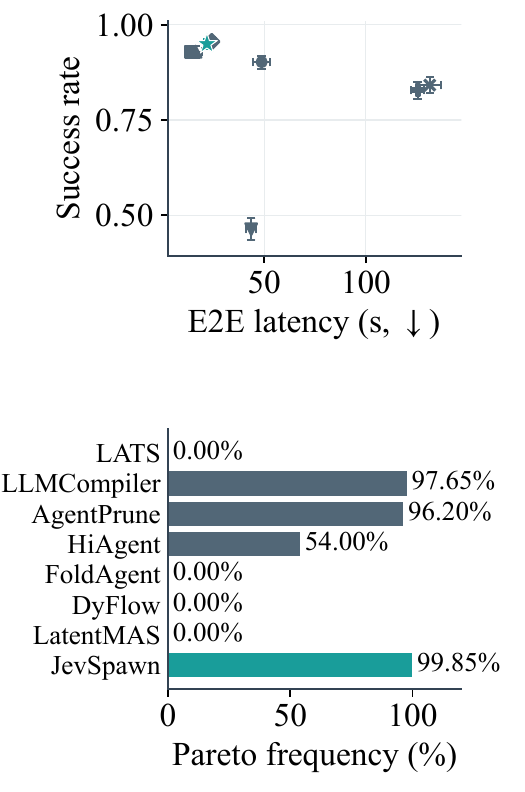}
\caption{PPNL}
\end{subfigure}
\hfill
\begin{subfigure}[t]{0.231662\linewidth}
\centering
\includegraphics[width=\linewidth]{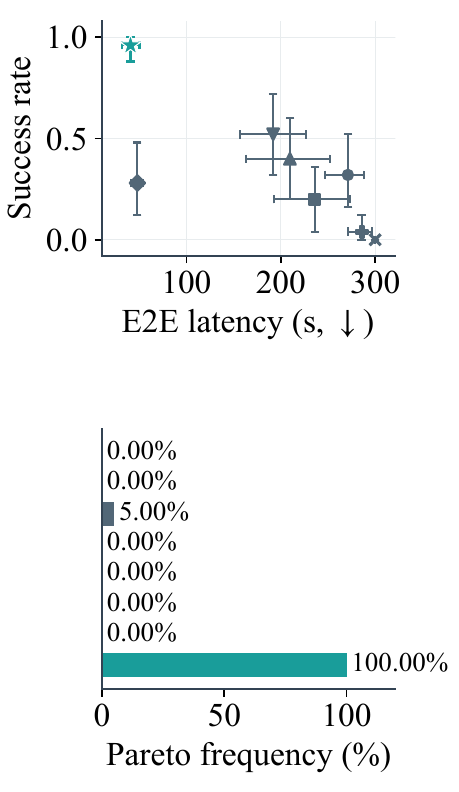}
\caption{Maze}
\end{subfigure}
\hfill
\begin{subfigure}[t]{0.231662\linewidth}
\centering
\includegraphics[width=\linewidth]{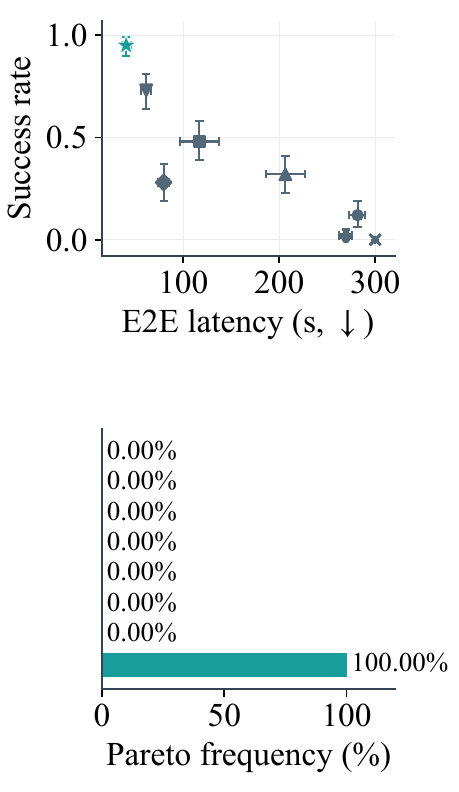}
\caption{Grid}
\end{subfigure}
\hfill
\begin{subfigure}[t]{0.231662\linewidth}
\centering
\includegraphics[width=\linewidth]{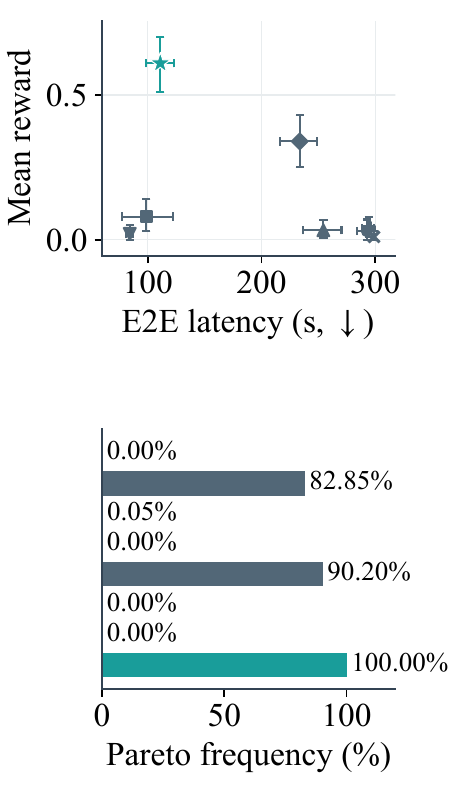}
\caption{LightsOut}
\end{subfigure}
\par\bigskip
\begin{subfigure}[t]{0.265015\linewidth}
\centering
\includegraphics[width=\linewidth]{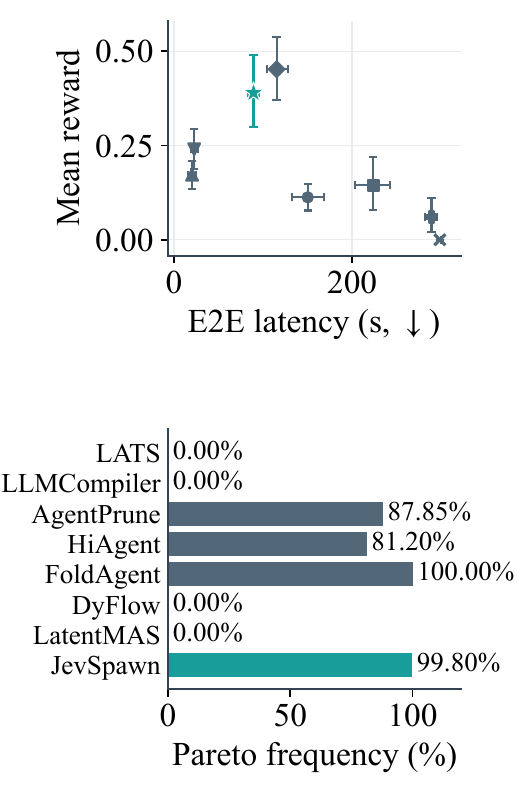}
\caption{RushHour}
\end{subfigure}
\hfill
\begin{subfigure}[t]{0.231662\linewidth}
\centering
\includegraphics[width=\linewidth]{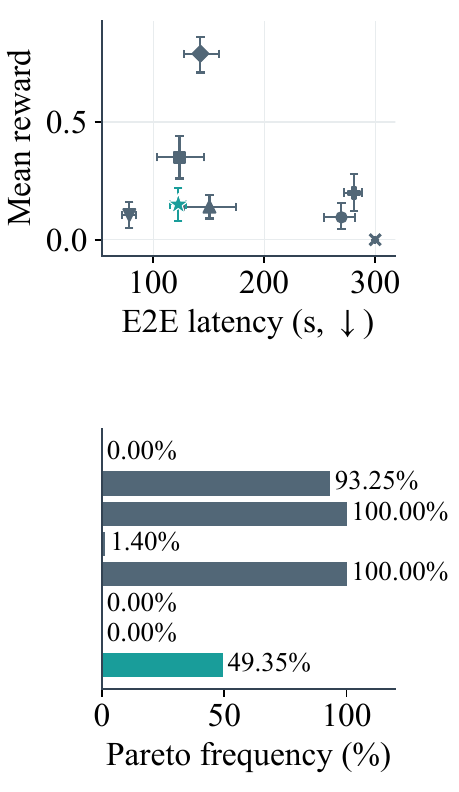}
\caption{Sokoban}
\end{subfigure}
\hfill
\begin{subfigure}[t]{0.231662\linewidth}
\centering
\includegraphics[width=\linewidth]{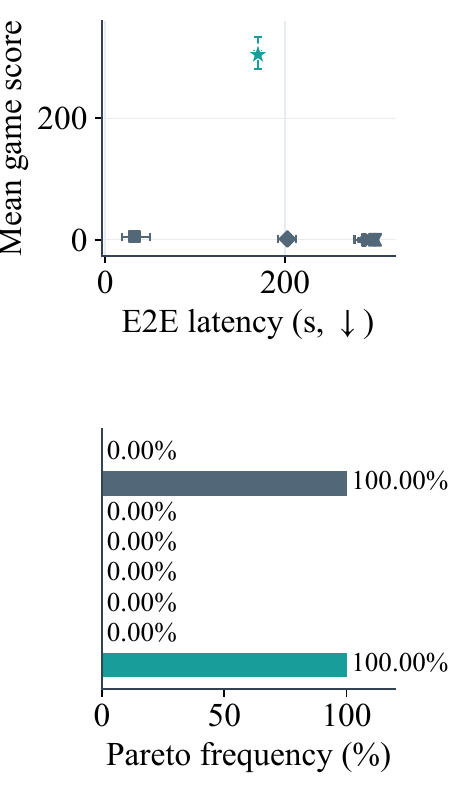}
\caption{2048}
\end{subfigure}
\hfill
\begin{subfigure}[t]{0.231662\linewidth}
\centering
\includegraphics[width=\linewidth]{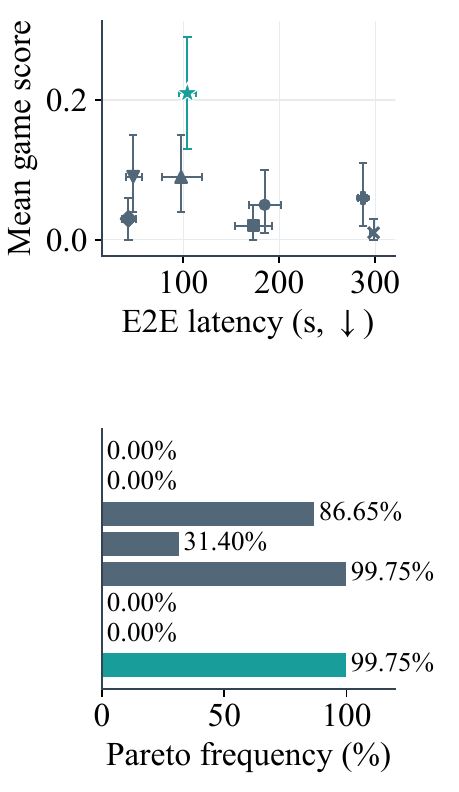}
\caption{Nullify}
\end{subfigure}
\caption{Stability of the quality--latency tradeoff under paired bootstrap resampling. Symbols mark paired bootstrap medians, and error bars show 95\% marginal intervals from 2,000 resamples. Horizontal bars show Pareto frontier frequency under resampling of paired instances.}
\label{fig:tradeoff}
\end{figure*}

\input{tables/main_throughput}

\begin{figure*}[!t]
\centering
\includegraphics[scale=0.500000]{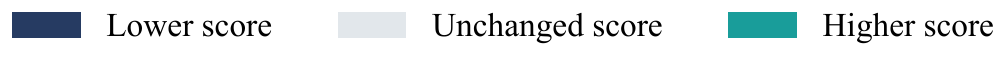}\par\smallskip
\begin{subfigure}[t]{0.288449\linewidth}
\centering
\includegraphics[width=\linewidth]{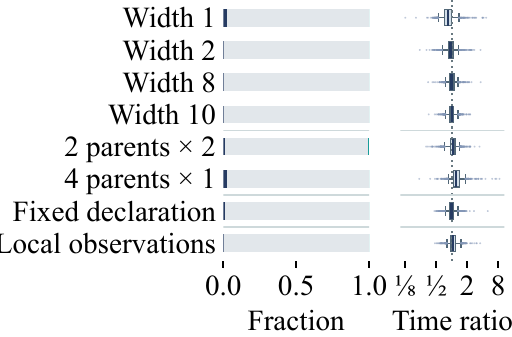}
\caption{PPNL}
\end{subfigure}
\hspace{0.025\linewidth}
\begin{subfigure}[t]{0.176410\linewidth}
\centering
\includegraphics[width=\linewidth]{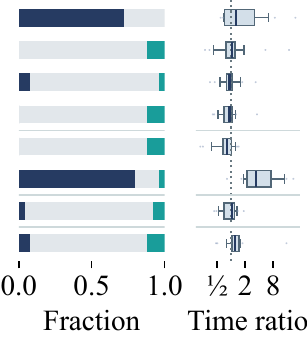}
\caption{Maze}
\end{subfigure}
\hspace{0.025\linewidth}
\begin{subfigure}[t]{0.176410\linewidth}
\centering
\includegraphics[width=\linewidth]{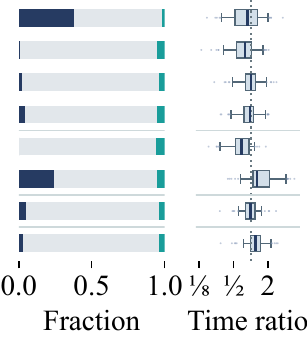}
\caption{Grid}
\end{subfigure}
\hspace{0.025\linewidth}
\begin{subfigure}[t]{0.176410\linewidth}
\centering
\includegraphics[width=\linewidth]{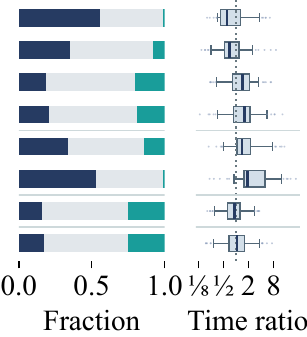}
\caption{LightsOut}
\end{subfigure}
\par\smallskip
\begin{subfigure}[t]{0.288449\linewidth}
\centering
\includegraphics[width=\linewidth]{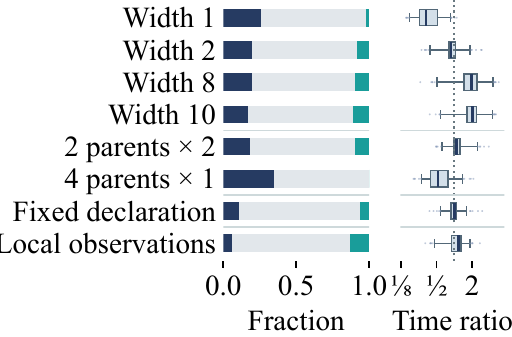}
\caption{RushHour}
\end{subfigure}
\hspace{0.025\linewidth}
\begin{subfigure}[t]{0.176410\linewidth}
\centering
\includegraphics[width=\linewidth]{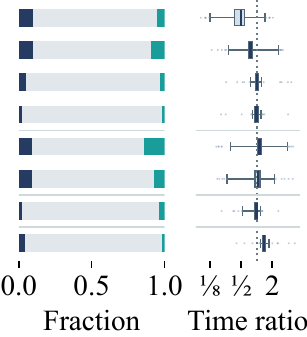}
\caption{Sokoban}
\end{subfigure}
\hspace{0.025\linewidth}
\begin{subfigure}[t]{0.176410\linewidth}
\centering
\includegraphics[width=\linewidth]{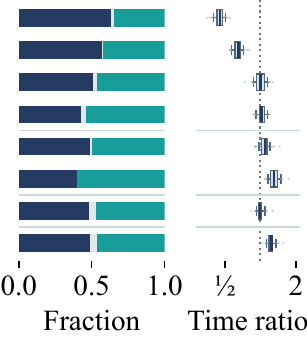}
\caption{2048}
\end{subfigure}
\hspace{0.025\linewidth}
\begin{subfigure}[t]{0.176410\linewidth}
\centering
\includegraphics[width=\linewidth]{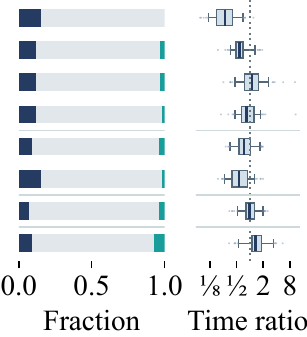}
\caption{Nullify}
\end{subfigure}
\caption{Instance-level effects of component ablations. Stacked bars show fractions with lower, equal, and higher scores than complete JevSpawn. Boxes summarize paired latency ratios with medians, interquartile ranges, and fifth to ninety-fifth percentile whiskers. Ratios below one indicate shorter execution.}
\label{fig:ablations}
\end{figure*}

\subsection{Experimental Setup}

JevSpawn and the seven agent baselines use Qwen3.8-27B as the pretrained policy and run on four H100 GPUs with tensor parallelism and bfloat16 precision. The batch size is 8, the context limit is 16,384 tokens, and each generation is limited to 2,048 tokens. Tasks receive up to 36 exploration rounds and a final submission step, within a 300 second limit. Temperature is zero, thinking mode is disabled, and the seed is 42. JevSpawn selects one parent and spawns up to four distinct actions per expansion. A further variant replaces finite scoring with TypeSafe Jev and retains Qwen for declarations and text generation. End-to-end (E2E) latency measures elapsed time from task submission to termination. Appendix~\ref{sec:setup} provides timing definitions and execution details.

Eight tasks from five benchmarks are used to compare JevSpawn with LATS \citep{zhou2024lats}, LLMCompiler \citep{kim2024llmcompiler}, AgentPrune \citep{zhang2025agentprune}, HiAgent \citep{hu-etal-2025-hiagent}, FoldAgent \citep{sun2026scaling}, DyFlow \citep{wang2025dyflow}, and LatentMAS \citep{zou2026latent}. PPNL evaluates path planning from natural language specifications \citep{ppnl}. Maze from LMRL Gym and Grid from LLF Bench evaluate navigation with action feedback \citep{lmrlgym,llfbench}. LightsOut, RushHour, and Sokoban from TextArena assess sequential puzzle solving \citep{textarena}, while 2048 and Nullify from KORGym assess numerical game play \citep{korgym}. Evaluation uses the public PPNL test collections, fixed Maze initial states, and environment seeds 42 through 141 for the remaining tasks.

\paragraph{Ablation settings.}

Component ablations vary expansion width over $\{1,2,4,8,10\}$, distribute four actions across parent--child allocations of $1\times4$, $2\times2$, or $4\times1$, retain the first valid declaration, or restrict observations to the current branch. Each variant is compared with a separate run of the full method on the same 1,761 instances.

Long horizon experiments form a separate series with a 131,072 token context and round limits of $\{36,54,72,108\}$, each evaluated with and without the 300 second deadline. Batch scaling retains the 16,384 token context and varies physical batch size over $\{8,16,32,64,128\}$ until memory is exhausted. Other settings are fixed within each comparison. Appendix~\ref{app:ablations} reports the complete results.

\Needspace{6\baselineskip}
\subsection{Main Results}

\paragraph{Task performance.}

Against the seven agent baselines, JevSpawn achieves the highest scores on five of eight tasks in Table~\ref{tab:main-scores}. Success rates of 0.96 and 0.95 are achieved on Maze and Grid, compared with 0.52 and 0.73 for FoldAgent. LightsOut reward reaches 0.61 against AgentPrune's 0.34. Nullify reward reaches 0.21 against 0.09 for HiAgent and FoldAgent, and the 2048 score reaches 305.12 against LLMCompiler's 5.00. AgentPrune remains strongest on PPNL, RushHour, and Sokoban.

Within the same JevSpawn architecture, TypeSafe Jev outperforms Qwen scoring on PPNL, LightsOut (0.66 versus 0.61), and Sokoban, matches Maze, and scores lower on Grid, RushHour, 2048, and Nullify.

\paragraph{Execution efficiency.}

JevSpawn achieves joint quality and latency gains on Maze and Grid. Alongside the success rate improvements, E2E latency is reduced from the fastest baseline values of 47.91 and 61.44 seconds to 40.91 and 40.58 seconds. Figure~\ref{fig:tradeoff} places JevSpawn on the Pareto frontier in nearly all bootstrap resamples for seven tasks. Sokoban frontier membership is less stable, reflecting a 1.50 second latency margin over LLMCompiler and a lower score.

On LightsOut and 2048, higher scores accompany longer execution. JevSpawn takes 111.05 seconds on LightsOut against 84.17 for FoldAgent, and 169.94 seconds on 2048 against 33.51 for LLMCompiler. The TypeSafe Jev variant takes 1.4 to 2.1 times as long as Qwen scoring across all eight tasks, including API communication. Appendix~\ref{app:performance} compares per-instance scores and latencies with AgentPrune.

Text decoding throughput reaches 544 and 580 tokens per second on PPNL and LightsOut, respectively, the highest values in Table~\ref{tab:throughput}. Maze and Grid use finite evaluation throughout. On Nullify, text generation and finite evaluation overlap in every decoding interval. Separate finite evaluation measurements are given in Appendix~\ref{sec:setup}.

\Needspace{8\baselineskip}
\subsection{Ablation Studies}

Figure~\ref{fig:ablations} compares expansion width, parent allocation, declaration revision, and branch information sharing against the full JevSpawn method. Appendix~\ref{app:ablations} reports scores and E2E latency for every condition.

Parallel spawning contributes strongly to navigation and LightsOut performance. Reducing width from four to one lowers Maze success from 0.88 to 0.16, Grid success from 0.94 to 0.58, and LightsOut reward from 0.61 to 0.06. Increasing width to ten solves every Maze instance, but halves Nullify reward and approximately doubles RushHour latency.

Parent allocation matters at a fixed budget of four actions. Success rates of 1.00 are achieved on both Maze and Grid with two parents and two actions each, compared with 0.88 and 0.94 for one parent. Four parents with one action each reduce these rates to 0.12 and 0.75, while increasing the 2048 score.

Retaining the first valid declaration improves scores on Maze, LightsOut, Sokoban, and 2048 and reduces latency on six tasks. Adaptive declarations score higher on the other four tasks. Restricting observations to the current branch improves Maze, Grid, LightsOut, and RushHour scores, whereas shared observations yield higher scores on the remaining tasks and lower latency on all eight.

Without a deadline, increasing the round limit from 36 to 108 raises the 2048 score from 315.16 to 1102.24 and LightsOut reward from 0.61 to 0.876. Under the 300 second deadline, 85 percent of 2048 tasks time out at 108 rounds. Figure~\ref{fig:turns} and Table~\ref{tab:turns} compare both time settings.

%% file: tables/main_scores.tex
\begin{table*}[t]
\centering
\caption{Task performance ($\uparrow$) across eight benchmark tasks. PPNL, Maze, and Grid use success rates on $[0,1]$, and the remaining tasks use benchmark rewards or scores. Bold and underlined values indicate the best and second best results, including ties. The TypeSafe Jev variant replaces finite scoring within JevSpawn.}
\label{tab:main-scores}
\small\begin{adjustbox}{max width=\linewidth}
\begin{tabular}{lrrrrrrrr}
\toprule
Method &  PPNL &  Maze &  Grid &  LightsOut &  RushHour &  Sokoban &  2048 &  Nullify \\
\midrule
LATS & 0.842 & 0.000 & 0.000 & 0.010 & 0.000 & 0.000 & 0.000 & 0.010 \\
LLMCompiler & 0.929 & 0.200 & 0.480 & 0.080 & 0.147 & \underline{0.350} & 5.000 & 0.020 \\
AgentPrune & \textbf{0.956} & 0.280 & 0.280 & 0.340 & \textbf{0.453} & \textbf{0.790} & 1.240 & 0.030 \\
HiAgent & 0.930 & 0.400 & 0.320 & 0.033 & 0.172 & 0.140 & 0.000 & 0.090 \\
FoldAgent & 0.464 & \underline{0.520} & 0.730 & 0.020 & 0.240 & 0.105 & 0.000 & 0.090 \\
DyFlow & 0.828 & 0.040 & 0.020 & 0.040 & 0.060 & 0.200 & 0.120 & 0.060 \\
LatentMAS & 0.902 & 0.320 & 0.120 & 0.030 & 0.113 & 0.095 & 0.000 & 0.050 \\
TypeSafe Jev & \underline{0.955} & \textbf{0.960} & \underline{0.930} & \textbf{0.660} & 0.290 & 0.250 & \underline{296.000} & \underline{0.170} \\
JevSpawn & 0.950 & \textbf{0.960} & \textbf{0.950} & \underline{0.610} & \underline{0.390} & 0.150 & \textbf{305.120} & \textbf{0.210} \\
\bottomrule
\end{tabular}\end{adjustbox}
\end{table*}

%% file: tables/main_time.tex
\begin{table*}[t]
\centering
\caption{E2E latency (s, $\downarrow$), including failed tasks and timeouts. Bold and underlined values indicate the lowest and second lowest latencies, including ties. The TypeSafe Jev variant replaces finite scoring within JevSpawn.}
\label{tab:main-time}
\small\begin{adjustbox}{max width=\linewidth}
\begin{tabular}{lrrrrrrrr}
\toprule
Method &  PPNL &  Maze &  Grid &  LightsOut &  RushHour &  Sokoban &  2048 &  Nullify \\
\midrule
LATS & 131.34 & 300.00 & 300.00 & 299.01 & 298.57 & 300.00 & 300.00 & 298.18 \\
LLMCompiler & \textbf{13.61} & 235.32 & 117.16 & \underline{99.26} & 223.42 & 123.99 & \textbf{33.51} & 172.85 \\
AgentPrune & 24.23 & \underline{47.91} & 79.57 & 233.19 & 115.48 & 142.53 & 202.57 & \textbf{42.90} \\
HiAgent & \underline{16.25} & 209.06 & 206.05 & 254.54 & \textbf{20.32} & 150.76 & 300.00 & 98.18 \\
FoldAgent & 43.55 & 191.69 & \underline{61.44} & \textbf{84.17} & \underline{22.74} & \textbf{78.32} & 300.07 & \underline{48.10} \\
DyFlow & 125.47 & 285.43 & 269.57 & 294.18 & 289.03 & 280.57 & 287.35 & 287.37 \\
LatentMAS & 48.70 & 269.86 & 281.61 & 292.64 & 150.25 & 269.05 & 289.03 & 185.05 \\
TypeSafe Jev & 43.96 & 63.98 & 78.39 & 153.95 & 187.45 & 214.76 & 249.05 & 189.41 \\
JevSpawn & 21.88 & \textbf{40.91} & \textbf{40.58} & 111.05 & 89.20 & \underline{122.49} & \underline{169.94} & 104.21 \\
\bottomrule
\end{tabular}\end{adjustbox}
\end{table*}

%% file: tables/main_throughput.tex
\begin{table*}[t]
\centering
\caption{Text decoding throughput (tok/s, $\uparrow$), excluding prefill. TypeSafe Jev uses Qwen3.8-27B for text generation. / denotes not applicable. Bold and underlined values indicate the best and second best results, including ties.}
\label{tab:throughput}
\small\begin{adjustbox}{max width=\linewidth}
\begin{tabular}{lrrrrrrrr}
\toprule
Method &  PPNL &  Maze &  Grid &  LightsOut &  RushHour &  Sokoban &  2048 &  Nullify \\
\midrule
LATS & 220.54 & 212.78 & 168.61 & 226.54 & 252.04 & 271.42 & 281.84 & \underline{242.44} \\
LLMCompiler & 220.21 & 180.78 & 249.03 & 157.71 & 162.23 & 159.28 & 285.74 & 208.72 \\
AgentPrune & 236.92 & \textbf{368.71} & 160.73 & 135.88 & 157.97 & 139.71 & 139.74 & 150.21 \\
HiAgent & 278.55 & 347.87 & \textbf{471.89} & \underline{461.86} & 266.87 & \underline{369.30} & \textbf{454.35} & \textbf{399.94} \\
FoldAgent & 211.62 & 111.22 & 262.12 & 409.24 & \textbf{530.84} & \textbf{527.77} & 281.85 & 125.03 \\
DyFlow & 245.22 & 172.71 & 179.26 & 225.29 & 236.80 & 226.94 & 244.05 & 207.33 \\
LatentMAS & \underline{372.13} & \underline{351.46} & \underline{274.55} & 390.79 & 263.62 & 231.52 & 237.11 & 106.43 \\
TypeSafe Jev & 92.69 & / & / & 110.68 & \underline{280.05} & 131.76 & 153.11 & 128.11 \\
JevSpawn & \textbf{544.35} & / & / & \textbf{580.05} & 80.34 & 225.70 & \underline{420.31} & / \\
\bottomrule
\end{tabular}\end{adjustbox}
\end{table*}

%% file: derivations.tex
\section{Analysis of Structured Execution}
\label{app:derivations}

The following analysis derives the finite action distribution, bounded exploration procedure, and conditions for equivalent shared computation.

\begin{algorithm}[!ht]
\caption{JevSpawn}
\label{alg:jevspawn}
\begin{algorithmic}[1]
\REQUIRE Task $x$, model $\theta$, environment $\mathcal E$, budgets $T,R,P,K$
\STATE Initialize $(\mathcal F,M)\gets(\{v_0\},\varnothing)$
\STATE $(s_{v_0},H_{v_0},\Sigma_{v_0})\gets(s_0,\varnothing,\varnothing)$
\STATE Set $\mathcal U_T(v)=\{\mathrm{submit}\}$ for every branch $v$
\FOR{$t=0,\ldots,T$}
  \STATE Render $h_{\mathcal F}\gets\operatorname{prompt}(x,M,\{H_b\}_{b\in\mathcal F})$
  \STATE Score $\rho_t\gets\bar Q_\theta(\cdot\mid h_{\mathcal F},\mathcal F)$
  \STATE Retain $\mathcal F\gets\operatorname{Top}_{R}(\mathcal F,\rho_t)$ and select $v\gets\arg\max_{b\in\mathcal F}\rho_t(b)$
  \STATE Choose $u\gets\arg\max_{u'\in\mathcal U_t(v)}\bar Q_\theta(u'\mid h_v,\mathcal U_t(v))$
  \IF{$u=\mathrm{submit}$}
    \STATE \textbf{return} $\operatorname{Submit}_\theta(h_v,s_v)$
  \ELSIF{$u=\mathrm{revise}$}
    \STATE $(\Sigma_v,e_v)\gets\operatorname{Revise}_\theta(h_v,\Sigma_v)$
    \STATE $H_v\gets H_v\mathbin{\Vert}e_v$
  \ENDIF
  \STATE Remove $\mathcal F\gets\mathcal F\setminus\{v\mid u=\mathrm{discard}\}$
  \IF{$\mathcal F=\varnothing$}
    \STATE \textbf{return} Failure
  \ENDIF
  \STATE Select parents $\mathcal B\gets\operatorname{Top}_{P}(\{b\in\mathcal F\mid\chi(b)\chi(v)=1,\ u\in\{\mathrm{expand},\mathrm{revise}\}\},\rho_t)$
  \STATE Compose $\mathcal A_b\gets\{g_b(z)\mid z\in\operatorname{Beam}_{K}(\pi_\theta(\cdot\mid h_b,\Sigma_b))\},\ b\in\mathcal B$
  \STATE Spawn $\mathcal C\gets\{w=(b,a)\mid b\in\mathcal B,\ a\in\mathcal A_b\}$
  \FORALL{$w=(b,a)\in\mathcal C$ in parallel}
    \STATE Execute $(s_w,o_w)\gets\mathcal E(s_b,a)$
    \STATE $H_w\gets H_b\mathbin{\Vert}(a,o_w),\quad\Sigma_w\gets\Sigma_b$
  \ENDFOR
  \STATE Share $M\gets M\mathbin{\Vert}\{(w,a,o_w)\}_{w=(b,a)\in\mathcal C}$
  \STATE Update $\mathcal F\gets(\mathcal F\setminus\mathcal B)\cup\mathcal C$
\ENDFOR
\end{algorithmic}
\end{algorithm}

In Algorithm~\ref{alg:jevspawn}, $s_0$ is the initial environment state and $v_0$ is the root branch. The budgets $T$, $R$, $P$, and $K$ bound interaction rounds, retained branches, expanded parents, and actions per parent, respectively. The context $h_v=\operatorname{prompt}(x,M,H_v)$ combines task context, branch history, and shared observations. The operation set $\mathcal U_t(v)$ requires revision when a declaration is missing or rejected, permits submission or removal at a terminal branch, and otherwise permits expansion, revision, or submission. Round $T$ permits submission alone. The indicator $\chi(v)$ identifies nonterminal branches with executable declarations. Rejected revisions preserve any previous executable declaration and return feedback. Rejected submission, an empty frontier, or resource exhaustion yields failure.

Each round selects one operation for the highest ranked branch. Revision updates that branch alone. Expansion uses up to $P$ ranked executable parents when the selected branch remains executable, with existing declarations retained for additional parents. Removal produces no expansion. Unselected branches remain available until pruning.

\subsection{Constructing a compositional action space}
\label{app:representation}

A declaration defines finite choices and a shared action renderer $g_v$. With conditional field domains, the assignment and action spaces are
\begin{equation}
 \mathcal Z(\Sigma_v)=\left\{z\in\prod_{j=1}^{m_v}\mathcal X_j\,\middle|\,z_j\in D_j(z_{<j})\right\},
 \qquad \mathcal A(\Sigma_v)=\{g_v(z)\mid z\in\mathcal Z(\Sigma_v)\},
\end{equation}
where $\mathcal X_j$ is the value universe of field $j$. Independent domains yield a Cartesian product. Conditional domains exclude assignments that violate declared dependencies.

\begin{examplebox}[frametitle={Compositional movement actions},nobreak=true]
The rules permit one step north or up to two steps east. Actions use \texttt{direction} and \texttt{steps} fields,
\begin{align}
 r_1&=\text{movement direction},&D_1&=\{\mathrm{north},\mathrm{east}\},\\
 r_2&=\text{number of steps},&D_2(z_1)&=\{1,\ldots,1+\mathbb I[z_1=\mathrm{east}]\},\\
 g_v(z_1,z_2)&=\texttt{\{\char34 direction\char34: }z_1\texttt{, \char34 steps\char34: }z_2\texttt{\}}.
\end{align}
The renderer inserts quoted directions and integer step counts into shared syntax. Joint evaluation uses the block
\begin{equation}
 \mathcal D_{b_1}(\varnothing)=\{(\mathrm{north},1),(\mathrm{east},1),(\mathrm{east},2)\}.
\end{equation}
The induced policy satisfies
\begin{equation}
 \pi_\theta(z\mid h_v,\Sigma_v)
 =\bar Q_\theta(z\mid h_v,\mathcal D_{b_1}(\varnothing)),
 \qquad \sum_{z\in\mathcal D_{b_1}(\varnothing)}\pi_\theta(z\mid h_v,\Sigma_v)=1.
\end{equation}
Direction and step count are scored jointly. The two highest ranked assignments can be executed in separate environment copies, with returned observations determining the next branch.

\end{examplebox}

Optional arguments and variable length objects introduce presence or length fields before dependent values. Open strings require model generated structure when no finite domain is supplied. Syntax and type checks return declaration feedback for revision. Serialization fixes the output form, while the domains remain model predictions.

\subsection{Probability on a finite value tree}
\label{app:value-tree}

Let $\mathcal O$ be a nonempty finite option list. Entry $i$ is serialized as $\tau(i)$ with a reserved termination token, yielding the prefix free set $\mathcal T=\{\tau(i)\mid i\in\mathcal O\}$ of distinct sequences. The context $h$ contains task evidence, scoring instructions, and option descriptions. Action options specify complete arguments, while integer branch identifiers refer to states and observations.

Let $\mathcal N$ contain every proper prefix of a sequence in $\mathcal T$, including $\epsilon$, and let $C(u)$ be the outgoing tokens at $u\in\mathcal N$. The pretrained hidden state $f_\theta(h,u)$ and output row $w_y$ define
\begin{equation}
 q_\theta(y\mid h,u,C(u))=
 \frac{\exp\bigl(w_y^\top f_\theta(h,u)\bigr)}
 {\sum_{y'\in C(u)}\exp\bigl(w_{y'}^\top f_\theta(h,u)\bigr)},
 \qquad y\in C(u).
 \label{eq:token-choice}
\end{equation}
A singleton continuation has probability one. For a leaf $\tau\in\mathcal T$,
\begin{equation}
 Q_\theta(\tau\mid h,\mathcal T)=
 \prod_{(u,y)\in\operatorname{path}(\tau)\,\mid\,|C(u)|>1}
 q_\theta(y\mid h,u,C(u)).
 \label{eq:value-score}
\end{equation}

\begin{proposition}
\label{prop:normalization}
For finite, nonempty, prefix free $\mathcal T$,
\begin{equation}
 Q_\theta(\tau\mid h,\mathcal T)\geq0,\qquad
 \sum_{\tau\in\mathcal T}Q_\theta(\tau\mid h,\mathcal T)=1.
\end{equation}
\end{proposition}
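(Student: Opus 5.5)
Nonnegativity is immediate. By Equation~\ref{eq:token-choice}, each factor $q_\theta(y\mid h,u,C(u))$ is a ratio of exponentials with a positive denominator, so it lies in $(0,1]$. Equation~\ref{eq:value-score} is a finite product of such factors, with the empty product equal to one, so $Q_\theta(\tau\mid h,\mathcal T)>0$. For normalization, I would organize $\mathcal T$ as a trie. Its internal nodes are the elements of $\mathcal N$ and its leaves are the sequences in $\mathcal T$. Because $\mathcal T$ is prefix free, no leaf is a proper prefix of another, so every $\tau\in\mathcal T$ is a leaf and never an internal node. Each leaf corresponds to exactly one root-to-leaf path $\operatorname{path}(\tau)$.

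The main step is a bottom-up induction on the subtrie rooted at $u\in\mathcal N$. Write $\mathcal T_u=\{\tau\in\mathcal T\mid u \text{ is a prefix of } \tau\}$, and let $Q_u(\tau)$ be the product of the factors along the portion of $\operatorname{path}(\tau)$ below $u$, again skipping nodes with a singleton continuation set. I claim that $\sum_{\tau\in\mathcal T_u}Q_u(\tau)=1$ for every $u\in\mathcal N$. The induction runs on the maximal remaining depth, which is finite because $\mathcal T$ is finite. Fix $u$ and a child token $y\in C(u)$. The child $uy$ is either a leaf, contributing $1$, or a node of $\mathcal N$, whose sum is $1$ by the induction hypothesis. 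It cannot be both, by prefix freeness.

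Then
\begin{equation*}
 \sum_{\tau\in\mathcal T_u}Q_u(\tau)=\sum_{y\in C(u)}c_u(y)\cdot 1,
\end{equation*}
where $c_u(y)=q_\theta(y\mid h,u,C(u))$ when $|C(u)|>1$ and $c_u(y)=1$ when $|C(u)|=1$. In both cases the right-hand side equals $1$: the softmax in Equation~\ref{eq:token-choice} sums to one over $C(u)$, and a singleton contributes a single term equal to one. Taking $u=\epsilon$, which lies in $\mathcal N$ whenever $\mathcal T$ contains a nonempty sequence, gives the proposition.

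I expect the main obstacle to be bookkeeping at the edges rather than the analysis. Three points need care. First, $C(u)$ is nonempty for every $u\in\mathcal N$, since each proper prefix extends to some element of $\mathcal T$, and this is what makes the softmax well defined. Second, the degenerate case where $\mathcal T$ is a single sequence needs attention. The reserved termination token ensures every $\tau(i)$ is nonempty, so $\epsilon\in\mathcal N$ and the path consists of singleton steps, giving $Q_\theta=1$. Third, the sets $\mathcal T_{uy}$ for distinct $y\in C(u)$ must partition $\mathcal T_u$. This follows because paths through a trie are unique, since the elements of $\mathcal T$ are distinct sequences.
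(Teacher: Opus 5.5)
Your proof is correct and is essentially the paper's argument. The paper propagates mass top-down via $\mu(\epsilon)=1$ and $\mu(uy)=\mu(u)\,q_\theta(y\mid h,u,C(u))$, then uses $\sum_{y\in C(u)}\mu(uy)=\mu(u)$ to collapse $\sum_{\tau\in\mathcal T}\mu(\tau)$ to $\mu(\epsilon)=1$; your bottom-up subtrie induction is that same identity unrolled, and it additionally spells out what the paper leaves implicit (prefix-freeness separating leaves from $\mathcal N$, the partition of $\mathcal T_u$ by next token, and singleton steps contributing factor one).
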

\begin{proof}
\begin{align*}
 \mu(\epsilon)&=1,\qquad
 \mu(uy)=\mu(u)q_\theta(y\mid h,u,C(u))\geq0,\\
 \sum_{y\in C(u)}\mu(uy)&=\mu(u),\qquad u\in\mathcal N,\\
 \sum_{\tau\in\mathcal T}Q_\theta(\tau\mid h,\mathcal T)
 &=\sum_{\tau\in\mathcal T}\mu(\tau)=\mu(\epsilon)=1.\qedhere
\end{align*}
\end{proof}

Define $c(\tau)=|\{i\in\mathcal O\mid\tau(i)=\tau\}|$. The option distribution follows from Proposition~\ref{prop:normalization},
\begin{equation}
 \bar Q_\theta(i\mid h,\mathcal O)=\frac{Q_\theta(\tau(i)\mid h,\mathcal T)}{c(\tau(i))},
 \qquad
 \sum_{i\in\mathcal O}\bar Q_\theta(i\mid h,\mathcal O)
 =\sum_{\tau\in\mathcal T}c(\tau)\frac{Q_\theta(\tau\mid h,\mathcal T)}{c(\tau)}=1.
 \label{eq:option-probability}
\end{equation}
The same distribution scores block assignments, branches, and operations. Figure~\ref{fig:method} abbreviates the conditional distribution as $\bar Q_\theta(i)$. Duplicate serialized values share mass equally, and duplicate executable actions are collapsed within each parent state.

The locally restricted distribution differs in general from conditioning the unrestricted model on $\mathcal T$,
\begin{equation}
 p_\theta(\tau\mid h,\tau\in\mathcal T)=
 \frac{p_\theta(\tau\mid h)}{\sum_{\tau'\in\mathcal T}p_\theta(\tau'\mid h)}.
 \label{eq:sequence-conditioning}
\end{equation}
Equation~\ref{eq:sequence-conditioning} includes forced token likelihoods and the probability of remaining in the tree. Local normalization omits these factors and defines a relative action preference, not a probability of execution success.

\subsection{Parallel evaluation and value ordering}

Let $\mathcal N_b=\{u\in\mathcal N\mid |C(u)|>1\}$. Scoring requires hidden states only at these branching prefixes. Known prefix tokens permit parallel evaluation with the same ancestor visibility, positions, and recurrent initial states as independent evaluation.

Forced tokens affect subsequent hidden states but require no output score. At prefix $u$, projection uses $d_{\mathrm{model}}|C(u)|$ multiplications instead of $d_{\mathrm{model}}|\mathcal V|$, where $d_{\mathrm{model}}$ is model width and $\mathcal V$ is the vocabulary. Prefix computation is retained.

Action options are sorted by serialized field assignments. Field grouping, branch indexing, and tie resolution remain representation dependent.

\subsection{Conditional composition and bounded exploration}

Each unfinished block has a nonempty conditional domain $\mathcal D_{b_\ell}(z_{<b_\ell})$. Completed assignments have the absorbing domain $\{\epsilon\}$, with $z\mathbin{\Vert}\epsilon=z$ and $\bar Q_\theta(\epsilon\mid h_v^z,\{\epsilon\})=1$. Bound assignments enter $h_v^{z_{<b_\ell}}$. The complete log probability is
\begin{equation}
 S_\theta(z\mid h_v,\Sigma_v)=\log\pi_\theta(z\mid h_v,\Sigma_v)
 =\sum_{\ell=1}^{L}\log\bar Q_\theta\bigl(z_{b_\ell}\mid h_v^{z_{<b_\ell}},\mathcal D_{b_\ell}(z_{<b_\ell})\bigr).
 \label{eq:action-score}
\end{equation}
The partial score is defined recursively,
\begin{align}
 S_0(\varnothing)&=0,\\
 S_\ell(z_{\leq b_\ell})&=S_{\ell-1}(z_{<b_\ell})+
 \log\bar Q_\theta\bigl(z_{b_\ell}\mid h_v^{z_{<b_\ell}},\mathcal D_{b_\ell}(z_{<b_\ell})\bigr).
\end{align}
Let $\mathcal J_\ell$ denote the beam after block $\ell$. With $\mathcal J_0=\{\varnothing\}$,
\begin{align}
 \operatorname{Extend}(\mathcal J,b)&=\{z\mathbin{\Vert}z_b\mid z\in\mathcal J,\ z_b\in\mathcal D_b(z)\},\\
 \mathcal J_\ell&=\operatorname{Top}_{K}\bigl(\operatorname{Extend}(\mathcal J_{\ell-1},b_\ell),S_\ell\bigr).
 \label{eq:beam-update}
\end{align}
Completed assignments retain the accumulated scores without another model evaluation. The final beam defines $\operatorname{Beam}_K$ in Algorithm~\ref{alg:jevspawn}.

Before pruning, normalization follows by summing over the conditional domains from the final block backward,
\begin{align}
 \sum_{z\in\mathcal Z(\Sigma_v)}\pi_\theta(z\mid h_v,\Sigma_v)
 &=\sum_{z_{b_1}\in\mathcal D_{b_1}(\varnothing)}\cdots
 \sum_{z_{b_L}\in\mathcal D_{b_L}(z_{<b_L})}
 \prod_{\ell=1}^{L}\bar Q_\theta\bigl(z_{b_\ell}\mid h_v^{z_{<b_\ell}},\mathcal D_{b_\ell}(z_{<b_\ell})\bigr)\nonumber\\
 &=1.
\end{align}
Pruning removes probability mass and may discard a partial assignment with a higher scoring completion. Beam selection therefore approximates exhaustive ranking.

If $P_t\leq P$ parents are expanded in round $t<T$ and $\delta_t$ branches are discarded after retention, then
\begin{equation}
 |\mathcal C_t|\leq P_tK,\qquad
 |\mathcal F_{t+1}|\leq R-\delta_t-P_t+P_tK.
\end{equation}
Discarded branches and expanded parents are disjoint. Further pruning or deduplication can only reduce this upper bound. Since round $T$ permits submission alone,
\begin{equation}
 N_T=\sum_{t=0}^{T-1}|\mathcal C_t|\leq\sum_{t=0}^{T-1}P_tK\leq TPK.
\end{equation}
The bound counts executed child actions. Declaration, branch selection, and submission are excluded. Domain saturation, deduplication, and early termination reduce the realized count.

\subsection{Equivalence of shared attention}
\label{app:attention}

For each query row $\mathbf q$, partition the attended tokens into prefix indices $I_p$ and the visible suffix indices $I_s$. Let $d_{\mathrm{head}}$ be the attention head dimension. For $\alpha\in\{p,s\}$,
\begin{equation}
 Z_\alpha=\sum_{k\in I_\alpha}\exp\!\left(\frac{\mathbf q^\top\mathbf k_k}{\sqrt{d_{\mathrm{head}}}}\right),\quad
 \lambda_\alpha=\log Z_\alpha,\quad
 \mathbf A_\alpha=Z_\alpha^{-1}\sum_{k\in I_\alpha}\exp\!\left(\frac{\mathbf q^\top\mathbf k_k}{\sqrt{d_{\mathrm{head}}}}\right)\mathbf v_k.
\end{equation}
For nonempty partitions, shared attention is recovered by
\begin{equation}
 \mathbf A=\frac{e^{\lambda_p}\mathbf A_p+e^{\lambda_s}\mathbf A_s}{e^{\lambda_p}+e^{\lambda_s}}.
 \label{eq:attention-merge}
\end{equation}
Indeed, for each query row,
\begin{equation}
 \frac{Z_p\mathbf A_p+Z_s\mathbf A_s}{Z_p+Z_s}
 =\frac{\sum_{k\in I_p\cup I_s}\exp\!\left(\mathbf q^\top\mathbf k_k/\sqrt{d_{\mathrm{head}}}\right)\mathbf v_k}
 {\sum_{k\in I_p\cup I_s}\exp\!\left(\mathbf q^\top\mathbf k_k/\sqrt{d_{\mathrm{head}}}\right)}.
\end{equation}
An empty partition contributes zero mass, leaving attention over the nonempty partition unchanged.

A recurrent layer initialized from the shared prefix state follows the recurrence of independent sequence evaluation. Induction over layers establishes equivalence for matching positions, causal visibility, parameters, and initial states in exact arithmetic.

\subsection{Sequential latency and shared work}

For a completed trajectory, let $\Delta_{\mathrm{decl}}$ cover declaration and revision, $\Delta_{\mathrm{submit}}$ submission, and $\Delta_{\mathrm{other}}$ scheduling. Per round, $\Delta_{\mathrm{rank},t}$, $\Delta_{\mathrm{control},t}$, $\Delta_{\mathrm{spawn},t}$, and $\Delta_{\mathrm{env},t}$ denote branch selection, operation choice, action scoring, and environment execution. Partitioning elapsed time gives
\begin{align}
 \Delta_{\mathrm{task}}
 &=\Delta_{\mathrm{decl}}+\Delta_{\mathrm{submit}}+\Delta_{\mathrm{other}}\nonumber\\
 &\quad+\sum_t\bigl(\Delta_{\mathrm{rank},t}+\Delta_{\mathrm{control},t}
 +\Delta_{\mathrm{spawn},t}+\Delta_{\mathrm{env},t}\bigr).
\end{align}
Concurrent trajectories may share GPU computation.

A common prefix of $n_p$ tokens saves $(B-1)n_p$ input tokens across $B$ requests. Reusing an unchanged history of $n_{\mathrm{hist},t}$ tokens saves the corresponding prefill at the next compatible request. Suffix attention, projection, padding, and scheduling costs remain.

%% file: setup.tex
\label{sec:setup}

\subsection{Tasks and Environment Interfaces}

Task context contains the goal, initial observation, interaction rules, and answer format, shared across methods. Method prompts are supplied separately, with core JevSpawn excerpts in Appendix~\ref{app:prompts}. Benchmark interfaces execute and score actions, and copied environment states support branch exploration. JevSpawn infers missing fields and reuses finite domains provided by task rules. Maze and Grid specify complete domains, so execution requires no text generation.

PPNL contains 1,136 instances, with 250 each from the 5 by 5 and 7 by 7 test collections, 336 from the standard collection, and 300 from the additional obstacle collection. Maze contains 25 initial states. Each remaining task contains 100 instances generated from seeds 42 through 141. Invalid submissions and timeouts count as failures. TextArena rewards and KORGym scores retain the benchmark scales.

\subsection{Execution and Timing}

Up to eight ready requests are processed together, with occupancy determined by method dependencies. E2E latency includes action construction, queueing, model computation, environment interaction, and submission. Model loading is excluded. Observed elapsed times are retained for timed out tasks.

Time to first token (TTFT) spans batch preparation and prefill through the first generated token. Inter-token latency (ITL) measures successive token intervals within a sequence, including interleaved finite evaluations. Finite evaluation latency covers preparation, context processing, and value scoring for a complete batch. Queueing and environment execution are timed separately.

Text decoding throughput divides tokens emitted after the first token by the corresponding batch decoding duration measured with CUDA events. Intervals with interleaved finite evaluation are excluded. Isolated decoding uses the same token and time definitions on replayed requests. Text generation throughput in the batch scaling experiment includes prefill and decoding. Finite throughput counts requests per second, with one request evaluating an option set containing multiple assignments.

\begin{figure}[p]
\centering
\includegraphics[scale=0.615385]{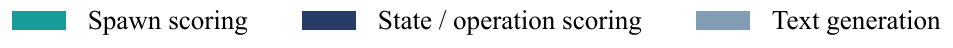}
\par\medskip
\begin{subfigure}[t]{1.0\linewidth}
\centering
\includegraphics[width=\linewidth]{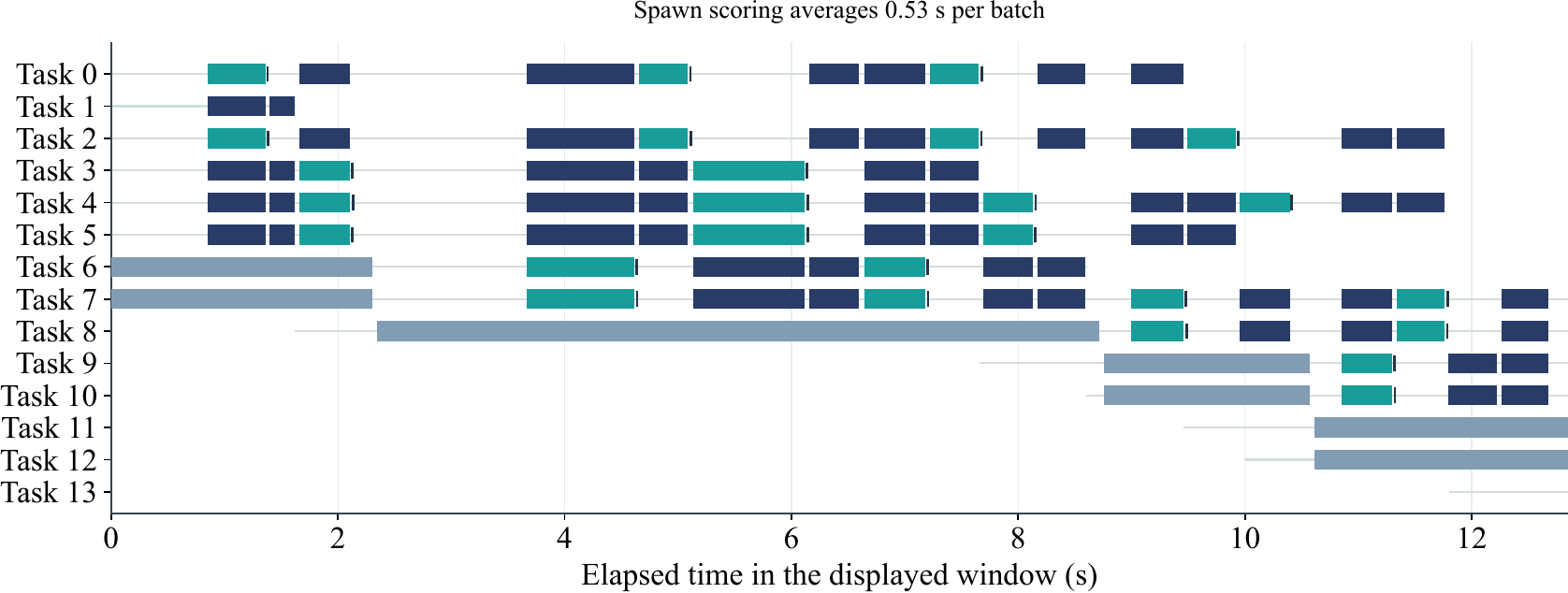}
\caption{Asynchronous execution}
\end{subfigure}
\par\medskip
\begin{subfigure}[t]{1.0\linewidth}
\centering
\includegraphics[width=\linewidth]{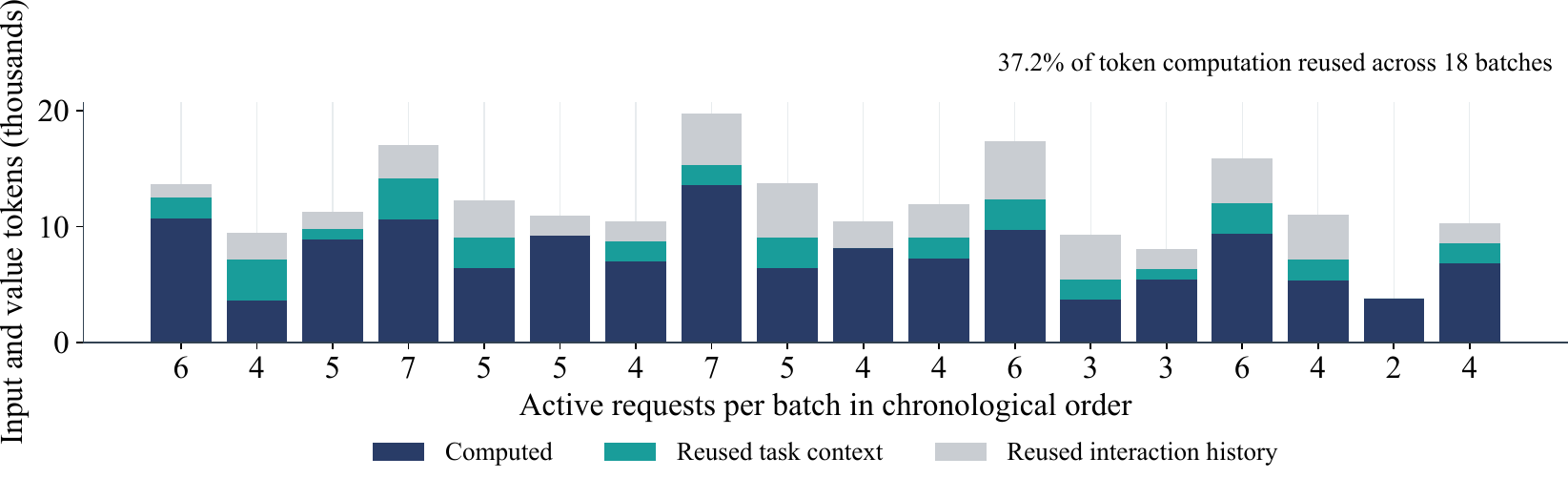}
\caption{Prefix reuse}
\end{subfigure}
\caption{Request scheduling and prefix reuse during JevSpawn execution on PPNL. (a) Request durations and returned observations during asynchronous execution. Teal spans indicate spawn scoring, with the mean batch latency annotated. Text generation can span intervening finite evaluations. (b) Computed and reused input tokens for successive finite evaluation batches.}
\label{fig:profile}
\end{figure}

In Figure~\ref{fig:profile}, batches containing spawn scoring or state selection average 0.53 seconds, and batches containing operation selection average 0.47 seconds. Different scoring operations can share a batch. During mixed text and finite evaluation, text generation averages 0.52 seconds TTFT and 137 milliseconds between emitted tokens. The isolated decoding experiment below measures token intervals without intervening finite requests.

\subsection{Action Probability Evaluation}

Profiling replays fixed requests in batches of eight. Per task, sixteen text requests and sixteen finite requests from each of the short and long history groups are used. Each batch has one warmup and three measured repetitions.

Isolated text decoding reaches 630 to 660 tokens per second with a mean ITL of 12 milliseconds. Prefill processes 29,000 to 33,000 uncached tokens per second. Finite evaluation takes 0.26 to 0.49 seconds per batch with short histories and 0.26 to 2.56 seconds with long histories, including context extension and probability computation.

Figure~\ref{fig:kernels} shows four GPU traces at a fixed number of alternatives. Longer uncached histories increase matrix computation and data movement. Collective communication includes device synchronization.

\begin{figure}[p]
\centering
\includegraphics[scale=0.593750]{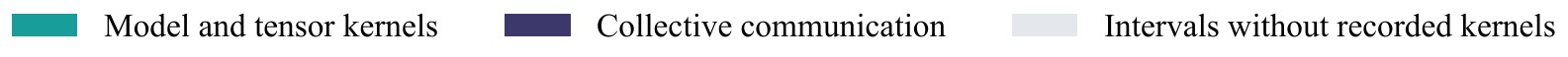}
\par\medskip
\begin{subfigure}[t]{0.48\linewidth}
\centering
\includegraphics[width=\linewidth]{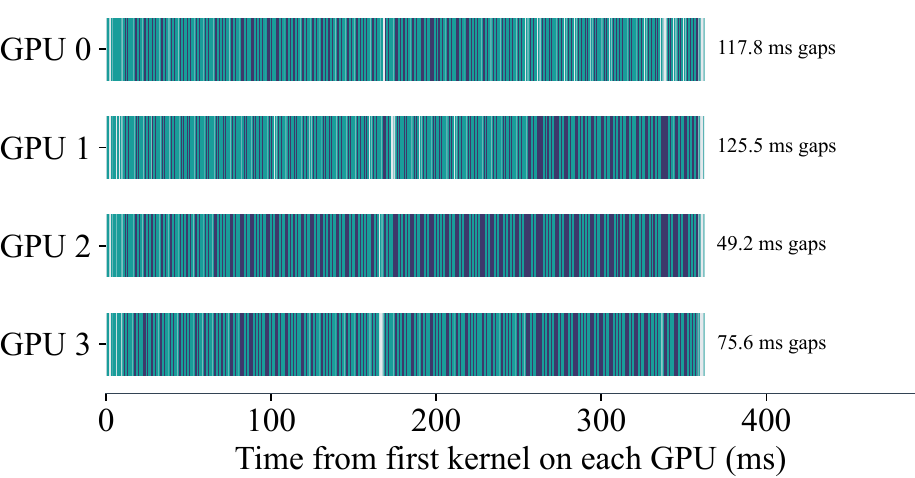}
\caption{Short history, 5,290 computed tokens}
\end{subfigure}
\hfill
\begin{subfigure}[t]{0.48\linewidth}
\centering
\includegraphics[width=\linewidth]{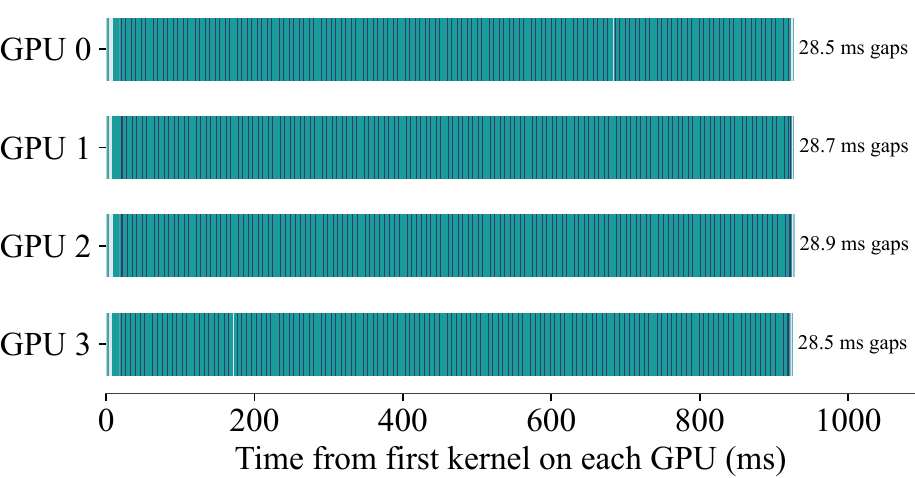}
\caption{Long history, 21,450 computed tokens}
\end{subfigure}
\caption{GPU activity during finite action evaluation of eight requests and 77 assignments. Teal indicates model computation, purple indicates collective communication, and light gray indicates gaps between kernels. Each timeline begins at the first kernel on the corresponding GPU.}
\label{fig:kernels}
\end{figure}

\subsection{TypeSafe Jev Comparison}

The TypeSafe Jev variant uses API model \texttt{jev-1.13.0} for finite scoring and Qwen for declarations and text generation. Both scorers receive the same task context and history under the common input limit. Branch exploration, action execution, and task evaluation are unchanged. E2E latency includes network and API service time.

\subsection{Further Performance Analysis}
\label{app:performance}

Among the seven agent baselines, AgentPrune achieves the highest score on four tasks. Figure~\ref{fig:completion} compares paired instance latencies and scores with JevSpawn. Timeout pairs are marked separately and excluded from the median latency curve.

\begin{figure}[p]
\centering
\includegraphics[scale=0.666667]{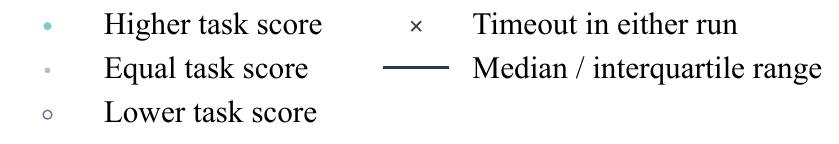}
\par\medskip
\begin{subfigure}[t]{0.48\linewidth}
\centering
\includegraphics[width=\linewidth]{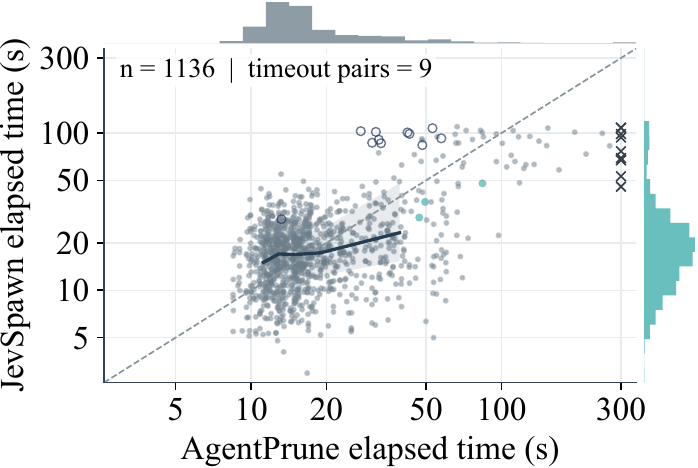}
\caption{PPNL}
\end{subfigure}
\hfill
\begin{subfigure}[t]{0.48\linewidth}
\centering
\includegraphics[width=\linewidth]{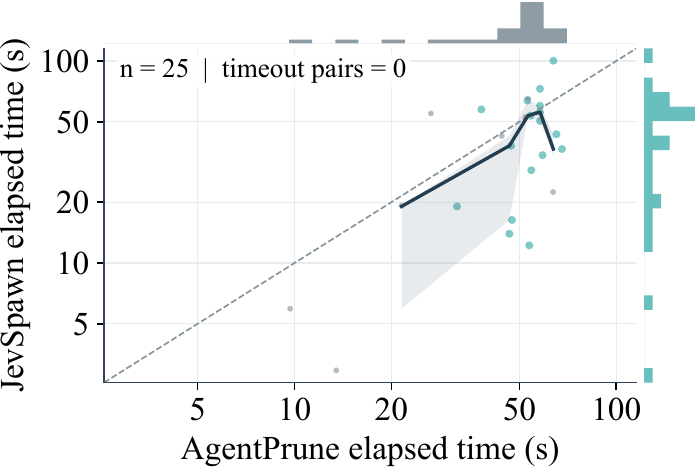}
\caption{Maze}
\end{subfigure}
\par\medskip
\begin{subfigure}[t]{0.48\linewidth}
\centering
\includegraphics[width=\linewidth]{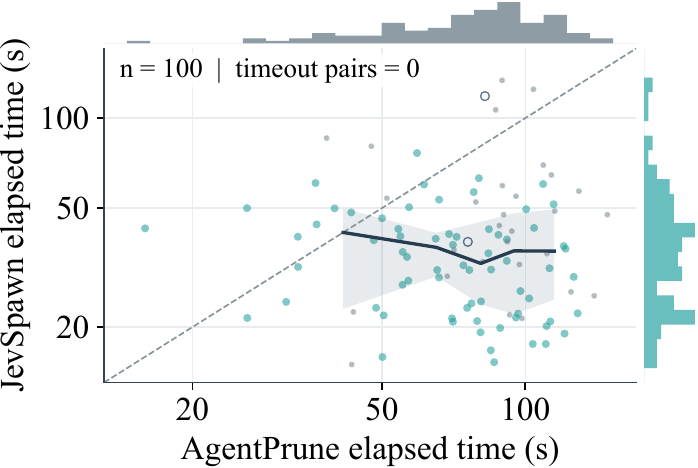}
\caption{Grid}
\end{subfigure}
\hfill
\begin{subfigure}[t]{0.48\linewidth}
\centering
\includegraphics[width=\linewidth]{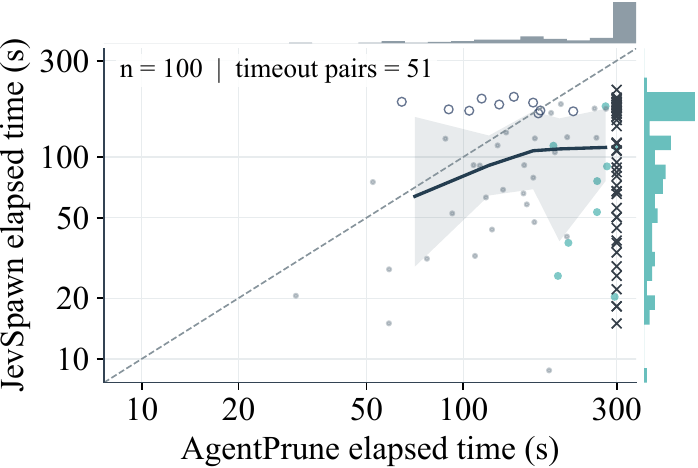}
\caption{LightsOut}
\end{subfigure}
\caption{Paired task times for JevSpawn and AgentPrune. Each point represents the same instance under both methods. Points below the diagonal indicate shorter JevSpawn execution. Filled teal, gray, and open navy marks denote higher, equal, and lower JevSpawn scores. Crosses identify pairs containing a timeout. Marginal histograms show elapsed time distributions. The dark curve and band summarize median and interquartile JevSpawn times within equal-count groups of AgentPrune times, excluding timeout pairs.}
\label{fig:completion}
\end{figure}

\begin{figure}[p]
\ContinuedFloat
\centering
\includegraphics[scale=0.666667]{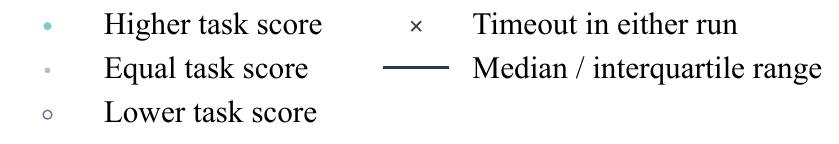}
\par\medskip
\begin{subfigure}[t]{0.48\linewidth}
\centering
\includegraphics[width=\linewidth]{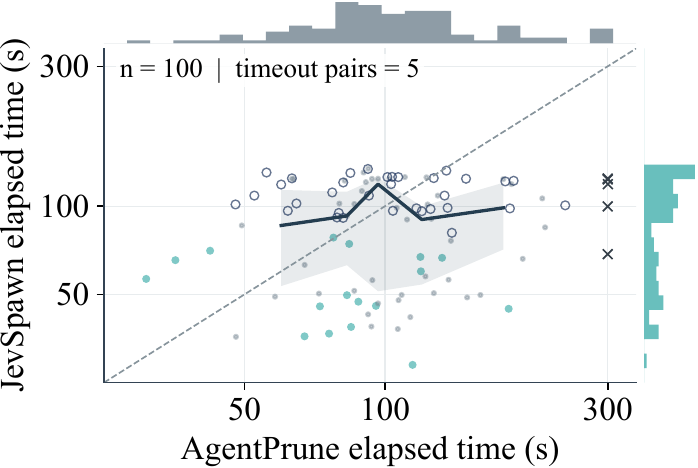}
\caption{RushHour}
\end{subfigure}
\hfill
\begin{subfigure}[t]{0.48\linewidth}
\centering
\includegraphics[width=\linewidth]{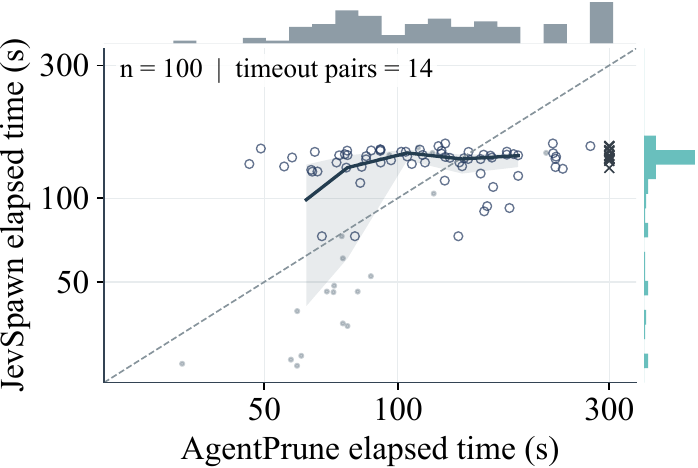}
\caption{Sokoban}
\end{subfigure}
\par\medskip
\begin{subfigure}[t]{0.48\linewidth}
\centering
\includegraphics[width=\linewidth]{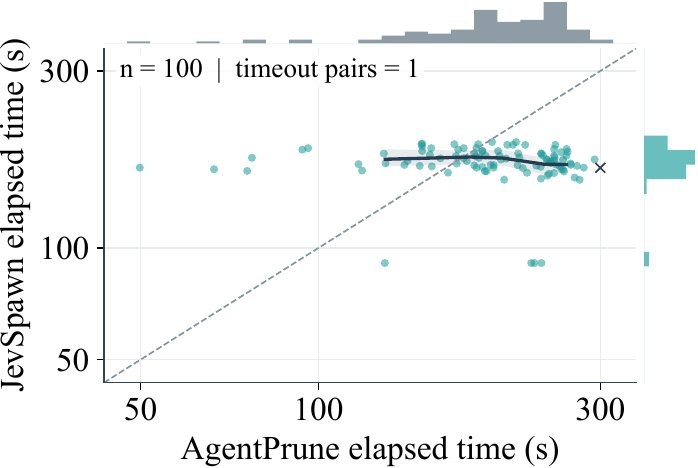}
\caption{2048}
\end{subfigure}
\hfill
\begin{subfigure}[t]{0.48\linewidth}
\centering
\includegraphics[width=\linewidth]{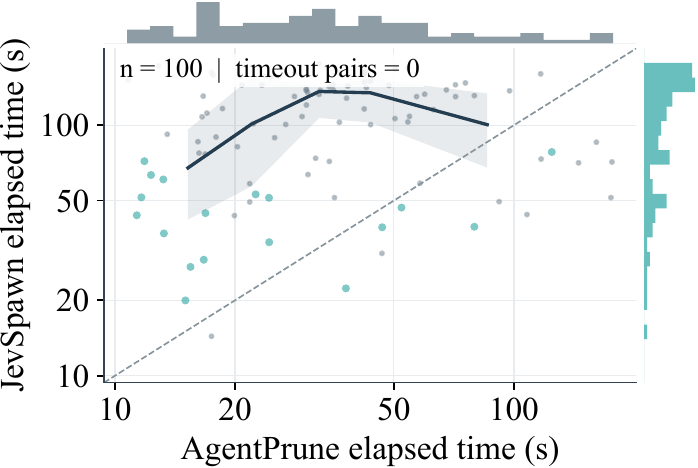}
\caption{Nullify}
\end{subfigure}
\caption[]{Continued. (e) RushHour, (f) Sokoban, (g) 2048, and (h) Nullify.}

\end{figure}

\input{prompts}

%% file: prompts.tex
\subsection{Core Prompts}
\label{app:prompts}

The following excerpts specify action construction, joint scoring, and feedback guided continuation. The task context and observed history accompany these instructions.

\begin{examplebox}[frametitle={Action construction}]

Declare the reusable commands for one environment transition. Use the smallest executable transaction accepted by the public interface. When a command permits a sequence of moves, declare one move and let subsequent turns execute subsequent moves after observing their results. When the interface requires a joint tuple in one call, declare that complete tuple. Output one signature per primitive command form. The same signature is reused at every applicable state. Fixed native prefixes, separators, spacing and punctuation remain literally in the signature. Replace each variable component with a named slot whose complete finite domain is declared inline as \{name:enum("value\_a", "value\_b")\} or \{name:range(start, stop)\}. Here range uses integers with an inclusive start and exclusive stop.

\end{examplebox}

\begin{examplebox}[frametitle={Joint action scoring}]

Select the joint field assignment whose resulting action makes the best progress toward the task's stated goal from the current state. Each option binds the pending fields together. Evaluate their compatibility, the action's applicability, and the latest feedback as one decision. Previously bound fields remain fixed.

\end{examplebox}

\begin{examplebox}[frametitle={Feedback guided continuation}]

Use the task rules, executed actions and returned observations. A stopped environment may represent failure. Determine the outcome from its observation. The following operation can continue execution, revise the action declaration, submit, or discard a stopped branch.

Choose the next operation for the selected branch using its active declaration and the latest actual action-observation pairs below. A declaration with a reported native syntax or argument-format error needs revise so that subsequent values produce executable commands. A well-formed command rejected by the current environment state needs a different value combination through expand, unless the required command or value is absent from the declaration. A completed task supported by actual observations needs submit.

\end{examplebox}

%% file: ablations.tex
\section{Ablation Studies}
\label{app:ablations}

\subsection{Expansion Width}

Expansion width controls the maximum number of child actions from one parent. Table~\ref{tab:ablation-width} compares widths of one, two, four, eight, and ten against the default width of four. Width one reduces latency but substantially lowers scores on Maze, Grid, and LightsOut. Wider expansion yields task dependent gains.

\input{tables/ablation_width}

\begin{figure}[p]
\centering

\begin{subfigure}[t]{0.24\linewidth}
\centering
\includegraphics[width=\linewidth]{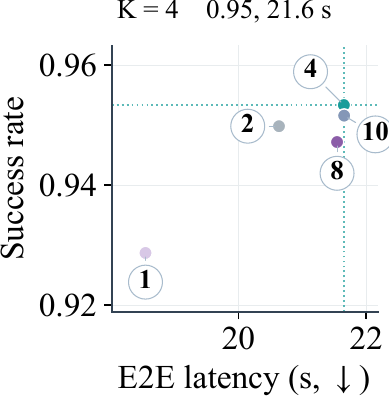}
\caption{PPNL}
\end{subfigure}
\hfill
\begin{subfigure}[t]{0.24\linewidth}
\centering
\includegraphics[width=\linewidth]{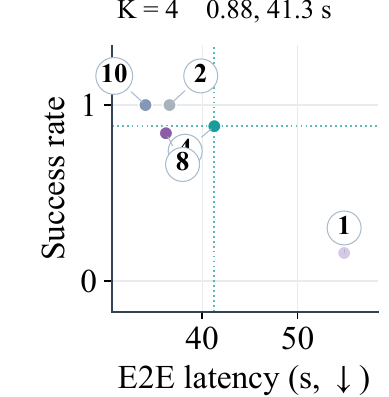}
\caption{Maze}
\end{subfigure}
\hfill
\begin{subfigure}[t]{0.24\linewidth}
\centering
\includegraphics[width=\linewidth]{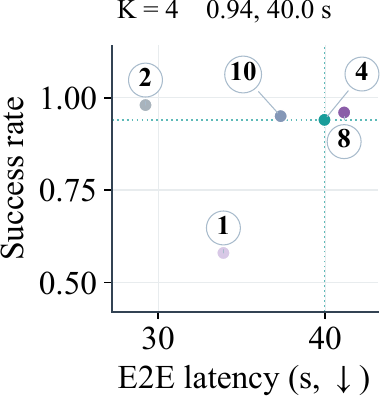}
\caption{Grid}
\end{subfigure}
\hfill
\begin{subfigure}[t]{0.24\linewidth}
\centering
\includegraphics[width=\linewidth]{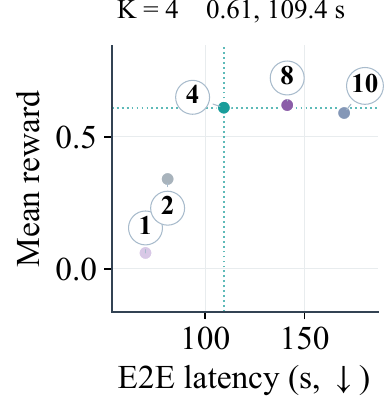}
\caption{LightsOut}
\end{subfigure}
\par\bigskip
\begin{subfigure}[t]{0.24\linewidth}
\centering
\includegraphics[width=\linewidth]{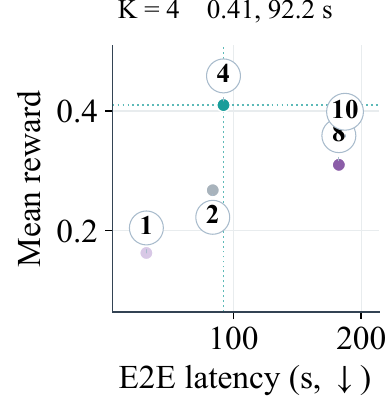}
\caption{RushHour}
\end{subfigure}
\hfill
\begin{subfigure}[t]{0.24\linewidth}
\centering
\includegraphics[width=\linewidth]{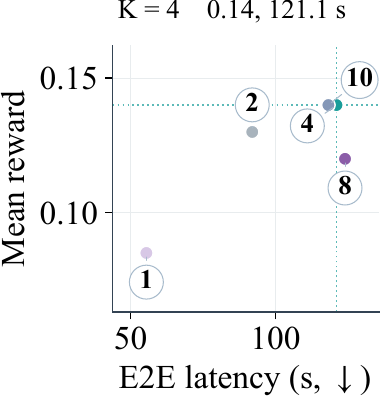}
\caption{Sokoban}
\end{subfigure}
\hfill
\begin{subfigure}[t]{0.24\linewidth}
\centering
\includegraphics[width=\linewidth]{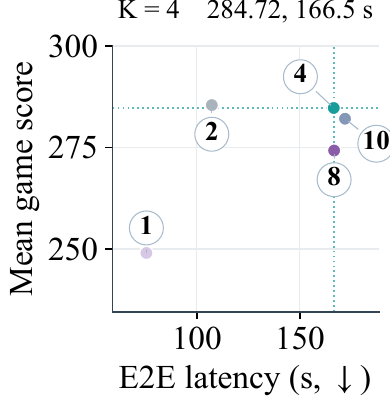}
\caption{2048}
\end{subfigure}
\hfill
\begin{subfigure}[t]{0.24\linewidth}
\centering
\includegraphics[width=\linewidth]{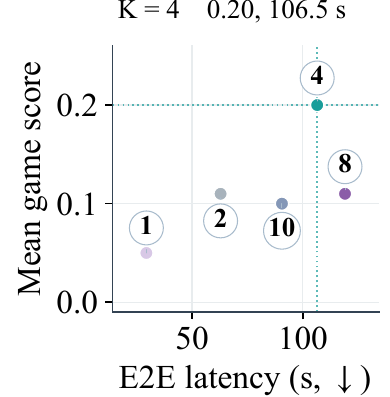}
\caption{Nullify}
\end{subfigure}
\caption{Task performance and E2E latency across expansion widths. Circled labels indicate the maximum number of spawned actions, $K$. Dotted lines mark the reference setting, $K=4$.}
\label{fig:width}
\end{figure}

\subsection{Expansion Topology}

Table~\ref{tab:ablation-topology} allocates four actions as $1\times4$, $2\times2$, or $4\times1$ parent--child configurations. Two parents achieve the highest Maze and Grid success, while four parents achieve the highest 2048 score. The action budget is fixed across configurations.

\input{tables/ablation_topology}

\subsection{Declaration Revision}

Declaration revision is disabled after the first executable declaration in Table~\ref{tab:ablation-revision}. Feedback and branch selection remain active. Fixed declarations improve scores on four tasks and reduce latency on six, while adaptive declarations score higher on the other four tasks.

\input{tables/ablation_components}

\subsection{Branch Information Sharing}

Table~\ref{tab:ablation-sharing} restricts observations to the current branch history. Shared observations reduce latency on all eight tasks and improve scores on PPNL, Sokoban, 2048, and Nullify. Restricting observations improves scores on Maze, Grid, LightsOut, and RushHour.

\subsection{Interaction Length and Time Budget}

Round limits of 36, 54, 72, and 108 are evaluated at a context limit of 131,072 tokens, with all other settings fixed. Each condition is run with a 300 second deadline and without a deadline. Table~\ref{tab:turns} reports scores, E2E latency, and timeout rates for the timed series. Figure~\ref{fig:turns} shows the score trends.

\input{tables/turns}

\begin{figure}[p]
\centering
\includegraphics[scale=0.769231]{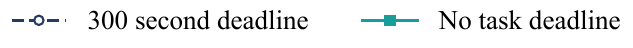}\par\medskip
\begin{subfigure}[t]{0.24\linewidth}
\centering
\includegraphics[width=\linewidth]{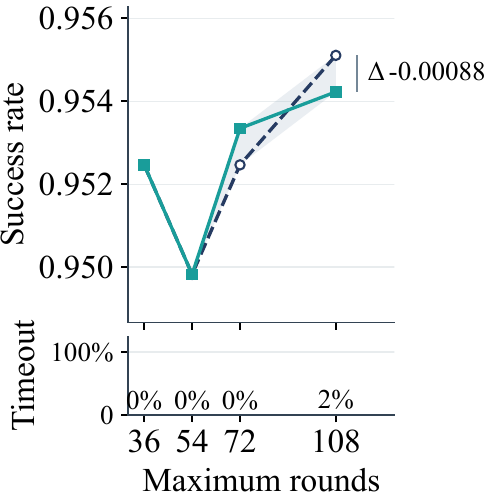}
\caption{PPNL}
\end{subfigure}
\hfill
\begin{subfigure}[t]{0.24\linewidth}
\centering
\includegraphics[width=\linewidth]{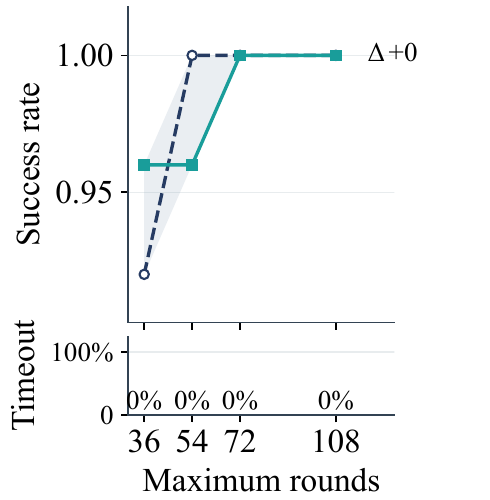}
\caption{Maze}
\end{subfigure}
\hfill
\begin{subfigure}[t]{0.24\linewidth}
\centering
\includegraphics[width=\linewidth]{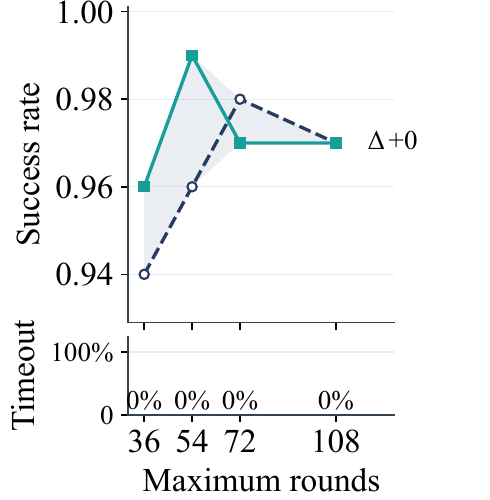}
\caption{Grid}
\end{subfigure}
\hfill
\begin{subfigure}[t]{0.24\linewidth}
\centering
\includegraphics[width=\linewidth]{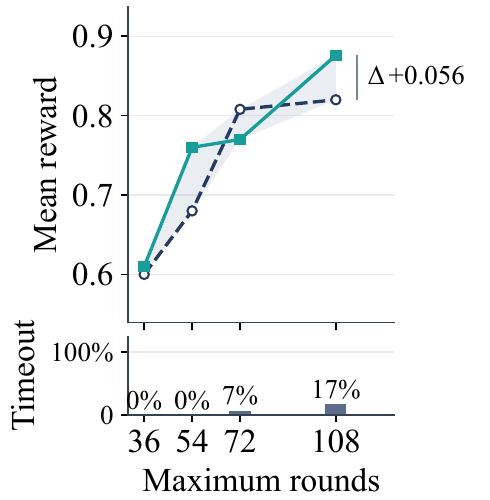}
\caption{LightsOut}
\end{subfigure}
\par\bigskip
\begin{subfigure}[t]{0.24\linewidth}
\centering
\includegraphics[width=\linewidth]{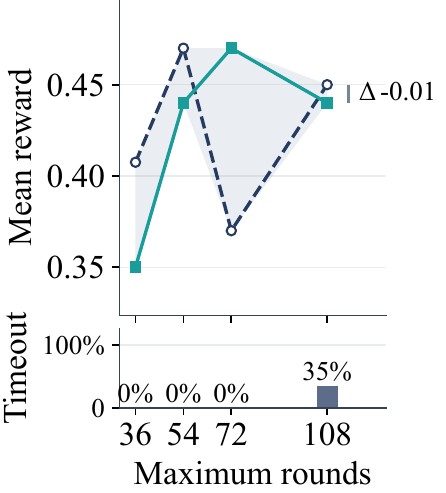}
\caption{RushHour}
\end{subfigure}
\hfill
\begin{subfigure}[t]{0.24\linewidth}
\centering
\includegraphics[width=\linewidth]{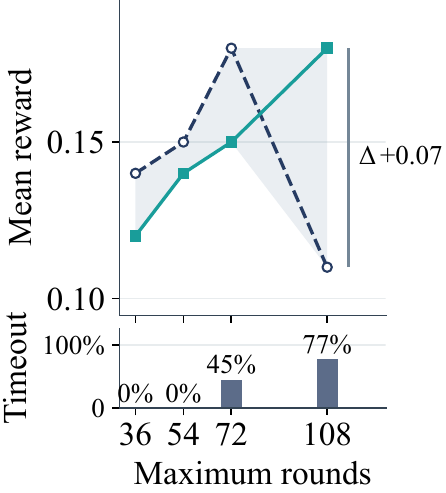}
\caption{Sokoban}
\end{subfigure}
\hfill
\begin{subfigure}[t]{0.24\linewidth}
\centering
\includegraphics[width=\linewidth]{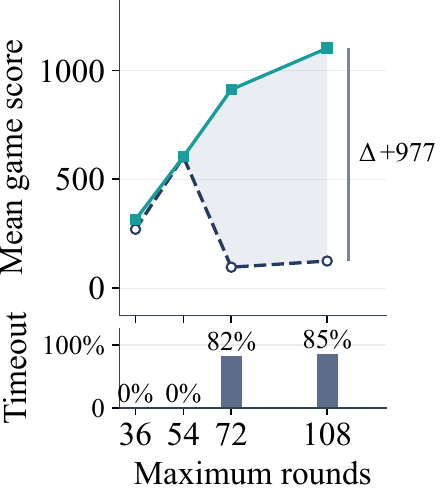}
\caption{2048}
\end{subfigure}
\hfill
\begin{subfigure}[t]{0.24\linewidth}
\centering
\includegraphics[width=\linewidth]{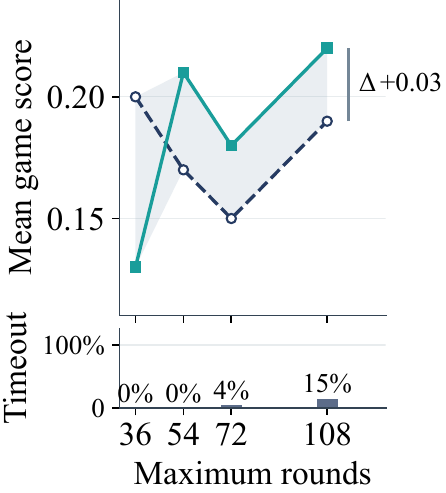}
\caption{Nullify}
\end{subfigure}
\caption{Task performance and timeout incidence across interaction horizons. Shading separates scores with a fixed deadline and unrestricted time. Brackets mark the gap at 108 rounds, with timeout rates below.}
\label{fig:turns}
\end{figure}

Without a deadline, the 2048 score rises from 315.16 at 36 rounds to 603.72, 912.52, and 1102.24 at 54, 72, and 108 rounds. LightsOut reward rises from 0.61 to 0.876. Navigation success changes little, and RushHour reward peaks at 0.47 at 72 rounds before declining to 0.44.

With the deadline enforced, 82 and 85 percent of 2048 tasks time out at 72 and 108 rounds, reducing scores to 96.60 and 124.84. Sokoban reaches a timeout rate of 77 percent at 108 rounds. Timed out tasks receive zero reward.

At 108 rounds without a deadline, mean E2E latency reaches 464.67 seconds on 2048 and 415.77 seconds on Sokoban, with 99th percentiles of 612 and 559 seconds. Sokoban reward increases from 0.12 at 36 rounds to 0.18. In 2048, 92 percent of tasks exceed 300 seconds and nine percent exceed 600 seconds. The longest observed run takes 623 seconds.

Removing only the deadline at the main context limit of 16,384 tokens and 36 rounds yields success rates of 0.951, 0.96, and 0.94 on PPNL, Maze, and Grid.

\subsection{Batch Scaling}

Batch scaling replays 128 fixed requests for each of text generation and finite evaluation, with one warmup and three measured repetitions per batch. Throughput includes input processing. Finite evaluation is separated by history length and measured in requests per second, with each request scoring one option set.

\begin{figure}[p]
\centering
\begin{subfigure}[t]{0.48\linewidth}
\centering
\includegraphics[width=\linewidth]{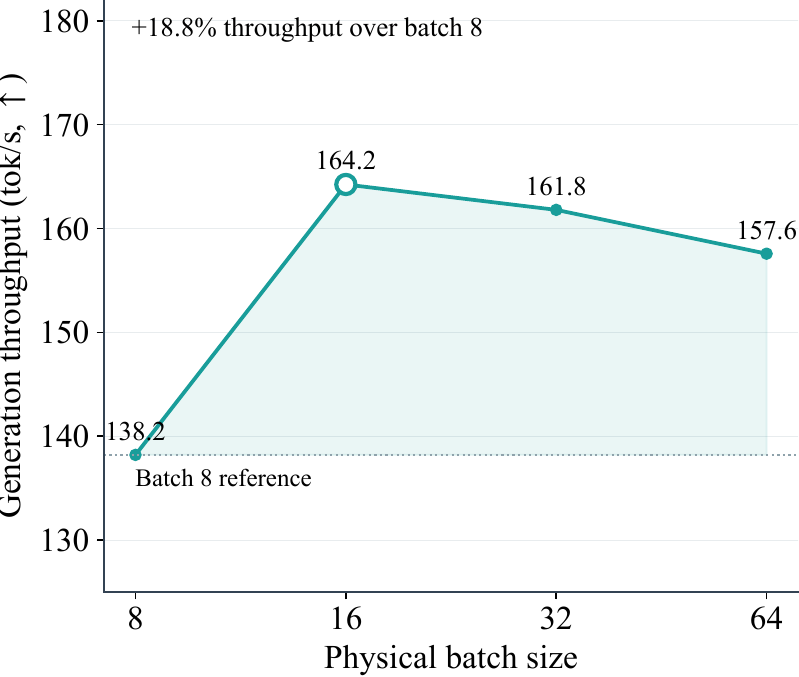}
\caption{Throughput scaling}
\end{subfigure}
\hfill
\begin{subfigure}[t]{0.48\linewidth}
\centering
\includegraphics[width=\linewidth]{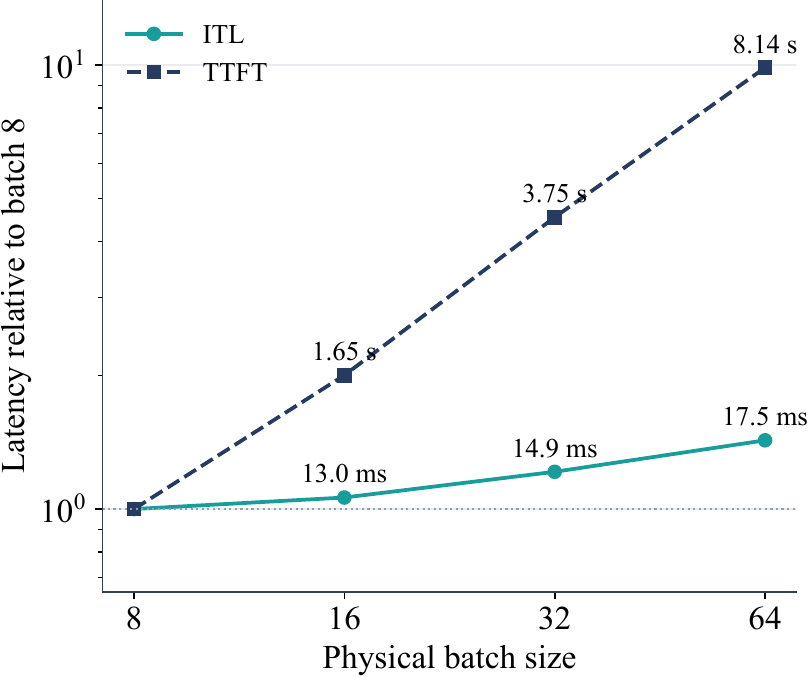}
\caption{Latency scaling}
\end{subfigure}
\par\medskip
\begin{subfigure}[t]{0.48\linewidth}
\centering
\includegraphics[width=\linewidth]{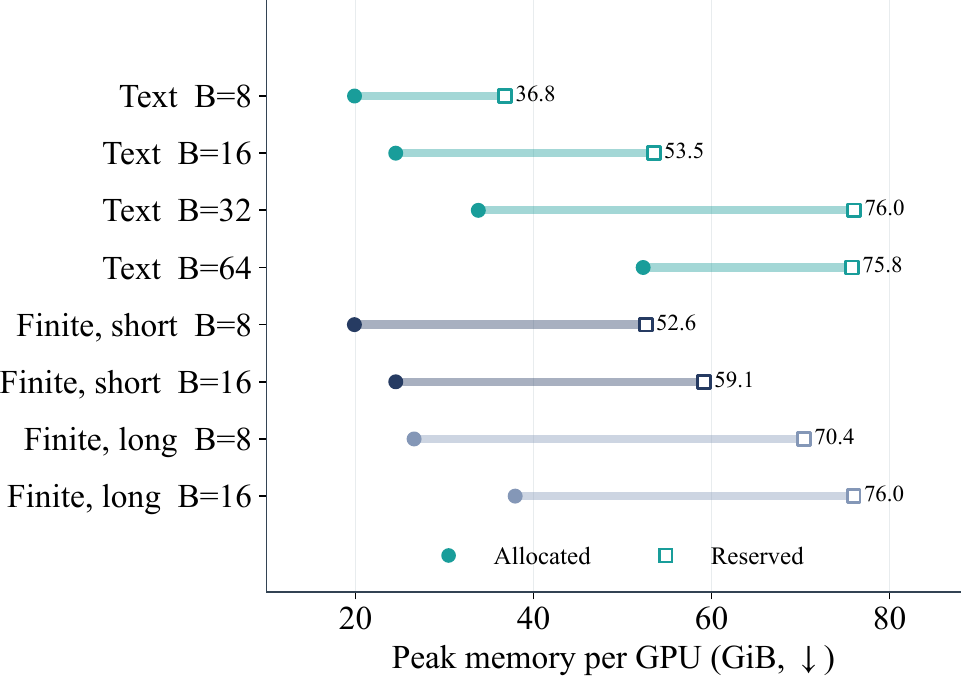}
\caption{GPU memory}
\end{subfigure}
\hfill
\begin{subfigure}[t]{0.48\linewidth}
\centering
\includegraphics[width=\linewidth]{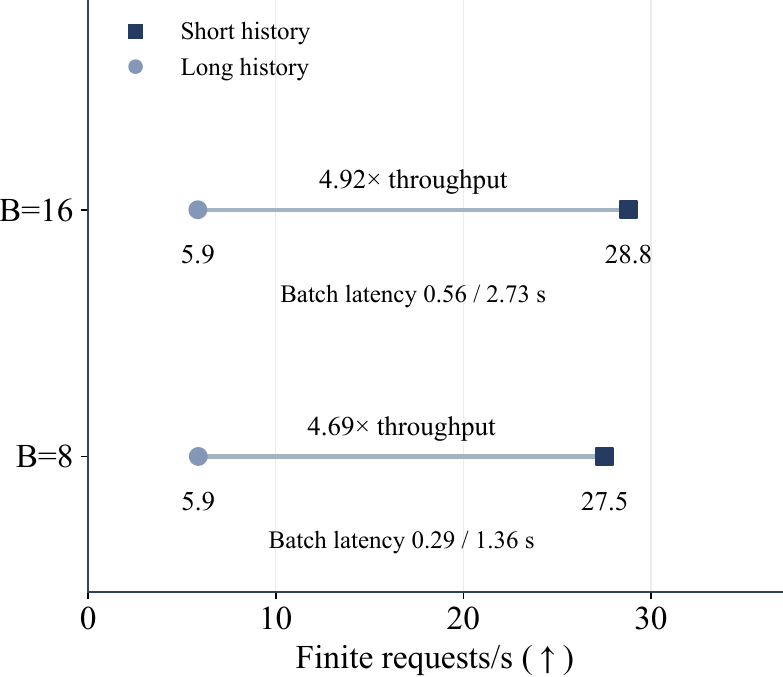}
\caption{Effect of history length}
\end{subfigure}
\caption{Throughput, latency, and memory under increasing physical batch size. (a) The throughput curve highlights the best measured batch and the gain over batch 8. (b) ITL and TTFT are normalized to batch 8, with absolute values annotated. (c) Memory intervals connect allocated and reserved maxima across four GPUs. (d) Paired finite throughput measurements contrast short and long histories, with ratios and batch latencies shown. Finite throughput includes context processing. The next tested text and finite batches, 128 and 32, exceed memory.}
\label{fig:batch}
\end{figure}

Text generation throughput, including prefill, peaks at batch size 16, increasing from 138.2 to 164.2 tokens per second relative to batch size 8. Mean ITL rises from 12.28 to 17.54 milliseconds between batch sizes 8 and 64. Finite throughput remains near 28 requests per second for short histories and six for long histories at both batch sizes 8 and 16.

The largest completed batch sizes are 64 for text generation and 16 for finite evaluation. The next tested sizes, 128 and 32, exceed GPU memory.

%% file: tables/ablation_width.tex
\begin{table}[p]
\centering
\caption{Effect of exploration width on score ($\uparrow$) and E2E latency (s, $\downarrow$). Bold values mark the best result for each metric and task, including ties.}
\label{tab:ablation-width}
\small\begin{adjustbox}{max width=\linewidth}
\begin{tabular}{llrrrrrrrr}
\toprule
Width & Metric &  PPNL &  Maze &  Grid &  LightsOut &  RushHour &  Sokoban &  2048 &  Nullify \\
\midrule
\multirow{2}{*}{1} & Score & 0.929 & 0.160 & 0.580 & 0.060 & 0.163 & 0.085 & 249.000 & 0.050 \\
 & E2E & \textbf{18.54} & 54.87 & 33.91 & \textbf{69.93} & \textbf{31.90} & \textbf{55.45} & \textbf{75.51} & \textbf{29.42} \\
\midrule
\multirow{2}{*}{2} & Score & 0.950 & \textbf{1.000} & \textbf{0.980} & 0.340 & 0.268 & 0.130 & \textbf{285.440} & 0.110 \\
 & E2E & 20.63 & 36.58 & \textbf{29.24} & 81.07 & 83.82 & 91.99 & 107.26 & 62.89 \\
\midrule
\multirow{2}{*}{4} & Score & \textbf{0.953} & 0.880 & 0.940 & 0.610 & \textbf{0.410} & \textbf{0.140} & 284.720 & \textbf{0.200} \\
 & E2E & 21.65 & 41.28 & 39.96 & 109.37 & 92.22 & 121.09 & 166.46 & 106.47 \\
\midrule
\multirow{2}{*}{8} & Score & 0.947 & 0.840 & 0.960 & \textbf{0.620} & 0.310 & 0.120 & 274.240 & 0.110 \\
 & E2E & 21.54 & 36.21 & 41.13 & 141.26 & 182.45 & 124.02 & 166.64 & 119.03 \\
\midrule
\multirow{2}{*}{10} & Score & 0.952 & \textbf{1.000} & 0.950 & 0.590 & 0.350 & \textbf{0.140} & 282.080 & 0.100 \\
 & E2E & 21.65 & \textbf{34.07} & 37.33 & 169.90 & 187.27 & 118.28 & 171.93 & 90.57 \\
\bottomrule
\end{tabular}\end{adjustbox}
\end{table}

%% file: tables/ablation_topology.tex
\begin{table}[p]
\centering
\caption{Effect of exploration topology under a fixed expansion budget. Score ($\uparrow$) and E2E latency (s, $\downarrow$) are reported. Bold values mark the best result for each metric and task, including ties.}
\label{tab:ablation-topology}
\small\begin{adjustbox}{max width=\linewidth}
\begin{tabular}{llrrrrrrrr}
\toprule
Parents $\times$ children & Metric &  PPNL &  Maze &  Grid &  LightsOut &  RushHour &  Sokoban &  2048 &  Nullify \\
\midrule
\multirow{2}{*}{$1\times4$} & Score & \textbf{0.953} & 0.880 & 0.940 & \textbf{0.610} & \textbf{0.410} & 0.140 & 284.720 & \textbf{0.200} \\
 & E2E & \textbf{21.65} & 41.28 & 39.96 & \textbf{109.37} & 92.22 & 121.09 & \textbf{166.46} & 106.47 \\
\midrule
\multirow{2}{*}{$2\times2$} & Score & 0.949 & \textbf{1.000} & \textbf{1.000} & 0.410 & 0.290 & \textbf{0.190} & 307.040 & 0.150 \\
 & E2E & 22.24 & \textbf{27.79} & \textbf{25.89} & 166.35 & 104.24 & 134.25 & 181.22 & 78.57 \\
\midrule
\multirow{2}{*}{$4\times1$} & Score & 0.933 & 0.120 & 0.750 & 0.090 & 0.060 & 0.120 & \textbf{350.880} & 0.070 \\
 & E2E & 27.88 & 146.46 & 63.11 & 263.20 & \textbf{51.30} & \textbf{119.74} & 217.78 & \textbf{61.29} \\
\bottomrule
\end{tabular}\end{adjustbox}
\end{table}

%% file: tables/ablation_components.tex
\begin{table}[p]
\centering
\caption{Ablations of action declaration revision and cross-branch observation sharing. Score ($\uparrow$) and E2E latency in seconds ($\downarrow$) are reported for each setting. Bold values mark the best result for each metric and task, including ties.}
\label{tab:ablation-revision}
\label{tab:ablation-sharing}
\small\begin{adjustbox}{max width=\linewidth}
\begin{tabular}{lllrrrrrrrr}
\toprule
Revision & Sharing & Metric &  PPNL &  Maze &  Grid &  LightsOut &  RushHour &  Sokoban &  2048 &  Nullify \\
\midrule
\multirow{2}{*}{w/} & \multirow{2}{*}{w/} & Score & \textbf{0.953} & 0.880 & 0.940 & 0.610 & 0.410 & 0.140 & 284.720 & \textbf{0.200} \\
 &  & E2E & 21.65 & 41.28 & 39.96 & 109.37 & 92.22 & 121.09 & \textbf{166.46} & \textbf{106.47} \\
\midrule
\multirow{2}{*}{w/o} & \multirow{2}{*}{w/} & Score & 0.945 & \textbf{0.920} & 0.930 & \textbf{0.700} & 0.360 & \textbf{0.155} & \textbf{293.920} & 0.170 \\
 &  & E2E & \textbf{21.40} & \textbf{36.94} & \textbf{38.61} & \textbf{97.45} & \textbf{91.91} & \textbf{118.41} & 166.53 & 106.67 \\
\midrule
\multirow{2}{*}{w/} & \multirow{2}{*}{w/o} & Score & 0.952 & \textbf{0.920} & \textbf{0.950} & 0.690 & \textbf{0.480} & 0.120 & 274.160 & 0.180 \\
 &  & E2E & 24.05 & 48.59 & 49.70 & 116.54 & 103.22 & 169.54 & 206.02 & 147.35 \\
\bottomrule
\end{tabular}\end{adjustbox}
\end{table}

%% file: tables/turns.tex
\begin{table}[p]
\centering
\caption{Interaction length with and without a 300 s time budget. Score ($\uparrow$), E2E latency in seconds ($\downarrow$), and timeout rate TO in percent ($\downarrow$) are reported. Bold values mark the best result within each budget condition for each metric and task, including ties.}
\label{tab:turns}
\small\begin{adjustbox}{max width=\linewidth}
\begin{tabular}{lllrrrrrrrr}
\toprule
Setting & Rounds & Metric &  PPNL &  Maze &  Grid &  LightsOut &  RushHour &  Sokoban &  2048 &  Nullify \\
\midrule
\multirow{12}{*}{w/ time budget} & \multirow{3}{*}{36} & Score & 0.952 & 0.920 & 0.940 & 0.600 & 0.407 & 0.140 & 270.760 & \textbf{0.200} \\
 &  & E2E & \textbf{21.7} & 35.1 & \textbf{37.5} & \textbf{79.3} & \textbf{89.7} & \textbf{106.1} & \textbf{131.8} & \textbf{101.6} \\
 &  & TO & \textbf{0.0} & \textbf{0.0} & \textbf{0.0} & \textbf{0.0} & \textbf{0.0} & \textbf{0.0} & \textbf{0.0} & \textbf{0.0} \\
\addlinespace
 & \multirow{3}{*}{54} & Score & 0.950 & \textbf{1.000} & 0.960 & 0.680 & \textbf{0.470} & 0.150 & \textbf{602.960} & 0.170 \\
 &  & E2E & 23.7 & 43.4 & 42.0 & 110.9 & 112.9 & 171.3 & 216.9 & 123.9 \\
 &  & TO & \textbf{0.0} & \textbf{0.0} & \textbf{0.0} & \textbf{0.0} & \textbf{0.0} & \textbf{0.0} & \textbf{0.0} & \textbf{0.0} \\
\addlinespace
 & \multirow{3}{*}{72} & Score & 0.952 & \textbf{1.000} & \textbf{0.980} & 0.808 & 0.370 & \textbf{0.180} & 96.600 & 0.150 \\
 &  & E2E & 26.2 & \textbf{34.5} & 41.0 & 123.1 & 177.3 & 236.6 & 290.6 & 140.0 \\
 &  & TO & \textbf{0.0} & \textbf{0.0} & \textbf{0.0} & 7.0 & \textbf{0.0} & 45.0 & 82.0 & 4.0 \\
\addlinespace
 & \multirow{3}{*}{108} & Score & \textbf{0.955} & \textbf{1.000} & 0.970 & \textbf{0.820} & 0.450 & 0.110 & 124.840 & 0.190 \\
 &  & E2E & 29.0 & 38.7 & 38.1 & 130.6 & 193.1 & 266.2 & 293.6 & 160.1 \\
 &  & TO & 1.9 & \textbf{0.0} & \textbf{0.0} & 17.0 & 35.0 & 77.0 & 85.0 & 15.0 \\
\midrule
\multirow{8}{*}{w/o time budget} & \multirow{2}{*}{36} & Score & 0.952 & 0.960 & 0.960 & 0.610 & 0.350 & 0.120 & 315.160 & 0.130 \\
 &  & E2E & \textbf{21.8} & \textbf{34.7} & \textbf{34.6} & \textbf{81.8} & \textbf{93.7} & \textbf{111.0} & \textbf{127.8} & \textbf{99.5} \\
\addlinespace
 & \multirow{2}{*}{54} & Score & 0.950 & 0.960 & \textbf{0.990} & 0.760 & 0.440 & 0.140 & 603.720 & 0.210 \\
 &  & E2E & 24.1 & 43.2 & 37.1 & 103.6 & 119.4 & 173.9 & 220.7 & 121.0 \\
\addlinespace
 & \multirow{2}{*}{72} & Score & 0.953 & \textbf{1.000} & 0.970 & 0.770 & \textbf{0.470} & 0.150 & 912.520 & 0.180 \\
 &  & E2E & 25.6 & 36.2 & 35.5 & 127.9 & 158.2 & 246.8 & 321.6 & 147.7 \\
\addlinespace
 & \multirow{2}{*}{108} & Score & \textbf{0.954} & \textbf{1.000} & 0.970 & \textbf{0.876} & 0.440 & \textbf{0.180} & \textbf{1102.240} & \textbf{0.220} \\
 &  & E2E & 30.4 & 49.1 & 39.3 & 154.9 & 246.8 & 415.8 & 464.7 & 203.8 \\
\bottomrule
\end{tabular}\end{adjustbox}
\end{table}

%% file: cases.tex
\begingroup
\mdfsetup{skipabove=8pt,skipbelow=6pt,innertopmargin=6pt,innerbottommargin=6pt}
\section{Case Studies}
\label{app:cases}
One instance from each benchmark task illustrates how finite action fields support exploration. JevSpawn attains the maximum reward on seven selected instances, whereas most baselines do not complete the corresponding task. The 2048 example is compared by accumulated score. Consecutive rounds are shown in alternating colors, with observations condensed to the returned state changes and terminal feedback. Branch identifiers link states across rounds, and bold identifiers mark the submitted path. Recovery in a round title denotes a return to an earlier branch. Field identifiers label the slots in each action template.
\subsection{PPNL}
The goal lies beyond two obstacles in the top row. The selected path descends to the second row, crosses below both obstacles, and returns to the goal in seven moves.
\begin{examplebox}[frametitle={Task and outcome},linecolor=black]
Additional obstacle test, instance 164. JevSpawn score 1.
\par\smallskip\centering
\small\begin{adjustbox}{max width=\linewidth}\begin{tabular}{rrrrrrr}
LATS & LLMCompiler & AgentPrune & HiAgent & FoldAgent & DyFlow & LatentMAS \\
0 & 0 & 1 & 0 & 0 & 1 & 0 \\
\end{tabular}\end{adjustbox}
\end{examplebox}
\begin{center}\captionsetup{hypcap=false}
\includegraphics[width=\linewidth]{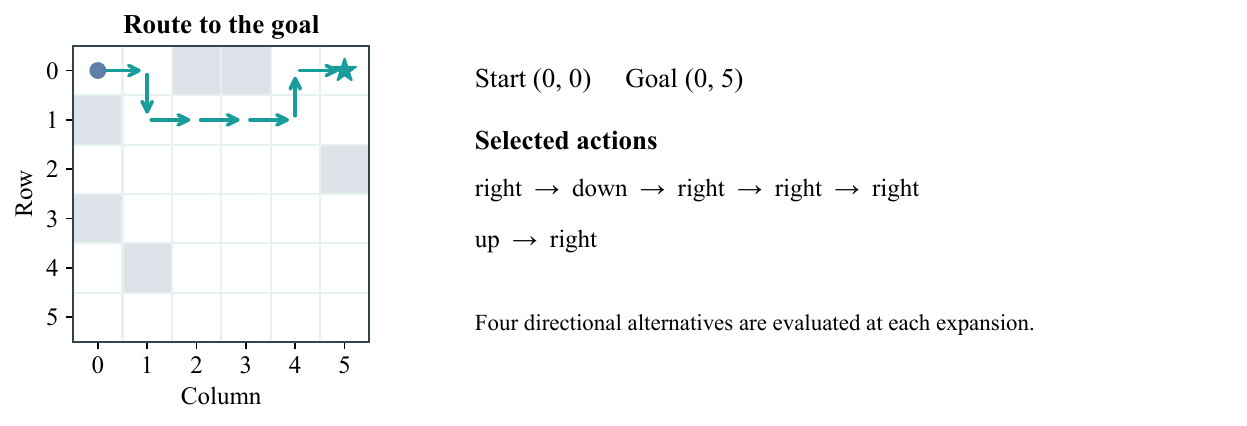}
\captionof{figure}{PPNL path around the upper row obstacles. The circle and star mark the start and goal. Arrows show the submitted moves.}
\end{center}
\begin{examplebox}[frametitle={Round 1},linecolor=black,backgroundcolor=caseblue!9,frametitlebackgroundcolor=caseblue!20,nobreak=true]
\small Initial state.
\par\smallskip Fields \texttt{f0} (movement direction) $\in$ \{\texttt{up}, \texttt{down}, \texttt{left}, \texttt{right}\}.
\par Action template \texttt{execute(actions = \$\{f0\})}.
\par\smallskip\noindent
\begin{tabularx}{\linewidth}{@{}l l r X@{}}
Branch & Action & $\pi_{\theta}$ & Returned observation \\ \hline
n0 & \texttt{down} & 0.480 & After 0 steps, I am at (0,0). If I perform the next step I will run into the obstacle at (1,0). \\
n1 & \texttt{left} & 0.002 & After 0 steps, I am at (0,0). Performing the next action would lead me outside the grid. \\
\textbf{n2} & \texttt{right} & 0.510 & Position (0,1). Goal not reached. \\
n3 & \texttt{up} & 0.008 & After 0 steps, I am at (0,0). Performing the next action would lead me outside the grid. \\
\end{tabularx}
\end{examplebox}
\begin{examplebox}[frametitle={Round 2},linecolor=black,backgroundcolor=casegreen!9,frametitlebackgroundcolor=casegreen!20,nobreak=true]
\small Selected branch \texttt{n2}, selection probability 0.996.
\par\smallskip\noindent
\begin{tabularx}{\linewidth}{@{}l l r X@{}}
Branch & Action & $\pi_{\theta}$ & Returned observation \\ \hline
\textbf{n4} & \texttt{down} & 0.049 & Position (1,1). Goal not reached. \\
n5 & \texttt{left} & 0.008 & Position (0,0). Goal not reached. \\
n6 & \texttt{right} & 0.933 & After 0 steps, I am at (0,1). If I perform the next step I will run into the obstacle at (0,2). \\
n7 & \texttt{up} & 0.010 & After 0 steps, I am at (0,1). Performing the next action would lead me outside the grid. \\
\end{tabularx}
\end{examplebox}
\begin{examplebox}[frametitle={Round 3},linecolor=black,backgroundcolor=caseblue!9,frametitlebackgroundcolor=caseblue!20,nobreak=true]
\small Selected branch \texttt{n4}, selection probability 0.970.
\par\smallskip\noindent
\begin{tabularx}{\linewidth}{@{}l l r X@{}}
Branch & Action & $\pi_{\theta}$ & Returned observation \\ \hline
n8 & \texttt{down} & 0.523 & Position (2,1). Goal not reached. \\
n9 & \texttt{left} & 0.029 & After 0 steps, I am at (1,1). If I perform the next step I will run into the obstacle at (1,0). \\
\textbf{n10} & \texttt{right} & 0.440 & Position (1,2). Goal not reached. \\
n11 & \texttt{up} & 0.008 & Position (0,1). Goal not reached. \\
\end{tabularx}
\end{examplebox}
\begin{examplebox}[frametitle={Round 4},linecolor=black,backgroundcolor=casegreen!9,frametitlebackgroundcolor=casegreen!20,nobreak=true]
\small Selected branch \texttt{n10}, selection probability 0.510.
\par\smallskip\noindent
\begin{tabularx}{\linewidth}{@{}l l r X@{}}
Branch & Action & $\pi_{\theta}$ & Returned observation \\ \hline
n12 & \texttt{down} & 0.517 & Position (2,2). Goal not reached. \\
n13 & \texttt{left} & 0.004 & Position (1,1). Goal not reached. \\
\textbf{n14} & \texttt{right} & 0.455 & Position (1,3). Goal not reached. \\
n15 & \texttt{up} & 0.024 & After 0 steps, I am at (1,2). If I perform the next step I will run into the obstacle at (0,2). \\
\end{tabularx}
\end{examplebox}
\begin{examplebox}[frametitle={Round 5},linecolor=black,backgroundcolor=caseblue!9,frametitlebackgroundcolor=caseblue!20,nobreak=true]
\small Selected branch \texttt{n14}, selection probability 0.562.
\par\smallskip\noindent
\begin{tabularx}{\linewidth}{@{}l l r X@{}}
Branch & Action & $\pi_{\theta}$ & Returned observation \\ \hline
n16 & \texttt{down} & 0.406 & Position (2,3). Goal not reached. \\
n17 & \texttt{left} & 0.007 & Position (1,2). Goal not reached. \\
\textbf{n18} & \texttt{right} & 0.544 & Position (1,4). Goal not reached. \\
n19 & \texttt{up} & 0.043 & After 0 steps, I am at (1,3). If I perform the next step I will run into the obstacle at (0,3). \\
\end{tabularx}
\end{examplebox}
\begin{examplebox}[frametitle={Round 6},linecolor=black,backgroundcolor=casegreen!9,frametitlebackgroundcolor=casegreen!20,nobreak=true]
\small Selected branch \texttt{n18}, selection probability 0.824.
\par\smallskip\noindent
\begin{tabularx}{\linewidth}{@{}l l r X@{}}
Branch & Action & $\pi_{\theta}$ & Returned observation \\ \hline
n20 & \texttt{down} & 0.155 & Position (2,4). Goal not reached. \\
n21 & \texttt{left} & 0.006 & Position (1,3). Goal not reached. \\
n22 & \texttt{right} & 0.758 & Position (1,5). Goal not reached. \\
\textbf{n23} & \texttt{up} & 0.081 & Position (0,4). Goal not reached. \\
\end{tabularx}
\end{examplebox}
\begin{examplebox}[frametitle={Round 7},linecolor=black,backgroundcolor=caseblue!9,frametitlebackgroundcolor=caseblue!20,nobreak=true]
\small Selected branch \texttt{n23}, selection probability 0.992.
\par\smallskip\noindent
\begin{tabularx}{\linewidth}{@{}l l r X@{}}
Branch & Action & $\pi_{\theta}$ & Returned observation \\ \hline
n24 & \texttt{down} & 0.009 & Position (1,4). Goal not reached. \\
n25 & \texttt{left} & 0.004 & After 0 steps, I am at (0,4). If I perform the next step I will run into the obstacle at (0,3). \\
\textbf{n26} & \texttt{right} & 0.980 & Position (0,5). Goal reached. \\
n27 & \texttt{up} & 0.008 & After 0 steps, I am at (0,4). Performing the next action would lead me outside the grid. \\
\end{tabularx}
\end{examplebox}
\begin{examplebox}[frametitle={Round 8},linecolor=black,backgroundcolor=casegreen!9,frametitlebackgroundcolor=casegreen!20,nobreak=true]
\small Selected branch \texttt{n26}, selection probability 0.994.
\par Submitted actions \texttt{right}, \texttt{down}, \texttt{right}, \texttt{right}, \texttt{right}, \texttt{up}, \texttt{right}.
\end{examplebox}
\clearpage
\subsection{Maze}
Only local wall observations are available. An initial right move is blocked. Subsequent selections follow an observed corridor to the goal without requiring a complete map.
\begin{examplebox}[frametitle={Task and outcome},linecolor=black]
Initial state 10. JevSpawn score 1.
\par\smallskip\centering
\small\begin{adjustbox}{max width=\linewidth}\begin{tabular}{rrrrrrr}
LATS & LLMCompiler & AgentPrune & HiAgent & FoldAgent & DyFlow & LatentMAS \\
0 & 0 & 0 & 0 & 0 & 0 & 0 \\
\end{tabular}\end{adjustbox}
\end{examplebox}
\begin{center}\captionsetup{hypcap=false}
\includegraphics[width=\linewidth]{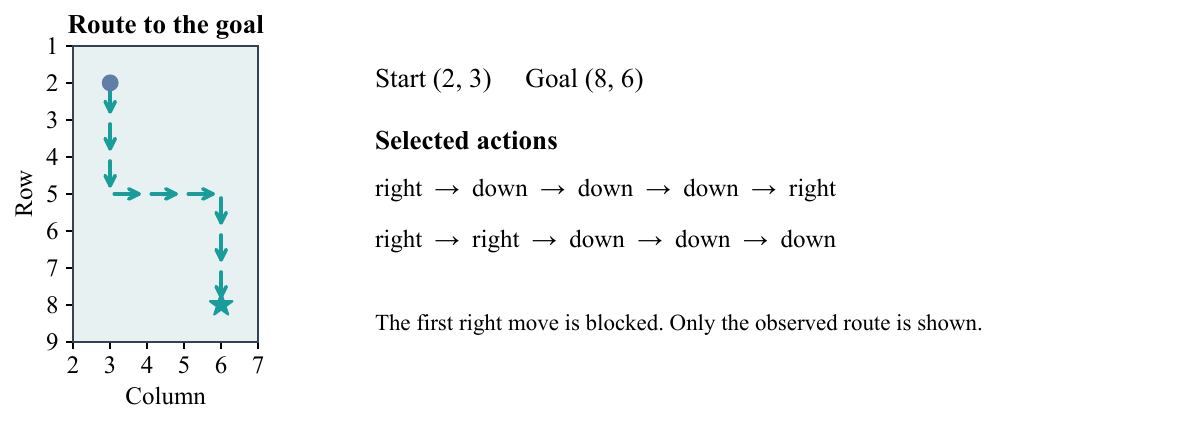}
\captionof{figure}{Maze navigation from local observations. The route contains one blocked right move followed by nine moves to the goal.}
\end{center}
\begin{examplebox}[frametitle={Round 1},linecolor=black,backgroundcolor=caseblue!9,frametitlebackgroundcolor=caseblue!20,nobreak=true]
\small Initial state.
\par\smallskip Fields \texttt{f0} (movement command) $\in$ \{\texttt{move left}, \texttt{move right}, \texttt{move up}, \texttt{move down}\}.
\par Action template \texttt{move(action = \$\{f0\})}.
\par\smallskip\noindent
\begin{tabularx}{\linewidth}{@{}l l r X@{}}
Branch & Action & $\pi_{\theta}$ & Returned observation \\ \hline
n0 & \texttt{move down} & 0.455 & Position 3, 3. Walls to your right, to your left. \\
n1 & \texttt{move left} & 0.031 & Position 2, 3. Walls to your right, to your left. \\
\textbf{n2} & \texttt{move right} & 0.234 & Position 2, 3. Walls to your right, to your left. \\
n3 & \texttt{move up} & 0.279 & Position 1, 3. There is a wall above you. \\
\end{tabularx}
\end{examplebox}
\begin{examplebox}[frametitle={Round 2},linecolor=black,backgroundcolor=casegreen!9,frametitlebackgroundcolor=casegreen!20,nobreak=true]
\small Selected branch \texttt{n2}, selection probability 0.349.
\par\smallskip\noindent
\begin{tabularx}{\linewidth}{@{}l l r X@{}}
Branch & Action & $\pi_{\theta}$ & Returned observation \\ \hline
\textbf{n4} & \texttt{move down} & 0.531 & Position 3, 3. Walls to your right, to your left. \\
n5 & \texttt{move left} & 0.008 & Position 2, 3. Walls to your right, to your left. \\
n6 & \texttt{move right} & 0.015 & Position 2, 3. Walls to your right, to your left. \\
n7 & \texttt{move up} & 0.445 & Position 1, 3. There is a wall above you. \\
\end{tabularx}
\end{examplebox}
\begin{examplebox}[frametitle={Round 3},linecolor=black,backgroundcolor=caseblue!9,frametitlebackgroundcolor=caseblue!20,nobreak=true]
\small Selected branch \texttt{n4}, selection probability 0.411.
\par\smallskip\noindent
\begin{tabularx}{\linewidth}{@{}l l r X@{}}
Branch & Action & $\pi_{\theta}$ & Returned observation \\ \hline
\textbf{n8} & \texttt{move down} & 0.385 & Position 4, 3. Walls to your right, to your left. \\
n9 & \texttt{move left} & 0.064 & Position 3, 3. Walls to your right, to your left. \\
n10 & \texttt{move right} & 0.318 & Position 3, 3. Walls to your right, to your left. \\
n11 & \texttt{move up} & 0.234 & Position 2, 3. Walls to your right, to your left. \\
\end{tabularx}
\end{examplebox}
\begin{examplebox}[frametitle={Round 4},linecolor=black,backgroundcolor=casegreen!9,frametitlebackgroundcolor=casegreen!20,nobreak=true]
\small Selected branch \texttt{n8}, selection probability 0.486.
\par\smallskip\noindent
\begin{tabularx}{\linewidth}{@{}l l r X@{}}
Branch & Action & $\pi_{\theta}$ & Returned observation \\ \hline
\textbf{n12} & \texttt{move down} & 0.848 & Position 5, 3. Walls to your left, below you. \\
n13 & \texttt{move left} & 0.016 & Position 4, 3. Walls to your right, to your left. \\
n14 & \texttt{move right} & 0.092 & Position 4, 3. Walls to your right, to your left. \\
n15 & \texttt{move up} & 0.044 & Position 3, 3. Walls to your right, to your left. \\
\end{tabularx}
\end{examplebox}
\begin{examplebox}[frametitle={Round 5},linecolor=black,backgroundcolor=caseblue!9,frametitlebackgroundcolor=caseblue!20,nobreak=true]
\small Selected branch \texttt{n12}, selection probability 0.310.
\par\smallskip\noindent
\begin{tabularx}{\linewidth}{@{}l l r X@{}}
Branch & Action & $\pi_{\theta}$ & Returned observation \\ \hline
n16 & \texttt{move down} & 0.109 & Position 5, 3. Walls to your left, below you. \\
n17 & \texttt{move left} & 0.011 & Position 5, 3. Walls to your left, below you. \\
\textbf{n18} & \texttt{move right} & 0.788 & Position 5, 4. Walls above you, below you. \\
n19 & \texttt{move up} & 0.092 & Position 4, 3. Walls to your right, to your left. \\
\end{tabularx}
\end{examplebox}
\begin{examplebox}[frametitle={Round 6},linecolor=black,backgroundcolor=casegreen!9,frametitlebackgroundcolor=casegreen!20,nobreak=true]
\small Selected branch \texttt{n18}, selection probability 0.422.
\par\smallskip\noindent
\begin{tabularx}{\linewidth}{@{}l l r X@{}}
Branch & Action & $\pi_{\theta}$ & Returned observation \\ \hline
n20 & \texttt{move down} & 0.026 & Position 5, 4. Walls above you, below you. \\
n21 & \texttt{move left} & 0.053 & Position 5, 3. Walls to your left, below you. \\
\textbf{n22} & \texttt{move right} & 0.915 & Position 5, 5. Walls above you, below you. \\
n23 & \texttt{move up} & 0.006 & Position 5, 4. Walls above you, below you. \\
\end{tabularx}
\end{examplebox}
\begin{examplebox}[frametitle={Round 7},linecolor=black,backgroundcolor=caseblue!9,frametitlebackgroundcolor=caseblue!20,nobreak=true]
\small Selected branch \texttt{n22}, selection probability 0.570.
\par\smallskip\noindent
\begin{tabularx}{\linewidth}{@{}l l r X@{}}
Branch & Action & $\pi_{\theta}$ & Returned observation \\ \hline
n24 & \texttt{move down} & 0.024 & Position 5, 5. Walls above you, below you. \\
n25 & \texttt{move left} & 0.030 & Position 5, 4. Walls above you, below you. \\
\textbf{n26} & \texttt{move right} & 0.938 & Position 5, 6. There is a wall above you. \\
n27 & \texttt{move up} & 0.007 & Position 5, 5. Walls above you, below you. \\
\end{tabularx}
\end{examplebox}
\begin{examplebox}[frametitle={Round 8},linecolor=black,backgroundcolor=casegreen!9,frametitlebackgroundcolor=casegreen!20,nobreak=true]
\small Selected branch \texttt{n26}, selection probability 0.767.
\par\smallskip\noindent
\begin{tabularx}{\linewidth}{@{}l l r X@{}}
Branch & Action & $\pi_{\theta}$ & Returned observation \\ \hline
\textbf{n28} & \texttt{move down} & 0.108 & Position 6, 6. Walls to your right, to your left. \\
n29 & \texttt{move left} & 0.020 & Position 5, 5. Walls above you, below you. \\
n30 & \texttt{move right} & 0.870 & Position 5, 7. Walls above you, below you. \\
n31 & \texttt{move up} & 0.003 & Position 5, 6. There is a wall above you. \\
\end{tabularx}
\end{examplebox}
\begin{examplebox}[frametitle={Round 9},linecolor=black,backgroundcolor=caseblue!9,frametitlebackgroundcolor=caseblue!20,nobreak=true]
\small Selected branch \texttt{n28}, selection probability 0.618.
\par\smallskip\noindent
\begin{tabularx}{\linewidth}{@{}l l r X@{}}
Branch & Action & $\pi_{\theta}$ & Returned observation \\ \hline
\textbf{n32} & \texttt{move down} & 0.454 & Position 7, 6. Walls to your right, to your left. \\
n33 & \texttt{move left} & 0.059 & Position 6, 6. Walls to your right, to your left. \\
n34 & \texttt{move right} & 0.185 & Position 6, 6. Walls to your right, to your left. \\
n35 & \texttt{move up} & 0.301 & Position 5, 6. There is a wall above you. \\
\end{tabularx}
\end{examplebox}
\begin{examplebox}[frametitle={Round 10},linecolor=black,backgroundcolor=casegreen!9,frametitlebackgroundcolor=casegreen!20,nobreak=true]
\small Selected branch \texttt{n32}, selection probability 0.405.
\par\smallskip\noindent
\begin{tabularx}{\linewidth}{@{}l l r X@{}}
Branch & Action & $\pi_{\theta}$ & Returned observation \\ \hline
\textbf{n36} & \texttt{move down} & 0.697 & Success
 \\
n37 & \texttt{move left} & 0.060 & Position 7, 6. Walls to your right, to your left. \\
n38 & \texttt{move right} & 0.176 & Position 7, 6. Walls to your right, to your left. \\
n39 & \texttt{move up} & 0.067 & Position 6, 6. Walls to your right, to your left. \\
\end{tabularx}
\end{examplebox}
\begin{examplebox}[frametitle={Round 11},linecolor=black,backgroundcolor=caseblue!9,frametitlebackgroundcolor=caseblue!20,nobreak=true]
\small Selected branch \texttt{n36}, selection probability 0.983.
\par Submitted actions \texttt{move right}, \texttt{move down}, \texttt{move down}, \texttt{move down}, \texttt{move right}, \texttt{move right}, \texttt{move right}, \texttt{move down}, \texttt{move down}, \texttt{move down}.
\end{examplebox}
\clearpage
\subsection{Grid}
Room descriptions reveal adjacent doors and visible objects. The six submitted actions include one blocked move before the treasure is reached. The action declaration remains unchanged throughout exploration. Direction indices 0, 1, 2, and 3 denote north, east, west, and south, respectively.
\begin{examplebox}[frametitle={Task and outcome},linecolor=black]
Environment seed 104. JevSpawn score 1.
\par\smallskip\centering
\small\begin{adjustbox}{max width=\linewidth}\begin{tabular}{rrrrrrr}
LATS & LLMCompiler & AgentPrune & HiAgent & FoldAgent & DyFlow & LatentMAS \\
0 & 0 & 0 & 0 & 0 & 0 & 0 \\
\end{tabular}\end{adjustbox}
\end{examplebox}
\begin{center}\captionsetup{hypcap=false}
\includegraphics[width=\linewidth]{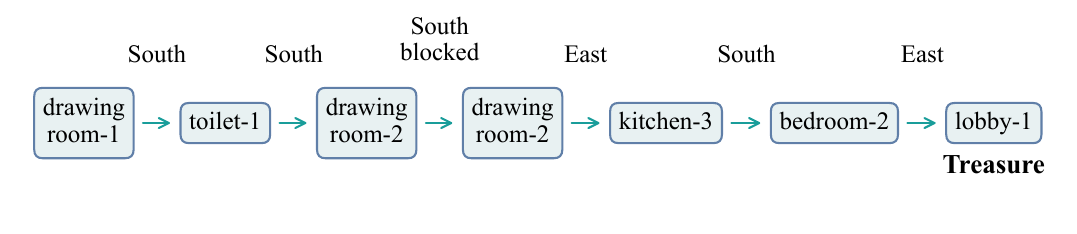}
\captionof{figure}{Grid navigation through observed rooms. Direction labels follow the selected actions, with the treasure found in the final room.}
\end{center}
\begin{examplebox}[frametitle={Round 1},linecolor=black,backgroundcolor=caseblue!9,frametitlebackgroundcolor=caseblue!20,nobreak=true]
\small Initial state.
\par\smallskip Fields \texttt{f0} (direction index) $\in$ \{\texttt{0}, \texttt{1}, \texttt{2}, \texttt{3}\}.
\par Action template \texttt{execute(action = \$\{f0\})}.
\par\smallskip\noindent
\begin{tabularx}{\linewidth}{@{}l l r X@{}}
Branch & Action & $\pi_{\theta}$ & Returned observation \\ \hline
n0 & \texttt{0} & 0.843 & kitchen-2. No treasure observed. \\
n1 & \texttt{1} & 0.023 & You remained in room drawing room-1 as there is no door in the direction east. \\
n2 & \texttt{2} & 0.000 & You remained in room drawing room-1 as there is no door in the direction west. \\
\textbf{n3} & \texttt{3} & 0.134 & toilet-1. No treasure observed. \\
\end{tabularx}
\end{examplebox}
\begin{examplebox}[frametitle={Round 2},linecolor=black,backgroundcolor=casegreen!9,frametitlebackgroundcolor=casegreen!20,nobreak=true]
\small Selected branch \texttt{n3}, selection probability 0.500.
\par\smallskip\noindent
\begin{tabularx}{\linewidth}{@{}l l r X@{}}
Branch & Action & $\pi_{\theta}$ & Returned observation \\ \hline
n4 & \texttt{0} & 0.147 & drawing room-1. No treasure observed. \\
n5 & \texttt{1} & 0.002 & You remained in room toilet-1 as there is no door in the direction east. \\
n6 & \texttt{2} & 0.623 & kitchen-1. No treasure observed. \\
\textbf{n7} & \texttt{3} & 0.227 & drawing room-2. No treasure observed. \\
\end{tabularx}
\end{examplebox}
\begin{examplebox}[frametitle={Round 3},linecolor=black,backgroundcolor=caseblue!9,frametitlebackgroundcolor=caseblue!20,nobreak=true]
\small Selected branch \texttt{n7}, selection probability 0.587.
\par\smallskip\noindent
\begin{tabularx}{\linewidth}{@{}l l r X@{}}
Branch & Action & $\pi_{\theta}$ & Returned observation \\ \hline
n8 & \texttt{0} & 0.033 & toilet-1. No treasure observed. \\
n9 & \texttt{1} & 0.956 & kitchen-3. No treasure observed. \\
n10 & \texttt{2} & 0.002 & You remained in room drawing room-2 as there is no door in the direction west. \\
\textbf{n11} & \texttt{3} & 0.009 & You remained in room drawing room-2 as there is no door in the direction south. \\
\end{tabularx}
\end{examplebox}
\begin{examplebox}[frametitle={Round 4},linecolor=black,backgroundcolor=casegreen!9,frametitlebackgroundcolor=casegreen!20,nobreak=true]
\small Selected branch \texttt{n11}, selection probability 0.529.
\par\smallskip\noindent
\begin{tabularx}{\linewidth}{@{}l l r X@{}}
Branch & Action & $\pi_{\theta}$ & Returned observation \\ \hline
n12 & \texttt{0} & 0.619 & toilet-1. No treasure observed. \\
\textbf{n13} & \texttt{1} & 0.322 & kitchen-3. No treasure observed. \\
n14 & \texttt{2} & 0.034 & You remained in room drawing room-2 as there is no door in the direction west. \\
n15 & \texttt{3} & 0.024 & You remained in room drawing room-2 as there is no door in the direction south. \\
\end{tabularx}
\end{examplebox}
\begin{examplebox}[frametitle={Round 5},linecolor=black,backgroundcolor=caseblue!9,frametitlebackgroundcolor=caseblue!20,nobreak=true]
\small Selected branch \texttt{n13}, selection probability 0.324.
\par\smallskip\noindent
\begin{tabularx}{\linewidth}{@{}l l r X@{}}
Branch & Action & $\pi_{\theta}$ & Returned observation \\ \hline
n16 & \texttt{0} & 0.009 & You remained in room kitchen-3 as there is no door in the direction north. \\
n17 & \texttt{1} & 0.650 & balcony-1. No treasure observed. \\
n18 & \texttt{2} & 0.013 & drawing room-2. No treasure observed. \\
\textbf{n19} & \texttt{3} & 0.328 & bedroom-2. No treasure observed. \\
\end{tabularx}
\end{examplebox}
\begin{examplebox}[frametitle={Round 6},linecolor=black,backgroundcolor=casegreen!9,frametitlebackgroundcolor=casegreen!20,nobreak=true]
\small Selected branch \texttt{n19}, selection probability 0.914.
\par\smallskip\noindent
\begin{tabularx}{\linewidth}{@{}l l r X@{}}
Branch & Action & $\pi_{\theta}$ & Returned observation \\ \hline
n20 & \texttt{0} & 0.022 & kitchen-3. No treasure observed. \\
\textbf{n21} & \texttt{1} & 0.955 & lobby-1. Treasure found. \\
n22 & \texttt{2} & 0.004 & You remained in room bedroom-2 as there is no door in the direction west. \\
n23 & \texttt{3} & 0.019 & You remained in room bedroom-2 as there is no door in the direction south. \\
\end{tabularx}
\end{examplebox}
\begin{examplebox}[frametitle={Round 7},linecolor=black,backgroundcolor=caseblue!9,frametitlebackgroundcolor=caseblue!20,nobreak=true]
\small Selected branch \texttt{n21}, selection probability 0.854.
\par Submitted actions \texttt{3}, \texttt{3}, \texttt{3}, \texttt{1}, \texttt{3}, \texttt{1}.
\end{examplebox}
\clearpage
\subsection{LightsOut}
Row and column fields define sixteen possible button presses. Three presses extinguish the board. Four actions are explored at each expansion, and subsequent states are selected from the returned boards.
\begin{examplebox}[frametitle={Task and outcome},linecolor=black]
Environment seed 124. JevSpawn score 1.
\par\smallskip\centering
\small\begin{adjustbox}{max width=\linewidth}\begin{tabular}{rrrrrrr}
LATS & LLMCompiler & AgentPrune & HiAgent & FoldAgent & DyFlow & LatentMAS \\
0 & 0 & 0 & 0 & 0 & 0 & 0 \\
\end{tabular}\end{adjustbox}
\end{examplebox}
\begin{center}\captionsetup{hypcap=false}
\includegraphics[width=\linewidth]{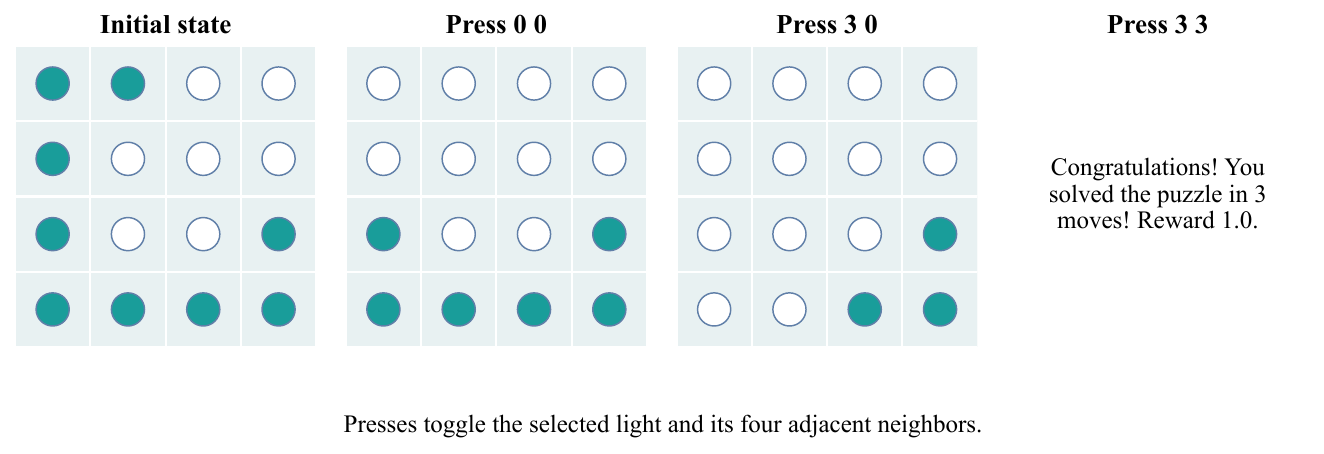}
\captionof{figure}{LightsOut solution in three presses. Filled circles denote lights remaining on the observed boards. The final panel shows the environment's success message.}
\end{center}
\begin{examplebox}[frametitle={Round 1},linecolor=black,backgroundcolor=caseblue!9,frametitlebackgroundcolor=caseblue!20,nobreak=true]
\small Initial state.
\par\smallskip Fields \texttt{f0} (row) $\in$ \{\texttt{0}, \texttt{1}, \texttt{2}, \texttt{3}\}, \texttt{f1} (column) $\in$ \{\texttt{0}, \texttt{1}, \texttt{2}, \texttt{3}\}.
\par Action template \texttt{execute(action = \$\{f0\} \$\{f1\})}.
\par\smallskip\noindent
\begin{tabularx}{\linewidth}{@{}l l r X@{}}
Branch & Action & $\pi_{\theta}$ & Returned observation \\ \hline
\textbf{n0} & \texttt{0 0} & 0.852 & 6 lights on. \\
n1 & \texttt{1 0} & 0.018 & 7 lights on. \\
n2 & \texttt{3 0} & 0.059 & 6 lights on. \\
n3 & \texttt{3 3} & 0.013 & 6 lights on. \\
\end{tabularx}
\end{examplebox}
\begin{examplebox}[frametitle={Round 2},linecolor=black,backgroundcolor=casegreen!9,frametitlebackgroundcolor=casegreen!20,nobreak=true]
\small Selected branch \texttt{n0}, selection probability 0.803.
\par\smallskip\noindent
\begin{tabularx}{\linewidth}{@{}l l r X@{}}
Branch & Action & $\pi_{\theta}$ & Returned observation \\ \hline
n4 & \texttt{2 0} & 0.672 & 6 lights on. \\
n5 & \texttt{2 3} & 0.046 & 6 lights on. \\
\textbf{n6} & \texttt{3 0} & 0.138 & 3 lights on. \\
n7 & \texttt{3 1} & 0.058 & 4 lights on. \\
\end{tabularx}
\end{examplebox}
\begin{examplebox}[frametitle={Round 3},linecolor=black,backgroundcolor=caseblue!9,frametitlebackgroundcolor=caseblue!20,nobreak=true]
\small Selected branch \texttt{n6}, selection probability 0.891.
\par\smallskip\noindent
\begin{tabularx}{\linewidth}{@{}l l r X@{}}
Branch & Action & $\pi_{\theta}$ & Returned observation \\ \hline
n8 & \texttt{2 3} & 0.546 & 3 lights on. \\
n9 & \texttt{3 1} & 0.033 & 5 lights on. \\
n10 & \texttt{3 2} & 0.332 & 3 lights on. \\
\textbf{n11} & \texttt{3 3} & 0.063 & Congratulations! You solved the puzzle in 3 moves! Reward 1.0. \\
\end{tabularx}
\end{examplebox}
\begin{examplebox}[frametitle={Round 4},linecolor=black,backgroundcolor=casegreen!9,frametitlebackgroundcolor=casegreen!20,nobreak=true]
\small Selected branch \texttt{n11}, selection probability 0.995.
\par Submitted actions \texttt{0 0}, \texttt{3 0}, \texttt{3 3}.
\end{examplebox}
\clearpage
\subsection{RushHour}
Vehicle and direction fields express the available sliding actions. Exploration returns to retained alternatives before selecting a path of nine actions that releases the red car.
\begin{examplebox}[frametitle={Task and outcome},linecolor=black]
Environment seed 138. JevSpawn score 1.
\par\smallskip\centering
\small\begin{adjustbox}{max width=\linewidth}\begin{tabular}{rrrrrrr}
LATS & LLMCompiler & AgentPrune & HiAgent & FoldAgent & DyFlow & LatentMAS \\
0 & 0 & 0.5 & 0.5 & 0.5 & 0 & 0.5 \\
\end{tabular}\end{adjustbox}
\end{examplebox}
\begin{center}\captionsetup{hypcap=false}
\includegraphics[width=\linewidth]{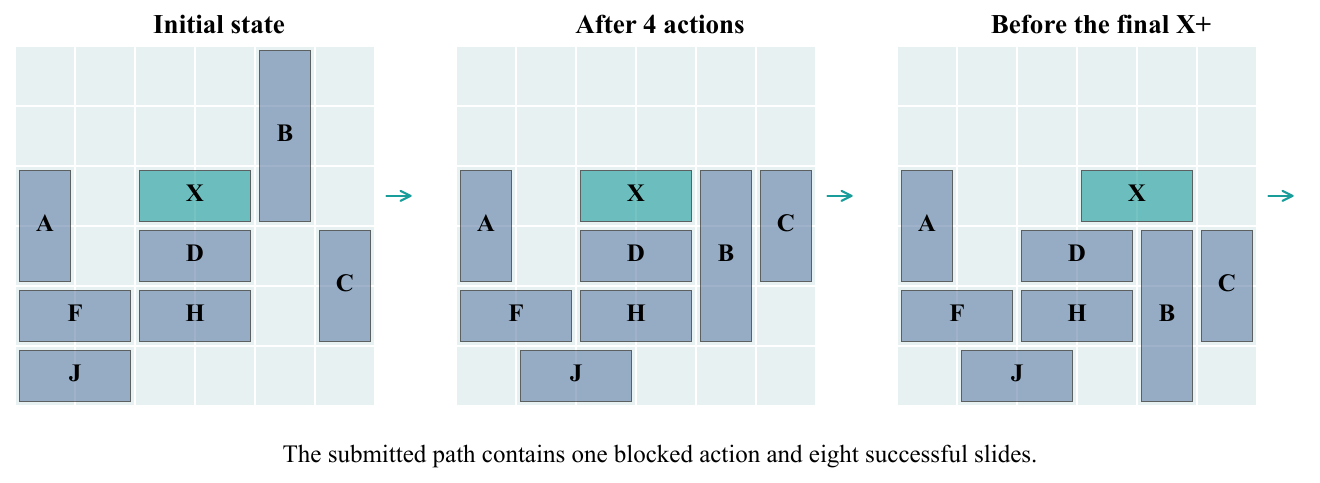}
\captionof{figure}{RushHour states along the submitted branch. The middle panel follows J+, B+, C-, B+. The car marked X exits after the final forward move.}
\end{center}
\begin{examplebox}[frametitle={Round 1},linecolor=black,backgroundcolor=caseblue!9,frametitlebackgroundcolor=caseblue!20,nobreak=true]
\small Initial state.
\par\smallskip Fields \texttt{f0} (vehicle) $\in$ \{\texttt{A}, \texttt{B}, \texttt{C}, \texttt{D}, \texttt{F}, \texttt{H}, \texttt{J}, \texttt{X}\}, \texttt{f1} (direction) $\in$ \{\texttt{+}, \texttt{-}\}.
\par Action template \texttt{execute(action = \$\{f0\}\$\{f1\})}.
\par\smallskip\noindent
\begin{tabularx}{\linewidth}{@{}l l r X@{}}
Branch & Action & $\pi_{\theta}$ & Returned observation \\ \hline
n0 & \texttt{B+} & 0.302 & Red car [(2, 2), (2, 3)]. Blocking car B [(1, 4), (2, 4), (3, 4)]. \\
n1 & \texttt{B-} & 0.239 & Invalid move. Move blocked. \\
n2 & \texttt{D+} & 0.088 & Red car [(2, 2), (2, 3)]. Blocking car B [(0, 4), (1, 4), (2, 4)]. \\
\textbf{n3} & \texttt{J+} & 0.134 & Red car [(2, 2), (2, 3)]. Blocking car B [(0, 4), (1, 4), (2, 4)]. \\
\end{tabularx}
\end{examplebox}
\begin{examplebox}[frametitle={Round 2},linecolor=black,backgroundcolor=casegreen!9,frametitlebackgroundcolor=casegreen!20,nobreak=true]
\small Selected branch \texttt{n3}, selection probability 0.666.
\par\smallskip\noindent
\begin{tabularx}{\linewidth}{@{}l l r X@{}}
Branch & Action & $\pi_{\theta}$ & Returned observation \\ \hline
\textbf{n4} & \texttt{B+} & 0.138 & Red car [(2, 2), (2, 3)]. Blocking car B [(1, 4), (2, 4), (3, 4)]. \\
n5 & \texttt{C-} & 0.122 & Red car [(2, 2), (2, 3)]. Blocking car B [(0, 4), (1, 4), (2, 4)]. \\
n6 & \texttt{D-} & 0.122 & Red car [(2, 2), (2, 3)]. Blocking car B [(0, 4), (1, 4), (2, 4)]. \\
n7 & \texttt{H+} & 0.106 & Red car [(2, 2), (2, 3)]. Blocking car B [(0, 4), (1, 4), (2, 4)]. \\
\end{tabularx}
\end{examplebox}
\begin{examplebox}[frametitle={Round 3},linecolor=black,backgroundcolor=caseblue!9,frametitlebackgroundcolor=caseblue!20,nobreak=true]
\small Selected branch \texttt{n4}, selection probability 0.488.
\par\smallskip\noindent
\begin{tabularx}{\linewidth}{@{}l l r X@{}}
Branch & Action & $\pi_{\theta}$ & Returned observation \\ \hline
n8 & \texttt{B-} & 0.116 & Red car [(2, 2), (2, 3)]. Blocking car B [(0, 4), (1, 4), (2, 4)]. \\
\textbf{n9} & \texttt{C-} & 0.226 & Red car [(2, 2), (2, 3)]. Blocking car B [(1, 4), (2, 4), (3, 4)]. \\
n10 & \texttt{D-} & 0.120 & Red car [(2, 2), (2, 3)]. Blocking car B [(1, 4), (2, 4), (3, 4)]. \\
n11 & \texttt{X+} & 0.150 & Invalid move. Move blocked. \\
\end{tabularx}
\end{examplebox}
\begin{examplebox}[frametitle={Round 4 (recovery)},linecolor=black,backgroundcolor=casegreen!9,frametitlebackgroundcolor=casegreen!20,nobreak=true]
\small Selected branch \texttt{n1}, selection probability 0.693.
\par\smallskip\noindent
\begin{tabularx}{\linewidth}{@{}l l r X@{}}
Branch & Action & $\pi_{\theta}$ & Returned observation \\ \hline
n12 & \texttt{A-} & 0.060 & Red car [(2, 2), (2, 3)]. Blocking car B [(0, 4), (1, 4), (2, 4)]. \\
n13 & \texttt{B+} & 0.101 & Red car [(2, 2), (2, 3)]. Blocking car B [(1, 4), (2, 4), (3, 4)]. \\
n14 & \texttt{D+} & 0.533 & Red car [(2, 2), (2, 3)]. Blocking car B [(0, 4), (1, 4), (2, 4)]. \\
n15 & \texttt{D-} & 0.076 & Red car [(2, 2), (2, 3)]. Blocking car B [(0, 4), (1, 4), (2, 4)]. \\
\end{tabularx}
\end{examplebox}
\begin{examplebox}[frametitle={Round 5 (recovery)},linecolor=black,backgroundcolor=caseblue!9,frametitlebackgroundcolor=caseblue!20,nobreak=true]
\small Selected branch \texttt{n5}, selection probability 0.435.
\par\smallskip\noindent
\begin{tabularx}{\linewidth}{@{}l l r X@{}}
Branch & Action & $\pi_{\theta}$ & Returned observation \\ \hline
n16 & \texttt{B+} & 0.142 & Red car [(2, 2), (2, 3)]. Blocking car B [(1, 4), (2, 4), (3, 4)]. \\
n17 & \texttt{B-} & 0.264 & Invalid move. Move blocked. \\
n18 & \texttt{C-} & 0.083 & Red car [(2, 2), (2, 3)]. Blocking car B [(0, 4), (1, 4), (2, 4)]. \\
n19 & \texttt{D-} & 0.197 & Red car [(2, 2), (2, 3)]. Blocking car B [(0, 4), (1, 4), (2, 4)]. \\
\end{tabularx}
\end{examplebox}
\begin{examplebox}[frametitle={Round 6 (recovery)},linecolor=black,backgroundcolor=casegreen!9,frametitlebackgroundcolor=casegreen!20,nobreak=true]
\small Selected branch \texttt{n9}, selection probability 0.395.
\par\smallskip\noindent
\begin{tabularx}{\linewidth}{@{}l l r X@{}}
Branch & Action & $\pi_{\theta}$ & Returned observation \\ \hline
\textbf{n20} & \texttt{B+} & 0.205 & Red car [(2, 2), (2, 3)]. Blocking car B [(2, 4), (3, 4), (4, 4)]. \\
n21 & \texttt{B-} & 0.328 & Red car [(2, 2), (2, 3)]. Blocking car B [(0, 4), (1, 4), (2, 4)]. \\
n22 & \texttt{D-} & 0.214 & Red car [(2, 2), (2, 3)]. Blocking car B [(1, 4), (2, 4), (3, 4)]. \\
n23 & \texttt{X+} & 0.095 & Invalid move. Move blocked. \\
\end{tabularx}
\end{examplebox}
\begin{examplebox}[frametitle={Round 7 (recovery)},linecolor=black,backgroundcolor=caseblue!9,frametitlebackgroundcolor=caseblue!20,nobreak=true]
\small Selected branch \texttt{n18}, selection probability 0.403.
\par\smallskip\noindent
\begin{tabularx}{\linewidth}{@{}l l r X@{}}
Branch & Action & $\pi_{\theta}$ & Returned observation \\ \hline
n24 & \texttt{B+} & 0.436 & Red car [(2, 2), (2, 3)]. Blocking car B [(1, 4), (2, 4), (3, 4)]. \\
n25 & \texttt{B-} & 0.214 & Invalid move. Move blocked. \\
n26 & \texttt{D+} & 0.058 & Red car [(2, 2), (2, 3)]. Blocking car B [(0, 4), (1, 4), (2, 4)]. \\
n27 & \texttt{D-} & 0.197 & Red car [(2, 2), (2, 3)]. Blocking car B [(0, 4), (1, 4), (2, 4)]. \\
\end{tabularx}
\end{examplebox}
\begin{examplebox}[frametitle={Round 8 (recovery)},linecolor=black,backgroundcolor=casegreen!9,frametitlebackgroundcolor=casegreen!20,nobreak=true]
\small Selected branch \texttt{n20}, selection probability 0.445.
\par\smallskip\noindent
\begin{tabularx}{\linewidth}{@{}l l r X@{}}
Branch & Action & $\pi_{\theta}$ & Returned observation \\ \hline
\textbf{n28} & \texttt{B+} & 0.137 & Red car [(2, 2), (2, 3)]. Blocking car B [(3, 4), (4, 4), (5, 4)]. \\
n29 & \texttt{B-} & 0.151 & Red car [(2, 2), (2, 3)]. Blocking car B [(1, 4), (2, 4), (3, 4)]. \\
n30 & \texttt{C-} & 0.280 & Red car [(2, 2), (2, 3)]. Blocking car B [(2, 4), (3, 4), (4, 4)]. \\
n31 & \texttt{D-} & 0.141 & Red car [(2, 2), (2, 3)]. Blocking car B [(2, 4), (3, 4), (4, 4)]. \\
\end{tabularx}
\end{examplebox}
\begin{examplebox}[frametitle={Round 9 (recovery)},linecolor=black,backgroundcolor=caseblue!9,frametitlebackgroundcolor=caseblue!20,nobreak=true]
\small Selected branch \texttt{n24}, selection probability 0.550.
\par\smallskip\noindent
\begin{tabularx}{\linewidth}{@{}l l r X@{}}
Branch & Action & $\pi_{\theta}$ & Returned observation \\ \hline
n32 & \texttt{B+} & 0.116 & Red car [(2, 2), (2, 3)]. Blocking car B [(2, 4), (3, 4), (4, 4)]. \\
n33 & \texttt{C-} & 0.440 & Red car [(2, 2), (2, 3)]. Blocking car B [(1, 4), (2, 4), (3, 4)]. \\
n34 & \texttt{D+} & 0.097 & Invalid move. Move blocked. \\
n35 & \texttt{D-} & 0.159 & Red car [(2, 2), (2, 3)]. Blocking car B [(1, 4), (2, 4), (3, 4)]. \\
\end{tabularx}
\end{examplebox}
\begin{examplebox}[frametitle={Round 10 (recovery)},linecolor=black,backgroundcolor=casegreen!9,frametitlebackgroundcolor=casegreen!20,nobreak=true]
\small Selected branch \texttt{n28}, selection probability 0.770.
\par\smallskip\noindent
\begin{tabularx}{\linewidth}{@{}l l r X@{}}
Branch & Action & $\pi_{\theta}$ & Returned observation \\ \hline
n36 & \texttt{B+} & 0.052 & Invalid move. Move blocked. \\
n37 & \texttt{B-} & 0.049 & Red car [(2, 2), (2, 3)]. Blocking car B [(2, 4), (3, 4), (4, 4)]. \\
n38 & \texttt{C-} & 0.429 & Red car [(2, 2), (2, 3)]. Blocking car B [(3, 4), (4, 4), (5, 4)]. \\
\textbf{n39} & \texttt{X+} & 0.320 & Red car [(2, 3), (2, 4)]. Blocking car B [(3, 4), (4, 4), (5, 4)]. \\
\end{tabularx}
\end{examplebox}
\begin{examplebox}[frametitle={Round 11},linecolor=black,backgroundcolor=caseblue!9,frametitlebackgroundcolor=caseblue!20,nobreak=true]
\small Selected branch \texttt{n39}, selection probability 0.902.
\par\smallskip\noindent
\begin{tabularx}{\linewidth}{@{}l l r X@{}}
Branch & Action & $\pi_{\theta}$ & Returned observation \\ \hline
\textbf{n40} & \texttt{B-} & 0.054 & Invalid move. Move blocked. \\
n41 & \texttt{C-} & 0.170 & Red car [(2, 3), (2, 4)]. Blocking car B [(3, 4), (4, 4), (5, 4)]. \\
n42 & \texttt{D-} & 0.016 & Red car [(2, 3), (2, 4)]. Blocking car B [(3, 4), (4, 4), (5, 4)]. \\
n43 & \texttt{X+} & 0.697 & Invalid move. Move blocked. \\
\end{tabularx}
\end{examplebox}
\begin{examplebox}[frametitle={Round 12},linecolor=black,backgroundcolor=casegreen!9,frametitlebackgroundcolor=casegreen!20,nobreak=true]
\small Selected branch \texttt{n40}, selection probability 0.144.
\par\smallskip\noindent
\begin{tabularx}{\linewidth}{@{}l l r X@{}}
Branch & Action & $\pi_{\theta}$ & Returned observation \\ \hline
\textbf{n44} & \texttt{C+} & 0.037 & Red car [(2, 3), (2, 4)]. Blocking car B [(3, 4), (4, 4), (5, 4)]. \\
n45 & \texttt{C-} & 0.543 & Red car [(2, 3), (2, 4)]. Blocking car B [(3, 4), (4, 4), (5, 4)]. \\
n46 & \texttt{D-} & 0.027 & Red car [(2, 3), (2, 4)]. Blocking car B [(3, 4), (4, 4), (5, 4)]. \\
n47 & \texttt{X+} & 0.300 & Move blocked. Reward 0.75. \\
\end{tabularx}
\end{examplebox}
\begin{examplebox}[frametitle={Round 13},linecolor=black,backgroundcolor=caseblue!9,frametitlebackgroundcolor=caseblue!20,nobreak=true]
\small Selected branch \texttt{n44}, selection probability 0.211.
\par\smallskip\noindent
\begin{tabularx}{\linewidth}{@{}l l r X@{}}
Branch & Action & $\pi_{\theta}$ & Returned observation \\ \hline
n48 & \texttt{B-} & 0.053 & Invalid move. Move blocked. \\
n49 & \texttt{C-} & 0.038 & Red car [(2, 3), (2, 4)]. Blocking car B [(3, 4), (4, 4), (5, 4)]. \\
n50 & \texttt{D-} & 0.029 & Red car [(2, 3), (2, 4)]. Blocking car B [(3, 4), (4, 4), (5, 4)]. \\
\textbf{n51} & \texttt{X+} & 0.782 & Puzzle solved! Reward 1.0. \\
\end{tabularx}
\end{examplebox}
\begin{examplebox}[frametitle={Round 14},linecolor=black,backgroundcolor=casegreen!9,frametitlebackgroundcolor=casegreen!20,nobreak=true]
\small Selected branch \texttt{n51}, selection probability 0.990.
\par Submitted actions \texttt{J+}, \texttt{B+}, \texttt{C-}, \texttt{B+}, \texttt{B+}, \texttt{X+}, \texttt{B-}, \texttt{C+}, \texttt{X+}.
\end{examplebox}
\clearpage
\subsection{Sokoban}
A declaration over four movement commands supports both navigation and box pushing. Exploration revisits earlier branches before selecting a solution with four moves that places both boxes on their goals.
\begin{examplebox}[frametitle={Task and outcome},linecolor=black]
Environment seed 88. JevSpawn score 1.
\par\smallskip\centering
\small\begin{adjustbox}{max width=\linewidth}\begin{tabular}{rrrrrrr}
LATS & LLMCompiler & AgentPrune & HiAgent & FoldAgent & DyFlow & LatentMAS \\
0 & 0 & 1 & 0 & 0 & 0 & 0 \\
\end{tabular}\end{adjustbox}
\end{examplebox}
\begin{center}\captionsetup{hypcap=false}
\includegraphics[width=\linewidth]{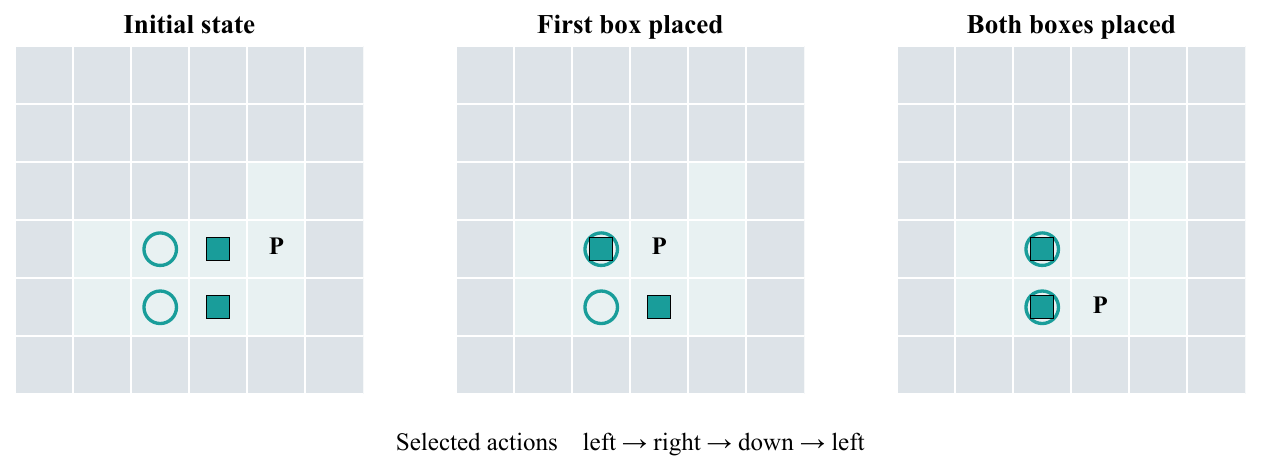}
\captionof{figure}{Sokoban solution in four moves. Squares denote boxes, rings denote goals, and P marks the player.}
\end{center}
\begin{examplebox}[frametitle={Round 1},linecolor=black,backgroundcolor=caseblue!9,frametitlebackgroundcolor=caseblue!20,nobreak=true]
\small Initial state.
\par\smallskip Fields \texttt{f0} (movement command) $\in$ \{\texttt{[up]}, \texttt{[down]}, \texttt{[left]}, \texttt{[right]}\}.
\par Action template \texttt{execute(action = \$\{f0\})}.
\par\smallskip\noindent
\begin{tabularx}{\linewidth}{@{}l l r X@{}}
Branch & Action & $\pi_{\theta}$ & Returned observation \\ \hline
n0 & \texttt{[down]} & 0.082 & Boxes on goals 0. \\
\textbf{n1} & \texttt{[left]} & 0.870 & Boxes on goals 1. \\
n2 & \texttt{[right]} & 0.011 & Invalid move. You cannot move into a wall! \\
n3 & \texttt{[up]} & 0.037 & Boxes on goals 0. \\
\end{tabularx}
\end{examplebox}
\begin{examplebox}[frametitle={Round 2},linecolor=black,backgroundcolor=casegreen!9,frametitlebackgroundcolor=casegreen!20,nobreak=true]
\small Selected branch \texttt{n1}, selection probability 0.921.
\par\smallskip\noindent
\begin{tabularx}{\linewidth}{@{}l l r X@{}}
Branch & Action & $\pi_{\theta}$ & Returned observation \\ \hline
n4 & \texttt{[down]} & 0.820 & Invalid move. You cannot move into a wall! \\
n5 & \texttt{[left]} & 0.085 & Boxes on goals 0. \\
\textbf{n6} & \texttt{[right]} & 0.019 & Boxes on goals 1. \\
n7 & \texttt{[up]} & 0.076 & Invalid move. You cannot move into a wall! \\
\end{tabularx}
\end{examplebox}
\begin{examplebox}[frametitle={Round 3},linecolor=black,backgroundcolor=caseblue!9,frametitlebackgroundcolor=caseblue!20,nobreak=true]
\small Selected branch \texttt{n6}, selection probability 0.830.
\par\smallskip\noindent
\begin{tabularx}{\linewidth}{@{}l l r X@{}}
Branch & Action & $\pi_{\theta}$ & Returned observation \\ \hline
\textbf{n8} & \texttt{[down]} & 0.614 & Boxes on goals 1. \\
n9 & \texttt{[left]} & 0.263 & Boxes on goals 1. \\
n10 & \texttt{[right]} & 0.009 & Invalid move. You cannot move into a wall! \\
n11 & \texttt{[up]} & 0.114 & Boxes on goals 1. \\
\end{tabularx}
\end{examplebox}
\begin{examplebox}[frametitle={Round 4},linecolor=black,backgroundcolor=casegreen!9,frametitlebackgroundcolor=casegreen!20,nobreak=true]
\small Selected branch \texttt{n11}, selection probability 0.460.
\par\smallskip\noindent
\begin{tabularx}{\linewidth}{@{}l l r X@{}}
Branch & Action & $\pi_{\theta}$ & Returned observation \\ \hline
n12 & \texttt{[down]} & 0.161 & Boxes on goals 1. \\
n13 & \texttt{[left]} & 0.818 & Invalid move. You cannot move into a wall! \\
n14 & \texttt{[right]} & 0.009 & Invalid move. You cannot move into a wall! \\
n15 & \texttt{[up]} & 0.012 & Invalid move. You cannot move into a wall! \\
\end{tabularx}
\end{examplebox}
\begin{examplebox}[frametitle={Round 5 (recovery)},linecolor=black,backgroundcolor=caseblue!9,frametitlebackgroundcolor=caseblue!20,nobreak=true]
\small Selected branch \texttt{n0}, selection probability 0.337.
\par\smallskip\noindent
\begin{tabularx}{\linewidth}{@{}l l r X@{}}
Branch & Action & $\pi_{\theta}$ & Returned observation \\ \hline
n16 & \texttt{[down]} & 0.041 & Invalid move. You cannot move into a wall! \\
n17 & \texttt{[left]} & 0.875 & Boxes on goals 1. \\
n18 & \texttt{[right]} & 0.014 & Invalid move. You cannot move into a wall! \\
n19 & \texttt{[up]} & 0.070 & Boxes on goals 0. \\
\end{tabularx}
\end{examplebox}
\begin{examplebox}[frametitle={Round 6 (recovery)},linecolor=black,backgroundcolor=casegreen!9,frametitlebackgroundcolor=casegreen!20,nobreak=true]
\small Selected branch \texttt{n8}, selection probability 0.458.
\par\smallskip\noindent
\begin{tabularx}{\linewidth}{@{}l l r X@{}}
Branch & Action & $\pi_{\theta}$ & Returned observation \\ \hline
n20 & \texttt{[down]} & 0.031 & Invalid move. You cannot move into a wall! \\
\textbf{n21} & \texttt{[left]} & 0.896 & Congratulations! You have solved the Sokoban puzzle! Reward 1. \\
n22 & \texttt{[right]} & 0.019 & Invalid move. You cannot move into a wall! \\
n23 & \texttt{[up]} & 0.055 & Boxes on goals 1. \\
\end{tabularx}
\end{examplebox}
\begin{examplebox}[frametitle={Round 7},linecolor=black,backgroundcolor=caseblue!9,frametitlebackgroundcolor=caseblue!20,nobreak=true]
\small Selected branch \texttt{n21}, selection probability 0.991.
\par Submitted actions \texttt{[left]}, \texttt{[right]}, \texttt{[down]}, \texttt{[left]}.
\end{examplebox}
\clearpage
\subsection{2048}
The selected trajectory contains 31 moves and is submitted at the round limit with a merge score of 1076. KORGym draws new tiles from powers of two up to half the largest tile, or inserts a two when the largest tile is below four. Newly inserted tiles do not contribute to the merge score.
\begin{examplebox}[frametitle={Task and outcome},linecolor=black]
Environment seed 141. JevSpawn score 1076.
\par\smallskip\centering
\small\begin{adjustbox}{max width=\linewidth}\begin{tabular}{rrrrrrr}
LATS & LLMCompiler & AgentPrune & HiAgent & FoldAgent & DyFlow & LatentMAS \\
0 & 0 & 0 & 0 & 0 & 0 & 0 \\
\end{tabular}\end{adjustbox}
\end{examplebox}
\begin{center}\captionsetup{hypcap=false}
\includegraphics[width=\linewidth]{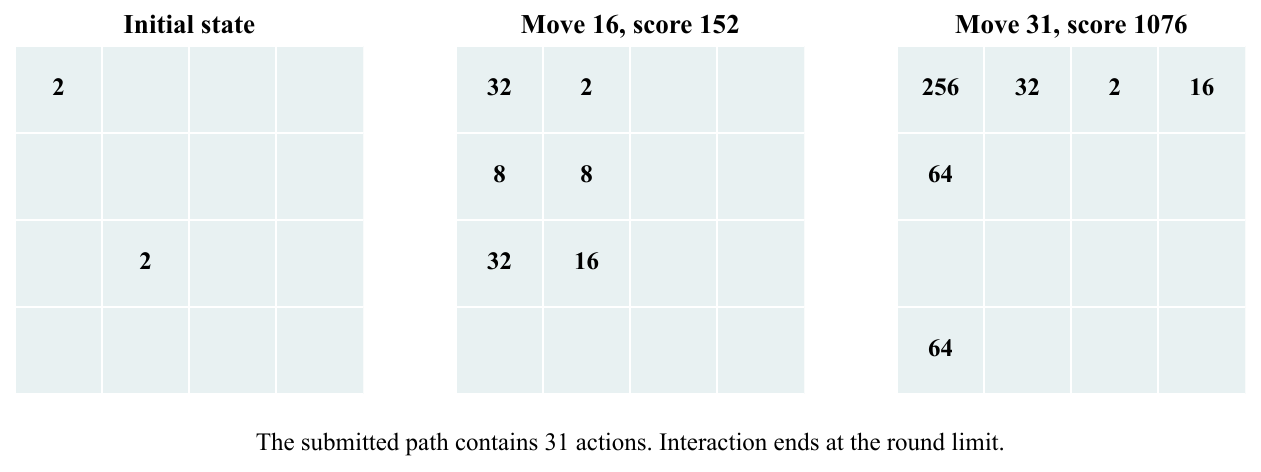}
\captionof{figure}{Selected 2048 states along the submitted trajectory. The final score is 1076 and the largest tile is 256.}
\end{center}
\begin{examplebox}[frametitle={Round 1},linecolor=black,backgroundcolor=caseblue!9,frametitlebackgroundcolor=caseblue!20,nobreak=true]
\small Initial state.
\par\smallskip Fields \texttt{f0} (movement direction) $\in$ \{\texttt{LEFT}, \texttt{RIGHT}, \texttt{UP}, \texttt{DOWN}\}.
\par Action template \texttt{execute(action = \$\{f0\})}.
\par\smallskip\noindent
\begin{tabularx}{\linewidth}{@{}l l r X@{}}
Branch & Action & $\pi_{\theta}$ & Returned observation \\ \hline
n0 & \texttt{DOWN} & 0.186 & Score 0. Largest tile 2. \\
n1 & \texttt{LEFT} & 0.605 & Score 0. Largest tile 2. \\
n2 & \texttt{RIGHT} & 0.115 & Score 0. Largest tile 2. \\
\textbf{n3} & \texttt{UP} & 0.094 & Score 0. Largest tile 2. \\
\end{tabularx}
\end{examplebox}
\begin{examplebox}[frametitle={Round 2},linecolor=black,backgroundcolor=casegreen!9,frametitlebackgroundcolor=casegreen!20,nobreak=true]
\small Selected branch \texttt{n3}, selection probability 0.770.
\par\smallskip\noindent
\begin{tabularx}{\linewidth}{@{}l l r X@{}}
Branch & Action & $\pi_{\theta}$ & Returned observation \\ \hline
n4 & \texttt{DOWN} & 0.021 & Score 0. Largest tile 2. \\
n5 & \texttt{LEFT} & 0.925 & Score 4. Largest tile 4. \\
\textbf{n6} & \texttt{RIGHT} & 0.028 & Score 4. Largest tile 4. \\
n7 & \texttt{UP} & 0.026 & Score 0. Largest tile 2. \\
\end{tabularx}
\end{examplebox}
\begin{examplebox}[frametitle={Round 3},linecolor=black,backgroundcolor=caseblue!9,frametitlebackgroundcolor=caseblue!20,nobreak=true]
\small Selected branch \texttt{n6}, selection probability 0.573.
\par\smallskip\noindent
\begin{tabularx}{\linewidth}{@{}l l r X@{}}
Branch & Action & $\pi_{\theta}$ & Returned observation \\ \hline
n8 & \texttt{DOWN} & 0.147 & Score 4. Largest tile 4. \\
\textbf{n9} & \texttt{LEFT} & 0.658 & Score 4. Largest tile 4. \\
n10 & \texttt{RIGHT} & 0.059 & Score 4. Largest tile 4. \\
n11 & \texttt{UP} & 0.136 & Score 4. Largest tile 4. \\
\end{tabularx}
\end{examplebox}
\begin{examplebox}[frametitle={Round 4 (recovery)},linecolor=black,backgroundcolor=casegreen!9,frametitlebackgroundcolor=casegreen!20,nobreak=true]
\small Selected branch \texttt{n0}, selection probability 0.513.
\par\smallskip\noindent
\begin{tabularx}{\linewidth}{@{}l l r X@{}}
Branch & Action & $\pi_{\theta}$ & Returned observation \\ \hline
n12 & \texttt{DOWN} & 0.014 & Score 0. Largest tile 2. \\
n13 & \texttt{LEFT} & 0.961 & Score 4. Largest tile 4. \\
n14 & \texttt{RIGHT} & 0.015 & Score 4. Largest tile 4. \\
n15 & \texttt{UP} & 0.011 & Score 0. Largest tile 2. \\
\end{tabularx}
\end{examplebox}
\begin{examplebox}[frametitle={Round 5 (recovery)},linecolor=black,backgroundcolor=caseblue!9,frametitlebackgroundcolor=caseblue!20,nobreak=true]
\small Selected branch \texttt{n9}, selection probability 0.221.
\par\smallskip\noindent
\begin{tabularx}{\linewidth}{@{}l l r X@{}}
Branch & Action & $\pi_{\theta}$ & Returned observation \\ \hline
n16 & \texttt{DOWN} & 0.218 & Score 8. Largest tile 4. \\
n17 & \texttt{LEFT} & 0.313 & Score 8. Largest tile 4. \\
n18 & \texttt{RIGHT} & 0.025 & Score 8. Largest tile 4. \\
\textbf{n19} & \texttt{UP} & 0.444 & Score 8. Largest tile 4. \\
\end{tabularx}
\end{examplebox}
\begin{examplebox}[frametitle={Intermediate rounds},linecolor=black,backgroundcolor=gray!6]
Rounds 6 through 33 are condensed. The action declaration is retained and 112 actions are executed across these rounds. Several earlier branches are revisited before the final high scoring path is submitted.
\end{examplebox}
\begin{examplebox}[frametitle={Round 34},linecolor=black,backgroundcolor=casegreen!9,frametitlebackgroundcolor=casegreen!20,nobreak=true]
\small Selected branch \texttt{n129}, selection probability 0.364.
\par\smallskip\noindent
\begin{tabularx}{\linewidth}{@{}l l r X@{}}
Branch & Action & $\pi_{\theta}$ & Returned observation \\ \hline
n132 & \texttt{DOWN} & 0.146 & Score 996. Largest tile 256. \\
n133 & \texttt{LEFT} & 0.540 & Score 708. Largest tile 128. \\
n134 & \texttt{RIGHT} & 0.008 & Score 708. Largest tile 128. \\
\textbf{n135} & \texttt{UP} & 0.306 & Score 996. Largest tile 256. \\
\end{tabularx}
\end{examplebox}
\begin{examplebox}[frametitle={Round 35},linecolor=black,backgroundcolor=caseblue!9,frametitlebackgroundcolor=caseblue!20,nobreak=true]
\small Selected branch \texttt{n135}, selection probability 0.923.
\par\smallskip\noindent
\begin{tabularx}{\linewidth}{@{}l l r X@{}}
Branch & Action & $\pi_{\theta}$ & Returned observation \\ \hline
n136 & \texttt{DOWN} & 0.144 & Score 1012. Largest tile 256. \\
n137 & \texttt{LEFT} & 0.799 & Score 996. Largest tile 256. \\
n138 & \texttt{RIGHT} & 0.007 & Score 996. Largest tile 256. \\
\textbf{n139} & \texttt{UP} & 0.051 & Score 1012. Largest tile 256. \\
\end{tabularx}
\end{examplebox}
\begin{examplebox}[frametitle={Round 36},linecolor=black,backgroundcolor=casegreen!9,frametitlebackgroundcolor=casegreen!20,nobreak=true]
\small Selected branch \texttt{n139}, selection probability 0.909.
\par\smallskip\noindent
\begin{tabularx}{\linewidth}{@{}l l r X@{}}
Branch & Action & $\pi_{\theta}$ & Returned observation \\ \hline
n140 & \texttt{DOWN} & 0.066 & Score 1012. Largest tile 256. \\
\textbf{n141} & \texttt{LEFT} & 0.892 & Score 1076. Largest tile 256. \\
n142 & \texttt{RIGHT} & 0.012 & Score 1076. Largest tile 256. \\
n143 & \texttt{UP} & 0.031 & Score 1012. Largest tile 256. \\
\end{tabularx}
\end{examplebox}
\begin{examplebox}[frametitle={Round 37},linecolor=black,backgroundcolor=caseblue!9,frametitlebackgroundcolor=caseblue!20,nobreak=true]
\small Selected branch \texttt{n141}, selection probability 0.436.
\par Submitted actions \texttt{UP}, \texttt{RIGHT}, \texttt{LEFT}, \texttt{UP}, \texttt{UP}, \texttt{UP}, \texttt{UP}, \texttt{LEFT}, \texttt{DOWN}, \texttt{UP}, \texttt{LEFT}, \texttt{LEFT}, \texttt{LEFT}, \texttt{UP}, \texttt{UP}, \texttt{UP}, \texttt{LEFT}, \texttt{LEFT}, \texttt{UP}, \texttt{UP}, \texttt{LEFT}, \texttt{UP}, \texttt{LEFT}, \texttt{UP}, \texttt{LEFT}, \texttt{UP}, \texttt{UP}, \texttt{LEFT}, \texttt{UP}, \texttt{UP}, \texttt{LEFT}.
\end{examplebox}
\clearpage
\subsection{Nullify}
Two index fields select units for arithmetic operations, with the goal of eliminating all units. A solution with seven actions is found after revisiting retained branches. The initial domain omits unit 8, valued at -7. Reindexing after the first action brings that unit into the declared range, and the third action eliminates the unit.
\begin{examplebox}[frametitle={Task and outcome},linecolor=black]
Environment seed 129. JevSpawn score 1.
\par\smallskip\centering
\small\begin{adjustbox}{max width=\linewidth}\begin{tabular}{rrrrrrr}
LATS & LLMCompiler & AgentPrune & HiAgent & FoldAgent & DyFlow & LatentMAS \\
0 & 0 & 0 & 0 & 0 & 0 & 0 \\
\end{tabular}\end{adjustbox}
\end{examplebox}
\begin{center}\captionsetup{hypcap=false}
\includegraphics[width=\linewidth]{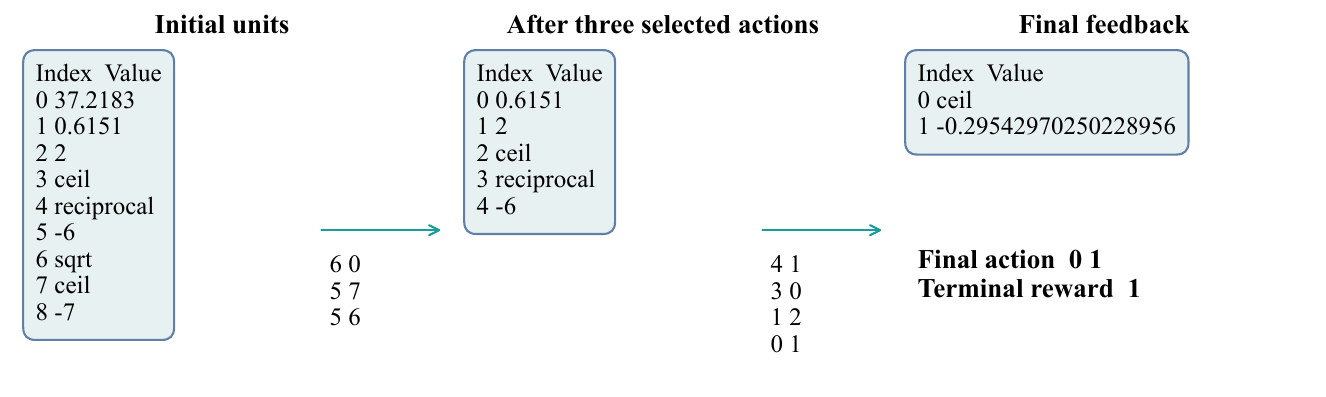}
\captionof{figure}{Nullify units during the selected trajectory. The final action rounds the remaining negative value up to zero. The terminal observation echoes the pre-action units and reports reward one.}
\end{center}
\begin{examplebox}[frametitle={Round 1},linecolor=black,backgroundcolor=caseblue!9,frametitlebackgroundcolor=caseblue!20,nobreak=true]
\small Initial state.
\par\smallskip Fields \texttt{f0} (first unit index) $\in$ \{\texttt{0}, \texttt{1}, \texttt{2}, \texttt{3}, \texttt{4}, \texttt{5}, \texttt{6}, \texttt{7}\}, \texttt{f1} (second unit index) $\in$ \{\texttt{0}, \texttt{1}, \texttt{2}, \texttt{3}, \texttt{4}, \texttt{5}, \texttt{6}, \texttt{7}\}.
\par Action template \texttt{execute(action = \$\{f0\} \$\{f1\})}.
\par\smallskip\noindent
\begin{tabularx}{\linewidth}{@{}l l r X@{}}
Branch & Action & $\pi_{\theta}$ & Returned observation \\ \hline
n0 & \texttt{0 2} & 0.068 & 0.6151, ceil, reciprocal, -6, sqrt, ceil, -7, 39.2183. \\
n1 & \texttt{0 6} & 0.081 & 0.6151, 2, ceil, reciprocal, -6, ceil, -7, 6.1006802899348855. \\
n2 & \texttt{5 7} & 0.181 & 37.2183, 0.6151, 2, ceil, reciprocal, sqrt, -7, -6. \\
\textbf{n3} & \texttt{6 0} & 0.079 & 0.6151, 2, ceil, reciprocal, -6, ceil, -7, 6.1006802899348855. \\
\end{tabularx}
\end{examplebox}
\begin{examplebox}[frametitle={Round 2},linecolor=black,backgroundcolor=casegreen!9,frametitlebackgroundcolor=casegreen!20,nobreak=true]
\small Selected branch \texttt{n3}, selection probability 0.779.
\par\smallskip\noindent
\begin{tabularx}{\linewidth}{@{}l l r X@{}}
Branch & Action & $\pi_{\theta}$ & Returned observation \\ \hline
n4 & \texttt{0 1} & 0.036 & ceil, reciprocal, -6, ceil, -7, 6.1006802899348855, 2.6151. \\
n5 & \texttt{4 6} & 0.035 & 0.6151, 2, ceil, reciprocal, ceil, 6.1006802899348855, -13. \\
\textbf{n6} & \texttt{5 7} & 0.078 & 0.6151, 2, ceil, reciprocal, -6, -7, 7. \\
n7 & \texttt{6 0} & 0.051 & 2, ceil, reciprocal, -6, ceil, 6.1006802899348855, -6.3849. \\
\end{tabularx}
\end{examplebox}
\begin{examplebox}[frametitle={Round 3},linecolor=black,backgroundcolor=caseblue!9,frametitlebackgroundcolor=caseblue!20,nobreak=true]
\small Selected branch \texttt{n7}, selection probability 0.254.
\par\smallskip\noindent
\begin{tabularx}{\linewidth}{@{}l l r X@{}}
Branch & Action & $\pi_{\theta}$ & Returned observation \\ \hline
n8 & \texttt{0 3} & 0.064 & ceil, reciprocal, ceil, 6.1006802899348855, -6.3849, -4. \\
n9 & \texttt{3 6} & 0.247 & 2, ceil, reciprocal, ceil, 6.1006802899348855, -12.3849. \\
n10 & \texttt{5 6} & 0.083 & 2, ceil, reciprocal, -6, ceil, -0.2842197100651145. \\
n11 & \texttt{6 3} & 0.050 & 2, ceil, reciprocal, ceil, 6.1006802899348855, -12.3849. \\
\end{tabularx}
\end{examplebox}
\begin{examplebox}[frametitle={Round 4 (recovery)},linecolor=black,backgroundcolor=casegreen!9,frametitlebackgroundcolor=casegreen!20,nobreak=true]
\small Selected branch \texttt{n6}, selection probability 0.208.
\par\smallskip\noindent
\begin{tabularx}{\linewidth}{@{}l l r X@{}}
Branch & Action & $\pi_{\theta}$ & Returned observation \\ \hline
n12 & \texttt{4 5} & 0.217 & 0.6151, 2, ceil, reciprocal, 7, -13. \\
n13 & \texttt{4 6} & 0.236 & 0.6151, 2, ceil, reciprocal, -7, 1. \\
n14 & \texttt{5 4} & 0.053 & 0.6151, 2, ceil, reciprocal, 7, -13. \\
\textbf{n15} & \texttt{5 6} & 0.093 & 0.6151, 2, ceil, reciprocal, -6. \\
\end{tabularx}
\end{examplebox}
\begin{examplebox}[frametitle={Round 5},linecolor=black,backgroundcolor=caseblue!9,frametitlebackgroundcolor=caseblue!20,nobreak=true]
\small Selected branch \texttt{n15}, selection probability 0.307.
\par\smallskip\noindent
\begin{tabularx}{\linewidth}{@{}l l r X@{}}
Branch & Action & $\pi_{\theta}$ & Returned observation \\ \hline
n16 & \texttt{0 1} & 0.085 & ceil, reciprocal, -6, 2.6151. \\
n17 & \texttt{1 4} & 0.073 & 0.6151, ceil, reciprocal, -4. \\
n18 & \texttt{2 4} & 0.078 & 0.6151, 2, reciprocal, -6. \\
\textbf{n19} & \texttt{4 1} & 0.089 & 0.6151, ceil, reciprocal, -4. \\
\end{tabularx}
\end{examplebox}
\begin{examplebox}[frametitle={Round 6 (recovery)},linecolor=black,backgroundcolor=casegreen!9,frametitlebackgroundcolor=casegreen!20,nobreak=true]
\small Selected branch \texttt{n8}, selection probability 0.328.
\par\smallskip\noindent
\begin{tabularx}{\linewidth}{@{}l l r X@{}}
Branch & Action & $\pi_{\theta}$ & Returned observation \\ \hline
n20 & \texttt{0 1} & 0.072 & ceil, reciprocal, ceil, 6.1006802899348855, -6.3849, -4. \\
n21 & \texttt{0 3} & 0.162 & reciprocal, ceil, -6.3849, -4, 7. \\
n22 & \texttt{4 5} & 0.232 & ceil, reciprocal, ceil, 6.1006802899348855, -10.3849. \\
n23 & \texttt{5 4} & 0.067 & ceil, reciprocal, ceil, 6.1006802899348855, -10.3849. \\
\end{tabularx}
\end{examplebox}
\begin{examplebox}[frametitle={Round 7 (recovery)},linecolor=black,backgroundcolor=caseblue!9,frametitlebackgroundcolor=caseblue!20,nobreak=true]
\small Selected branch \texttt{n19}, selection probability 0.249.
\par\smallskip\noindent
\begin{tabularx}{\linewidth}{@{}l l r X@{}}
Branch & Action & $\pi_{\theta}$ & Returned observation \\ \hline
n24 & \texttt{0 1} & 0.112 & reciprocal, -4, 1. \\
n25 & \texttt{0 3} & 0.199 & ceil, reciprocal, -3.3849. \\
n26 & \texttt{1 3} & 0.166 & 0.6151, reciprocal, -4. \\
\textbf{n27} & \texttt{3 0} & 0.114 & ceil, reciprocal, -3.3849. \\
\end{tabularx}
\end{examplebox}
\begin{examplebox}[frametitle={Round 8},linecolor=black,backgroundcolor=casegreen!9,frametitlebackgroundcolor=casegreen!20,nobreak=true]
\small Selected branch \texttt{n24}, selection probability 0.360.
\par\smallskip\noindent
\begin{tabularx}{\linewidth}{@{}l l r X@{}}
Branch & Action & $\pi_{\theta}$ & Returned observation \\ \hline
n28 & \texttt{0 1} & 0.244 & 1, -0.25. \\
n29 & \texttt{0 2} & 0.468 & -4, 1.0. \\
n30 & \texttt{1 2} & 0.172 & reciprocal, -3. \\
n31 & \texttt{2 0} & 0.047 & -4, 1.0. \\
\end{tabularx}
\end{examplebox}
\begin{examplebox}[frametitle={Round 9 (recovery)},linecolor=black,backgroundcolor=caseblue!9,frametitlebackgroundcolor=caseblue!20,nobreak=true]
\small Selected branch \texttt{n27}, selection probability 0.273.
\par\smallskip\noindent
\begin{tabularx}{\linewidth}{@{}l l r X@{}}
Branch & Action & $\pi_{\theta}$ & Returned observation \\ \hline
n32 & \texttt{0 1} & 0.380 & ceil, reciprocal, -3.3849. \\
n33 & \texttt{0 2} & 0.306 & reciprocal, -3. \\
n34 & \texttt{1 0} & 0.047 & ceil, reciprocal, -3.3849. \\
\textbf{n35} & \texttt{1 2} & 0.204 & ceil, -0.29542970250228956. \\
\end{tabularx}
\end{examplebox}
\begin{examplebox}[frametitle={Round 10 (recovery)},linecolor=black,backgroundcolor=casegreen!9,frametitlebackgroundcolor=casegreen!20,nobreak=true]
\small Selected branch \texttt{n31}, selection probability 0.427.
\par\smallskip\noindent
\begin{tabularx}{\linewidth}{@{}l l r X@{}}
Branch & Action & $\pi_{\theta}$ & Returned observation \\ \hline
n36 & \texttt{0 1} & 0.939 & -4, 1.0. Terminal reward 0. \\
n37 & \texttt{0 2} & 0.002 & -4, 1.0. \\
n38 & \texttt{1 0} & 0.054 & -4, 1.0. Terminal reward 0. \\
n39 & \texttt{1 2} & 0.001 & -4, 1.0. \\
\end{tabularx}
\end{examplebox}
\begin{examplebox}[frametitle={Round 11 (recovery)},linecolor=black,backgroundcolor=caseblue!9,frametitlebackgroundcolor=caseblue!20,nobreak=true]
\small Selected branch \texttt{n35}, selection probability 0.594.
\par\smallskip\noindent
\begin{tabularx}{\linewidth}{@{}l l r X@{}}
Branch & Action & $\pi_{\theta}$ & Returned observation \\ \hline
\textbf{n40} & \texttt{0 1} & 0.956 & ceil, -0.29542970250228956. Terminal reward 1. \\
n41 & \texttt{0 2} & 0.001 & ceil, -0.29542970250228956. \\
n42 & \texttt{1 0} & 0.040 & ceil, -0.29542970250228956. Terminal reward 1. \\
n43 & \texttt{1 2} & 0.002 & ceil, -0.29542970250228956. \\
\end{tabularx}
\end{examplebox}
\begin{examplebox}[frametitle={Round 12 (recovery)},linecolor=black,backgroundcolor=casegreen!9,frametitlebackgroundcolor=casegreen!20,nobreak=true]
\small Selected branch \texttt{n30}, selection probability 0.426.
\par\smallskip\noindent
\begin{tabularx}{\linewidth}{@{}l l r X@{}}
Branch & Action & $\pi_{\theta}$ & Returned observation \\ \hline
n44 & \texttt{0 1} & 0.911 & reciprocal, -3. Terminal reward 0. \\
n45 & \texttt{0 2} & 0.008 & reciprocal, -3. \\
n46 & \texttt{1 0} & 0.069 & reciprocal, -3. Terminal reward 0. \\
n47 & \texttt{1 2} & 0.011 & reciprocal, -3. \\
\end{tabularx}
\end{examplebox}
\begin{examplebox}[frametitle={Round 13 (recovery)},linecolor=black,backgroundcolor=caseblue!9,frametitlebackgroundcolor=caseblue!20,nobreak=true]
\small Selected branch \texttt{n40}, selection probability 0.468.
\par Submitted actions \texttt{6 0}, \texttt{5 7}, \texttt{5 6}, \texttt{4 1}, \texttt{3 0}, \texttt{1 2}, \texttt{0 1}.
\end{examplebox}
\endgroup

%% file: references.bib
@string{June = "June"}

@inproceedings{arxiv231207104,
 author = {Zheng, Lianmin and Yin, Liangsheng and Xie, Zhiqiang and Sun, Chuyue and Huang, Jeff and Yu, Cody Hao and Cao, Shiyi and Kozyrakis, Christos and Stoica, Ion and Gonzalez, Joseph E. and Barrett, Clark and Sheng, Ying},
 booktitle = {Advances in Neural Information Processing Systems},
 doi = {10.52202/079017-2000},
 editor = {A. Globerson and L. Mackey and D. Belgrave and A. Fan and U. Paquet and J. Tomczak and C. Zhang},
 pages = {62557--62583},
 publisher = {Curran Associates, Inc.},
 title = {SGLang: Efficient Execution of Structured Language Model Programs},
 url = {https://proceedings.neurips.cc/paper_files/paper/2024/file/724be4472168f31ba1c9ac630f15dec8-Paper-Conference.pdf},
 volume = {37},
 year = {2024}
}

@inproceedings{hu-etal-2025-hiagent,
    title = "{H}i{A}gent: Hierarchical Working Memory Management for Solving Long-Horizon Agent Tasks with Large Language Model",
    author = "Hu, Mengkang  and
      Chen, Tianxing  and
      Chen, Qiguang  and
      Mu, Yao  and
      Shao, Wenqi  and
      Luo, Ping",
    editor = "Che, Wanxiang  and
      Nabende, Joyce  and
      Shutova, Ekaterina  and
      Pilehvar, Mohammad Taher",
    booktitle = "Proceedings of the 63rd Annual Meeting of the Association for Computational Linguistics (Volume 1: Long Papers)",
    month = jul,
    year = "2025",
    address = "Vienna, Austria",
    publisher = "Association for Computational Linguistics",
    url = "https://aclanthology.org/2025.acl-long.1575/",
    doi = "10.18653/v1/2025.acl-long.1575",
    pages = "32779--32798",
    ISBN = "979-8-89176-251-0"
}

@InProceedings{kim2024llmcompiler,
  title = 	 {An {LLM} Compiler for Parallel Function Calling},
  author =       {Kim, Sehoon and Moon, Suhong and Tabrizi, Ryan and Lee, Nicholas and Mahoney, Michael W. and Keutzer, Kurt and Gholami, Amir},
  booktitle = 	 {Proceedings of the 41st International Conference on Machine Learning},
  pages = 	 {24370--24391},
  year = 	 {2024},
  editor = 	 {Salakhutdinov, Ruslan and Kolter, Zico and Heller, Katherine and Weller, Adrian and Oliver, Nuria and Scarlett, Jonathan and Berkenkamp, Felix},
  volume = 	 {235},
  series = 	 {Proceedings of Machine Learning Research},
  month = 	 {21--27 Jul},
  publisher =    {PMLR},
  url = 	 {https://proceedings.mlr.press/v235/kim24y.html},

}

@inproceedings{korgym,
 author = {Shi, Jiajun and Yang, Jian and Liu, Jiaheng and Bu, Xingyuan and Chen, Jiangjie and Zhou, Junting and Ma, Kaijing and Wen, Zhoufutu and Wang, Bingli and He, Yancheng and Song, Liang and Zhu, Hualei and Li, Shilong and Wang, Xingjian and Zhang, Wei and Yuan, Ruibin and Yao, Yifan and Yang, Wenjun and Wang, Yunli and Fang, Siyuan and Yuan, Siyu and He, Qianyu and Tang, Robert and Tan, Yingshui and Zhou, Wangchunshu and ZHANG, ZHAO-XIANG and Li, Zhoujun and Huang, Wenhao and Zhang, Ge},
 booktitle = {Advances in Neural Information Processing Systems},
 doi = {10.52202/085713-5384},
 editor = {D. Belgrave and C. Zhang and H. Lin and R. Pascanu and P. Koniusz and M. Ghassemi and N. Chen},
 pages = {161286--161314},
 publisher = {Curran Associates, Inc.},
 title = {KORGym: A Dynamic Game Platform for LLM Reasoning Evaluation},
 url = {https://proceedings.neurips.cc/paper_files/paper/2025/file/ebfa4297cd6419f64efe86f657ba49d0-Paper-Conference.pdf},
 volume = {38, Main Conference},
 year = {2025}
}

@misc{llfbench,
  doi = {10.48550/ARXIV.2312.06853},
  url = {https://arxiv.org/abs/2312.06853},
  author = {Cheng, Ching-An and Kolobov, Andrey and Misra, Dipendra and Nie, Allen and Swaminathan, Adith},
  title = {LLF-Bench: Benchmark for Interactive Learning from Language Feedback},
  publisher = {arXiv},
  year = {2023},
  copyright = {arXiv.org perpetual, non-exclusive license}
}

@InProceedings{lmrlgym,
  title = 	 {{LMRL} Gym: Benchmarks for Multi-Turn Reinforcement Learning with Language Models},
  author =       {Abdulhai, Marwa and White, Isadora and Snell, Charlie Victor and Sun, Charles and Hong, Joey and Zhai, Yuexiang and Xu, Kelvin and Levine, Sergey},
  booktitle = 	 {Proceedings of the 42nd International Conference on Machine Learning},
  pages = 	 {126--153},
  year = 	 {2025},
  editor = 	 {Singh, Aarti and Fazel, Maryam and Hsu, Daniel and Lacoste-Julien, Simon and Berkenkamp, Felix and Maharaj, Tegan and Wagstaff, Kiri and Zhu, Jerry},
  volume = 	 {267},
  series = 	 {Proceedings of Machine Learning Research},
  month = 	 {13--19 Jul},
  publisher =    {PMLR},
  url = 	 {https://proceedings.mlr.press/v267/abdulhai25a.html},

}

@misc{ppnl,
  doi = {10.48550/ARXIV.2310.03249},
  url = {https://arxiv.org/abs/2310.03249},
  author = {Aghzal, Mohamed and Plaku, Erion and Yao, Ziyu},
  title = {Can Large Language Models be Good Path Planners? A Benchmark and Investigation on Spatial-temporal Reasoning},
  publisher = {arXiv},
  year = {2023},
  copyright = {Creative Commons Attribution 4.0 International}
}

@inproceedings{
sun2026scaling,
title={Scaling Long-Horizon Agent via Context Folding},
author={Weiwei Sun and Miao Lu and Zhan Ling and Kang Liu and Xuesong Yao and Yiming Yang and Jiecao Chen},
booktitle={Forty-third International Conference on Machine Learning},
year={2026},
url={https://openreview.net/forum?id=lNRgWoGfYg}
}

@misc{textarena,
  doi = {10.48550/ARXIV.2504.11442},
  url = {https://arxiv.org/abs/2504.11442},
  author = {Guertler, Leon and Cheng, Bobby and Yu, Simon and Liu, Bo and Choshen, Leshem and Tan, Cheston},
  title = {TextArena},
  publisher = {arXiv},
  year = {2025},
  copyright = {Creative Commons Attribution 4.0 International}
}

@inproceedings{wang2025dyflow,
 author = {Wang, Yanbo and Xu, Zixiang and Huang, Yue and Wang, Xiangqi and Song, Zirui and Gao, Lang and Wang, Chenxi and Tang, Robert and Zhao, Yue and Cohan, Arman and Zhang, Xiangliang and Chen, Xiuying},
 booktitle = {Advances in Neural Information Processing Systems},
 doi = {10.52202/085713-5793},
 editor = {D. Belgrave and C. Zhang and H. Lin and R. Pascanu and P. Koniusz and M. Ghassemi and N. Chen},
 pages = {174148--174181},
 publisher = {Curran Associates, Inc.},
 title = {DyFlow: Dynamic Workflow Framework for Agentic Reasoning},
 url = {https://proceedings.neurips.cc/paper_files/paper/2025/file/fe9910d2b03324faeb5371a9658277bb-Paper-Conference.pdf},
 volume = {38, Main Conference},
 year = {2025}
}

@inproceedings{
yao2023react,
title={ReAct: Synergizing Reasoning and Acting in Language Models},
author={Shunyu Yao and Jeffrey Zhao and Dian Yu and Nan Du and Izhak Shafran and Karthik R Narasimhan and Yuan Cao},
booktitle={The Eleventh International Conference on Learning Representations },
year={2023},
url={https://openreview.net/forum?id=WE_vluYUL-X}
}

@inproceedings{yao2023treeofthoughts,
 author = {Yao, Shunyu and Yu, Dian and Zhao, Jeffrey and Shafran, Izhak and Griffiths, Tom and Cao, Yuan and Narasimhan, Karthik},
 booktitle = {Advances in Neural Information Processing Systems},
 doi = {10.52202/075280-0517},
 editor = {A. Oh and T. Naumann and A. Globerson and K. Saenko and M. Hardt and S. Levine},
 pages = {11809--11822},
 publisher = {Curran Associates, Inc.},
 title = {Tree of Thoughts: Deliberate Problem Solving with Large Language Models},
 url = {https://proceedings.neurips.cc/paper_files/paper/2023/file/271db9922b8d1f4dd7aaef84ed5ac703-Paper-Conference.pdf},
 volume = {36},
 year = {2023}
}

@inproceedings{zhang2025agentprune,
 author = {Zhang, Guibin and Yue, Yanwei and Li, Zhixun and Yun, Sukwon and Wan, Guancheng and Wang, Kun and Cheng, Dawei and Yu, Jeffrey and Chen, Tianlong},
 booktitle = {International Conference on Learning Representations},
 editor = {Y. Yue and A. Garg and N. Peng and F. Sha and R. Yu},
 pages = {75389--75428},
 title = {Cut the Crap: An Economical Communication Pipeline for LLM-based Multi-Agent Systems},
 url = {https://proceedings.iclr.cc/paper_files/paper/2025/file/bbc461518c59a2a8d64e70e2c38c4a0e-Paper-Conference.pdf},
 volume = {2025},
 year = {2025}
}

@InProceedings{zhou2024lats,
  title = 	 {Language Agent Tree Search Unifies Reasoning, Acting, and Planning in Language Models},
  author =       {Zhou, Andy and Yan, Kai and Shlapentokh-Rothman, Michal and Wang, Haohan and Wang, Yu-Xiong},
  booktitle = 	 {Proceedings of the 41st International Conference on Machine Learning},
  pages = 	 {62138--62160},
  year = 	 {2024},
  editor = 	 {Salakhutdinov, Ruslan and Kolter, Zico and Heller, Katherine and Weller, Adrian and Oliver, Nuria and Scarlett, Jonathan and Berkenkamp, Felix},
  volume = 	 {235},
  series = 	 {Proceedings of Machine Learning Research},
  month = 	 {21--27 Jul},
  publisher =    {PMLR},
  url = 	 {https://proceedings.mlr.press/v235/zhou24r.html},

}

@inproceedings{
zou2026latent,
title={Latent Collaboration in Multi-Agent Systems},
author={Jiaru Zou and Ruizhong Qiu and Gaotang Li and Xiyuan Yang and Katherine Tieu and Pan Lu and Ke Shen and Hanghang Tong and Yejin Choi and Jingrui He and James Zou and Mengdi Wang and Ling Yang},
booktitle={Forty-third International Conference on Machine Learning},
year={2026},
url={https://openreview.net/forum?id=syG9I9ofd8}
}

@inproceedings{MLSYS2025_5c20ca4b,
 author = {Dong, Yixin and Ruan, Charlie F. and Cai, Yaxing and Xu, Ziyi and Zhao, Yilong and Lai, Ruihang and Chen, Tianqi},
 booktitle = {Proceedings of Machine Learning and Systems},
 editor = {M. Zaharia and G. Joshi and Y. Lin},
 pages = {},
 publisher = {MLSys},
 title = {XGrammar: Flexible and Efficient Structured Generation Engine for Large Language Models},
 url = {https://proceedings.mlsys.org/paper_files/paper/2025/file/5c20ca4b0b20b0bd2f1d839dc605e70f-Paper-Conference.pdf},
 volume = {7},
 year = {2025}
}

@inproceedings{NEURIPS2024_2bdc2267,
 author = {Park, Kanghee and Wang, Jiayu and Berg-Kirkpatrick, Taylor and Polikarpova, Nadia and D\textquotesingle Antoni, Loris},
 booktitle = {Advances in Neural Information Processing Systems},
 doi = {10.52202/079017-0774},
 editor = {A. Globerson and L. Mackey and D. Belgrave and A. Fan and U. Paquet and J. Tomczak and C. Zhang},
 pages = {24547--24568},
 publisher = {Curran Associates, Inc.},
 title = {Grammar-Aligned Decoding},
 url = {https://proceedings.neurips.cc/paper_files/paper/2024/file/2bdc2267c3d7d01523e2e17ac0a754f3-Paper-Conference.pdf},
 volume = {37},
 year = {2024}
}

@inproceedings{Guo_2024,
  title     = {Large Language Model Based Multi-agents: A Survey of Progress and Challenges},
  author    = {Guo, Taicheng and Chen, Xiuying and Wang, Yaqi and Chang, Ruidi and Pei, Shichao and Chawla, Nitesh V. and Wiest, Olaf and Zhang, Xiangliang},
  booktitle = {Proceedings of the Thirty-Third International Joint Conference on
               Artificial Intelligence, {IJCAI-24}},
  publisher = {International Joint Conferences on Artificial Intelligence Organization},
  editor    = {Kate Larson},
  pages     = {8048--8057},
  year      = {2024},
  month     = {8},
  note      = {Survey Track},
  doi       = {10.24963/ijcai.2024/890},
  url       = {https://doi.org/10.24963/ijcai.2024/890},
}

@misc{https://doi.org/10.48550/arxiv.2601.14192,
  doi = {10.48550/ARXIV.2601.14192},
  url = {https://arxiv.org/abs/2601.14192},
  author = {Yang, Xiaofang and Li, Lijun and Zhou, Heng and Zhu, Tong and Qu, Xiaoye and Fan, Yuchen and Wei, Qianshan and Ye, Rui and Kang, Li and Qin, Yiran and Liu, Daizong and Li, Qi and Ding, Ning and Chen, Siheng and Shao, Jing},
  title = {Toward Efficient Agents: Memory, Tool learning, and Planning},
  publisher = {arXiv},
  year = {2026},
  copyright = {Creative Commons Attribution 4.0 International}
}

@misc{https://doi.org/10.48550/arxiv.2502.14321,
  doi = {10.48550/ARXIV.2502.14321},
  url = {https://arxiv.org/abs/2502.14321},
  author = {Yan, Bingyu and Zhou, Zhibo and Zhang, Litian and Zhang, Lian and Zhou, Ziyi and Miao, Dezhuang and Li, Zhoujun and Li, Chaozhuo and Zhang, Xiaoming},
  title = {Beyond Self-Talk: A Communication-Centric Survey of LLM-Based Multi-Agent Systems},
  publisher = {arXiv},
  year = {2025},
  copyright = {arXiv.org perpetual, non-exclusive license}
}

@inproceedings{wang-etal-2025-agentdropout,
    title = "{A}gent{D}ropout: Dynamic Agent Elimination for Token-Efficient and High-Performance {LLM}-Based Multi-Agent Collaboration",
    author = "Wang, Zhexuan  and
      Wang, Yutong  and
      Liu, Xuebo  and
      Ding, Liang  and
      Zhang, Miao  and
      Liu, Jie  and
      Zhang, Min",
    editor = "Che, Wanxiang  and
      Nabende, Joyce  and
      Shutova, Ekaterina  and
      Pilehvar, Mohammad Taher",
    booktitle = "Proceedings of the 63rd Annual Meeting of the Association for Computational Linguistics (Volume 1: Long Papers)",
    month = jul,
    year = "2025",
    address = "Vienna, Austria",
    publisher = "Association for Computational Linguistics",
    url = "https://aclanthology.org/2025.acl-long.1170/",
    doi = "10.18653/v1/2025.acl-long.1170",
    pages = "24013--24035",
    ISBN = "979-8-89176-251-0"
}

@inproceedings{geng-etal-2023-grammar,
    title = "Grammar-Constrained Decoding for Structured {NLP} Tasks without Finetuning",
    author = "Geng, Saibo  and
      Josifoski, Martin  and
      Peyrard, Maxime  and
      West, Robert",
    editor = "Bouamor, Houda  and
      Pino, Juan  and
      Bali, Kalika",
    booktitle = "Proceedings of the 2023 Conference on Empirical Methods in Natural Language Processing",
    month = dec,
    year = "2023",
    address = "Singapore",
    publisher = "Association for Computational Linguistics",
    url = "https://aclanthology.org/2023.emnlp-main.674/",
    doi = "10.18653/v1/2023.emnlp-main.674",
    pages = "10932--10952"
}

@InProceedings{pmlr-v235-beurer-kellner24a,
  title = 	 {Guiding {LLM}s The Right Way: Fast, Non-Invasive Constrained Generation},
  author =       {Beurer-Kellner, Luca and Fischer, Marc and Vechev, Martin},
  booktitle = 	 {Proceedings of the 41st International Conference on Machine Learning},
  pages = 	 {3658--3673},
  year = 	 {2024},
  editor = 	 {Salakhutdinov, Ruslan and Kolter, Zico and Heller, Katherine and Weller, Adrian and Oliver, Nuria and Scarlett, Jonathan and Berkenkamp, Felix},
  volume = 	 {235},
  series = 	 {Proceedings of Machine Learning Research},
  month = 	 {21--27 Jul},
  publisher =    {PMLR},
  url = 	 {https://proceedings.mlr.press/v235/beurer-kellner24a.html},
}

@misc{almeida2026jev,
  author = {Diogo Almeida},
  title = {Introducing System One Models \& {Jev}},
  year = {2026},
  month = sep,
  howpublished = {TypeSafe AI. \url{https://typesafe.ai/blog/introducing-system-one-models-and-jev}}
}

@inproceedings{ICLR2025_5492ecbc,
 author = {Zhang, Jiayi and Xiang, Jinyu and Yu, Zhaoyang and Teng, Fengwei and Chen, XiongHui and Chen, Jiaqi and Zhuge, Mingchen and Cheng, Xin and Hong, Sirui and Wang, Jinlin and Zheng, Bingnan and Liu, Bang and Luo, Yuyu and Wu, Chenglin},
 booktitle = {International Conference on Learning Representations},
 editor = {Y. Yue and A. Garg and N. Peng and F. Sha and R. Yu},
 pages = {34040--34077},
 title = {AFlow: Automating Agentic Workflow Generation},
 url = {https://proceedings.iclr.cc/paper_files/paper/2025/file/5492ecbce4439401798dcd2c90be94cd-Paper-Conference.pdf},
 volume = {2025},
 year = {2025}
}

@inproceedings{ICLR2025_36b7acf6,
 author = {Hu, Shengran and Lu, Cong and Clune, Jeff},
 booktitle = {International Conference on Learning Representations},
 editor = {Y. Yue and A. Garg and N. Peng and F. Sha and R. Yu},
 pages = {21344--21377},
 title = {Automated Design of Agentic Systems},
 url = {https://proceedings.iclr.cc/paper_files/paper/2025/file/36b7acf6f6010652b3f2a433774a66fe-Paper-Conference.pdf},
 volume = {2025},
 year = {2025}
}

@inproceedings{NEURIPS2023_1b44b878,
 author = {Shinn, Noah and Cassano, Federico and Gopinath, Ashwin and Narasimhan, Karthik and Yao, Shunyu},
 booktitle = {Advances in Neural Information Processing Systems},
 doi = {10.52202/075280-0377},
 editor = {A. Oh and T. Naumann and A. Globerson and K. Saenko and M. Hardt and S. Levine},
 pages = {8634--8652},
 publisher = {Curran Associates, Inc.},
 title = {Reflexion: language agents with verbal reinforcement learning},
 url = {https://proceedings.neurips.cc/paper_files/paper/2023/file/1b44b878bb782e6954cd888628510e90-Paper-Conference.pdf},
 volume = {36},
 year = {2023}
}

@article{Beurer_Kellner_2023, title={Prompting Is Programming: A Query Language for Large Language Models}, volume={7}, ISSN={2475-1421}, url={http://dx.doi.org/10.1145/3591300}, DOI={10.1145/3591300}, number={PLDI}, journal={Proceedings of the ACM on Programming Languages}, publisher={Association for Computing Machinery (ACM)}, author={Beurer-Kellner, Luca and Fischer, Marc and Vechev, Martin}, year={2023}, month=June, pages={1946--1969} }

@misc{https://doi.org/10.48550/arxiv.2409.07429,
  doi = {10.48550/ARXIV.2409.07429},
  url = {https://arxiv.org/abs/2409.07429},
  author = {Wang, Zora Zhiruo and Mao, Jiayuan and Fried, Daniel and Neubig, Graham},
  title = {Agent Workflow Memory},
  publisher = {arXiv},
  year = {2024},
  copyright = {Creative Commons Attribution Share Alike 4.0 International}
}

@misc{https://doi.org/10.48550/arxiv.2308.08155,
  doi = {10.48550/ARXIV.2308.08155},
  url = {https://arxiv.org/abs/2308.08155},
  author = {Wu, Qingyun and Bansal, Gagan and Zhang, Jieyu and Wu, Yiran and Li, Beibin and Zhu, Erkang and Jiang, Li and Zhang, Xiaoyun and Zhang, Shaokun and Liu, Jiale and Awadallah, Ahmed Hassan and White, Ryen W and Burger, Doug and Wang, Chi},
  title = {AutoGen: Enabling Next-Gen LLM Applications via Multi-Agent Conversation},
  publisher = {arXiv},
  year = {2023},
  copyright = {Creative Commons Attribution 4.0 International}
}

@misc{https://doi.org/10.48550/arxiv.2402.14034,
  doi = {10.48550/ARXIV.2402.14034},
  url = {https://arxiv.org/abs/2402.14034},
  author = {Gao, Dawei and Li, Zitao and Pan, Xuchen and Kuang, Weirui and Ma, Zhijian and Qian, Bingchen and Wei, Fei and Zhang, Wenhao and Xie, Yuexiang and Chen, Daoyuan and Yao, Liuyi and Peng, Hongyi and Zhang, Zeyu and Zhu, Lin and Cheng, Chen and Shi, Hongzhu and Li, Yaliang and Ding, Bolin and Zhou, Jingren},
  title = {AgentScope: A Flexible yet Robust Multi-Agent Platform},
  publisher = {arXiv},
  year = {2024},
  copyright = {arXiv.org perpetual, non-exclusive license}
}

@misc{packer2024memgptllmsoperatingsystems,
  doi = {10.48550/ARXIV.2310.08560},
  url = {https://arxiv.org/abs/2310.08560},
  author = {Packer, Charles and Wooders, Sarah and Lin, Kevin and Fang, Vivian and Patil, Shishir G. and Stoica, Ion and Gonzalez, Joseph E.},
  title = {MemGPT: Towards LLMs as Operating Systems},
  publisher = {arXiv},
  year = {2023},
  copyright = {Creative Commons Attribution 4.0 International}
}

@misc{https://doi.org/10.48550/arxiv.2304.03442,
  doi = {10.48550/ARXIV.2304.03442},
  url = {https://arxiv.org/abs/2304.03442},
  author = {Park, Joon Sung and O'Brien, Joseph C. and Cai, Carrie J. and Morris, Meredith Ringel and Liang, Percy and Bernstein, Michael S.},
  title = {Generative Agents: Interactive Simulacra of Human Behavior},
  publisher = {arXiv},
  year = {2023},
  copyright = {arXiv.org perpetual, non-exclusive license}
}

@misc{https://doi.org/10.48550/arxiv.1706.03762,
  doi = {10.48550/ARXIV.1706.03762},
  url = {https://arxiv.org/abs/1706.03762},
  author = {Vaswani, Ashish and Shazeer, Noam and Parmar, Niki and Uszkoreit, Jakob and Jones, Llion and Gomez, Aidan N. and Kaiser, Lukasz and Polosukhin, Illia},
  title = {Attention Is All You Need},
  publisher = {arXiv},
  year = {2017},
  copyright = {arXiv.org perpetual, non-exclusive license}
}

@misc{https://doi.org/10.48550/arxiv.1901.02860,
  doi = {10.48550/ARXIV.1901.02860},
  url = {https://arxiv.org/abs/1901.02860},
  author = {Dai, Zihang and Yang, Zhilin and Yang, Yiming and Carbonell, Jaime and Le, Quoc V. and Salakhutdinov, Ruslan},
  title = {Transformer-XL: Attentive Language Models Beyond a Fixed-Length Context},
  publisher = {arXiv},
  year = {2019},
  copyright = {Creative Commons Attribution Non Commercial Share Alike 4.0 International}
}

@misc{https://doi.org/10.48550/arxiv.1911.05507,
  doi = {10.48550/ARXIV.1911.05507},
  url = {https://arxiv.org/abs/1911.05507},
  author = {Rae, Jack W. and Potapenko, Anna and Jayakumar, Siddhant M. and Lillicrap, Timothy P.},
  title = {Compressive Transformers for Long-Range Sequence Modelling},
  publisher = {arXiv},
  year = {2019},
  copyright = {arXiv.org perpetual, non-exclusive license}
}

@misc{https://doi.org/10.48550/arxiv.2004.05150,
  doi = {10.48550/ARXIV.2004.05150},
  url = {https://arxiv.org/abs/2004.05150},
  author = {Beltagy, Iz and Peters, Matthew E. and Cohan, Arman},
  title = {Longformer: The Long-Document Transformer},
  publisher = {arXiv},
  year = {2020},
  copyright = {arXiv.org perpetual, non-exclusive license}
}

@article{https://doi.org/10.48550/arxiv.2007.14062,
  doi = {10.48550/ARXIV.2007.14062},
  url = {https://arxiv.org/abs/2007.14062},
  author = {Zaheer, Manzil and Guruganesh, Guru and Dubey, Avinava and Ainslie, Joshua and Alberti, Chris and Ontanon, Santiago and Pham, Philip and Ravula, Anirudh and Wang, Qifan and Yang, Li and Ahmed, Amr},
  title = {Big Bird: Transformers for Longer Sequences},
  publisher = {arXiv},
  year = {2020},
  copyright = {arXiv.org perpetual, non-exclusive license}
}

@misc{https://doi.org/10.48550/arxiv.2001.04451,
  doi = {10.48550/ARXIV.2001.04451},
  url = {https://arxiv.org/abs/2001.04451},
  author = {Kitaev, Nikita and Kaiser, {\L{}}ukasz and Levskaya, Anselm},
  title = {Reformer: The Efficient Transformer},
  publisher = {arXiv},
  year = {2020},
  copyright = {arXiv.org perpetual, non-exclusive license}
}

@misc{https://doi.org/10.48550/arxiv.2009.14794,
  doi = {10.48550/ARXIV.2009.14794},
  url = {https://arxiv.org/abs/2009.14794},
  author = {Choromanski, Krzysztof and Likhosherstov, Valerii and Dohan, David and Song, Xingyou and Gane, Andreea and Sarlos, Tamas and Hawkins, Peter and Davis, Jared and Mohiuddin, Afroz and Kaiser, Lukasz and Belanger, David and Colwell, Lucy and Weller, Adrian},
  title = {Rethinking Attention with Performers},
  publisher = {arXiv},
  year = {2020},
  copyright = {arXiv.org perpetual, non-exclusive license}
}

@misc{https://doi.org/10.48550/arxiv.2006.04768,
  doi = {10.48550/ARXIV.2006.04768},
  url = {https://arxiv.org/abs/2006.04768},
  author = {Wang, Sinong and Li, Belinda Z. and Khabsa, Madian and Fang, Han and Ma, Hao},
  title = {Linformer: Self-Attention with Linear Complexity},
  publisher = {arXiv},
  year = {2020},
  copyright = {arXiv.org perpetual, non-exclusive license}
}

@misc{https://doi.org/10.48550/arxiv.2006.16236,
  doi = {10.48550/ARXIV.2006.16236},
  url = {https://arxiv.org/abs/2006.16236},
  author = {Katharopoulos, Angelos and Vyas, Apoorv and Pappas, Nikolaos and Fleuret, Fran{\c{c}}ois},
  title = {Transformers are RNNs: Fast Autoregressive Transformers with Linear Attention},
  publisher = {arXiv},
  year = {2020},
  copyright = {arXiv.org perpetual, non-exclusive license}
}

@misc{https://doi.org/10.48550/arxiv.2205.14135,
  doi = {10.48550/ARXIV.2205.14135},
  url = {https://arxiv.org/abs/2205.14135},
  author = {Dao, Tri and Fu, Daniel Y. and Ermon, Stefano and Rudra, Atri and R{\'e}, Christopher},
  title = {FlashAttention: Fast and Memory-Efficient Exact Attention with IO-Awareness},
  publisher = {arXiv},
  year = {2022},
  copyright = {arXiv.org perpetual, non-exclusive license}
}

@misc{https://doi.org/10.48550/arxiv.2307.08691,
  doi = {10.48550/ARXIV.2307.08691},
  url = {https://arxiv.org/abs/2307.08691},
  author = {Dao, Tri},
  title = {FlashAttention-2: Faster Attention with Better Parallelism and Work Partitioning},
  publisher = {arXiv},
  year = {2023},
  copyright = {arXiv.org perpetual, non-exclusive license}
}

@misc{https://doi.org/10.48550/arxiv.2407.08608,
  doi = {10.48550/ARXIV.2407.08608},
  url = {https://arxiv.org/abs/2407.08608},
  author = {Shah, Jay and Bikshandi, Ganesh and Zhang, Ying and Thakkar, Vijay and Ramani, Pradeep and Dao, Tri},
  title = {FlashAttention-3: Fast and Accurate Attention with Asynchrony and Low-precision},
  publisher = {arXiv},
  year = {2024},
  copyright = {Creative Commons Attribution 4.0 International}
}

@misc{https://doi.org/10.48550/arxiv.1911.02150,
  doi = {10.48550/ARXIV.1911.02150},
  url = {https://arxiv.org/abs/1911.02150},
  author = {Shazeer, Noam},
  title = {Fast Transformer Decoding: One Write-Head is All You Need},
  publisher = {arXiv},
  year = {2019},
  copyright = {arXiv.org perpetual, non-exclusive license}
}

@misc{https://doi.org/10.48550/arxiv.2305.13245,
  doi = {10.48550/ARXIV.2305.13245},
  url = {https://arxiv.org/abs/2305.13245},
  author = {Ainslie, Joshua and Lee-Thorp, James and de Jong, Michiel and Zemlyanskiy, Yury and Lebr{\'o}n, Federico and Sanghai, Sumit},
  title = {GQA: Training Generalized Multi-Query Transformer Models from Multi-Head Checkpoints},
  publisher = {arXiv},
  year = {2023},
  copyright = {Creative Commons Attribution 4.0 International}
}

@misc{https://doi.org/10.48550/arxiv.2401.09670,
  doi = {10.48550/ARXIV.2401.09670},
  url = {https://arxiv.org/abs/2401.09670},
  author = {Zhong, Yinmin and Liu, Shengyu and Chen, Junda and Hu, Jianbo and Zhu, Yibo and Liu, Xuanzhe and Jin, Xin and Zhang, Hao},
  title = {DistServe: Disaggregating Prefill and Decoding for Goodput-optimized Large Language Model Serving},
  publisher = {arXiv},
  year = {2024},
  copyright = {Creative Commons Attribution Share Alike 4.0 International}
}

@misc{https://doi.org/10.48550/arxiv.2311.18677,
  doi = {10.48550/ARXIV.2311.18677},
  url = {https://arxiv.org/abs/2311.18677},
  author = {Patel, Pratyush and Choukse, Esha and Zhang, Chaojie and Shah, Aashaka and Goiri, {\'I}{\~n}igo and Maleki, Saeed and Bianchini, Ricardo},
  title = {Splitwise: Efficient generative LLM inference using phase splitting},
  publisher = {arXiv},
  year = {2023},
  copyright = {Creative Commons Attribution Non Commercial Share Alike 4.0 International}
}

@misc{https://doi.org/10.48550/arxiv.2308.16369,
  doi = {10.48550/ARXIV.2308.16369},
  url = {https://arxiv.org/abs/2308.16369},
  author = {Agrawal, Amey and Panwar, Ashish and Mohan, Jayashree and Kwatra, Nipun and Gulavani, Bhargav S. and Ramjee, Ramachandran},
  title = {SARATHI: Efficient LLM Inference by Piggybacking Decodes with Chunked Prefills},
  publisher = {arXiv},
  year = {2023},
  copyright = {arXiv.org perpetual, non-exclusive license}
}

@misc{https://doi.org/10.48550/arxiv.2403.02310,
  doi = {10.48550/ARXIV.2403.02310},
  url = {https://arxiv.org/abs/2403.02310},
  author = {Agrawal, Amey and Kedia, Nitin and Panwar, Ashish and Mohan, Jayashree and Kwatra, Nipun and Gulavani, Bhargav S. and Tumanov, Alexey and Ramjee, Ramachandran},
  title = {Taming Throughput-Latency Tradeoff in LLM Inference with Sarathi-Serve},
  publisher = {arXiv},
  year = {2024},
  copyright = {arXiv.org perpetual, non-exclusive license}
}

@misc{https://doi.org/10.48550/arxiv.2405.04437,
  doi = {10.48550/ARXIV.2405.04437},
  url = {https://arxiv.org/abs/2405.04437},
  author = {Prabhu, Ramya and Nayak, Ajay and Mohan, Jayashree and Ramjee, Ramachandran and Panwar, Ashish},
  title = {vAttention: Dynamic Memory Management for Serving LLMs without PagedAttention},
  publisher = {arXiv},
  year = {2024},
  copyright = {Creative Commons Attribution 4.0 International}
}

@misc{https://doi.org/10.48550/arxiv.2501.01005,
  doi = {10.48550/ARXIV.2501.01005},
  url = {https://arxiv.org/abs/2501.01005},
  author = {Ye, Zihao and Chen, Lequn and Lai, Ruihang and Lin, Wuwei and Zhang, Yineng and Wang, Stephanie and Chen, Tianqi and Kasikci, Baris and Grover, Vinod and Krishnamurthy, Arvind and Ceze, Luis},
  title = {FlashInfer: Efficient and Customizable Attention Engine for LLM Inference Serving},
  publisher = {arXiv},
  year = {2025},
  copyright = {Creative Commons Attribution Share Alike 4.0 International}
}

@misc{https://doi.org/10.48550/arxiv.2207.00032,
  doi = {10.48550/ARXIV.2207.00032},
  url = {https://arxiv.org/abs/2207.00032},
  author = {Aminabadi, Reza Yazdani and Rajbhandari, Samyam and Zhang, Minjia and Awan, Ammar Ahmad and Li, Cheng and Li, Du and Zheng, Elton and Rasley, Jeff and Smith, Shaden and Ruwase, Olatunji and He, Yuxiong},
  title = {DeepSpeed Inference: Enabling Efficient Inference of Transformer Models at Unprecedented Scale},
  publisher = {arXiv},
  year = {2022},
  copyright = {arXiv.org perpetual, non-exclusive license}
}

@misc{https://doi.org/10.48550/arxiv.2303.06865,
  doi = {10.48550/ARXIV.2303.06865},
  url = {https://arxiv.org/abs/2303.06865},
  author = {Sheng, Ying and Zheng, Lianmin and Yuan, Binhang and Li, Zhuohan and Ryabinin, Max and Fu, Daniel Y. and Xie, Zhiqiang and Chen, Beidi and Barrett, Clark and Gonzalez, Joseph E. and Liang, Percy and R{\'e}, Christopher and Stoica, Ion and Zhang, Ce},
  title = {FlexGen: High-Throughput Generative Inference of Large Language Models with a Single GPU},
  publisher = {arXiv},
  year = {2023},
  copyright = {arXiv.org perpetual, non-exclusive license}
}

@misc{https://doi.org/10.48550/arxiv.2312.12456,
  doi = {10.48550/ARXIV.2312.12456},
  url = {https://arxiv.org/abs/2312.12456},
  author = {Song, Yixin and Mi, Zeyu and Xie, Haotong and Chen, Haibo},
  title = {PowerInfer: Fast Large Language Model Serving with a Consumer-grade GPU},
  publisher = {arXiv},
  year = {2023},
  copyright = {Creative Commons Attribution 4.0 International}
}

@misc{https://doi.org/10.48550/arxiv.2407.00023,
  doi = {10.48550/ARXIV.2407.00023},
  url = {https://arxiv.org/abs/2407.00023},
  author = {Srivatsa, Vikranth and He, Zijian and Abhyankar, Reyna and Li, Dongming and Zhang, Yiying},
  title = {Preble: Efficient Distributed Prompt Scheduling for LLM Serving},
  publisher = {arXiv},
  year = {2024},
  copyright = {Creative Commons Attribution 4.0 International}
}

@misc{https://doi.org/10.48550/arxiv.2309.17453,
  doi = {10.48550/ARXIV.2309.17453},
  url = {https://arxiv.org/abs/2309.17453},
  author = {Xiao, Guangxuan and Tian, Yuandong and Chen, Beidi and Han, Song and Lewis, Mike},
  title = {Efficient Streaming Language Models with Attention Sinks},
  publisher = {arXiv},
  year = {2023},
  copyright = {Creative Commons Attribution 4.0 International}
}

@misc{https://doi.org/10.48550/arxiv.2306.14048,
  doi = {10.48550/ARXIV.2306.14048},
  url = {https://arxiv.org/abs/2306.14048},
  author = {Zhang, Zhenyu and Sheng, Ying and Zhou, Tianyi and Chen, Tianlong and Zheng, Lianmin and Cai, Ruisi and Song, Zhao and Tian, Yuandong and R{\'e}, Christopher and Barrett, Clark and Wang, Zhangyang and Chen, Beidi},
  title = {H$_2$O: Heavy-Hitter Oracle for Efficient Generative Inference of Large Language Models},
  publisher = {arXiv},
  year = {2023},
  copyright = {arXiv.org perpetual, non-exclusive license}
}

@misc{https://doi.org/10.48550/arxiv.2305.17118,
  doi = {10.48550/ARXIV.2305.17118},
  url = {https://arxiv.org/abs/2305.17118},
  author = {Liu, Zichang and Desai, Aditya and Liao, Fangshuo and Wang, Weitao and Xie, Victor and Xu, Zhaozhuo and Kyrillidis, Anastasios and Shrivastava, Anshumali},
  title = {Scissorhands: Exploiting the Persistence of Importance Hypothesis for LLM KV Cache Compression at Test Time},
  publisher = {arXiv},
  year = {2023},
  copyright = {arXiv.org perpetual, non-exclusive license}
}

@misc{https://doi.org/10.48550/arxiv.2404.14469,
  doi = {10.48550/ARXIV.2404.14469},
  url = {https://arxiv.org/abs/2404.14469},
  author = {Li, Yuhong and Huang, Yingbing and Yang, Bowen and Venkitesh, Bharat and Locatelli, Acyr and Ye, Hanchen and Cai, Tianle and Lewis, Patrick and Chen, Deming},
  title = {SnapKV: LLM Knows What You are Looking for Before Generation},
  publisher = {arXiv},
  year = {2024},
  copyright = {Creative Commons Attribution 4.0 International}
}

@misc{https://doi.org/10.48550/arxiv.2406.02069,
  doi = {10.48550/ARXIV.2406.02069},
  url = {https://arxiv.org/abs/2406.02069},
  author = {Cai, Zefan and Zhang, Yichi and Gao, Bofei and Liu, Yuliang and Li, Yucheng and Liu, Tianyu and Lu, Keming and Xiong, Wayne and Dong, Yue and Hu, Junjie and Xiao, Wen},
  title = {PyramidKV: Dynamic KV Cache Compression based on Pyramidal Information Funneling},
  publisher = {arXiv},
  year = {2024},
  copyright = {arXiv.org perpetual, non-exclusive license}
}

@misc{https://doi.org/10.48550/arxiv.2406.10774,
  doi = {10.48550/ARXIV.2406.10774},
  url = {https://arxiv.org/abs/2406.10774},
  author = {Tang, Jiaming and Zhao, Yilong and Zhu, Kan and Xiao, Guangxuan and Kasikci, Baris and Han, Song},
  title = {Quest: Query-Aware Sparsity for Efficient Long-Context LLM Inference},
  publisher = {arXiv},
  year = {2024},
  copyright = {arXiv.org perpetual, non-exclusive license}
}

@article{https://doi.org/10.48550/arxiv.2402.02750,
  doi = {10.48550/ARXIV.2402.02750},
  url = {https://arxiv.org/abs/2402.02750},
  author = {Liu, Zirui and Yuan, Jiayi and Jin, Hongye and Zhong, Shaochen and Xu, Zhaozhuo and Braverman, Vladimir and Chen, Beidi and Hu, Xia},
  title = {KIVI: A Tuning-Free Asymmetric 2bit Quantization for KV Cache},
  publisher = {arXiv},
  year = {2024},
  copyright = {Creative Commons Attribution 4.0 International}
}

@misc{https://doi.org/10.48550/arxiv.2405.14366,
  doi = {10.48550/ARXIV.2405.14366},
  url = {https://arxiv.org/abs/2405.14366},
  author = {Liu, Akide and Liu, Jing and Pan, Zizheng and He, Yefei and Haffari, Gholamreza and Zhuang, Bohan},
  title = {MiniCache: KV Cache Compression in Depth Dimension for Large Language Models},
  publisher = {arXiv},
  year = {2024},
  copyright = {Creative Commons Attribution 4.0 International}
}

@misc{https://doi.org/10.48550/arxiv.2310.07240,
  doi = {10.48550/ARXIV.2310.07240},
  url = {https://arxiv.org/abs/2310.07240},
  author = {Liu, Yuhan and Li, Hanchen and Cheng, Yihua and Ray, Siddhant and Huang, Yuyang and Zhang, Qizheng and Du, Kuntai and Yao, Jiayi and Lu, Shan and Ananthanarayanan, Ganesh and Maire, Michael and Hoffmann, Henry and Holtzman, Ari and Jiang, Junchen},
  title = {CacheGen: KV Cache Compression and Streaming for Fast Large Language Model Serving},
  publisher = {arXiv},
  year = {2023},
  copyright = {arXiv.org perpetual, non-exclusive license}
}

@misc{https://doi.org/10.48550/arxiv.2405.16444,
  doi = {10.48550/ARXIV.2405.16444},
  url = {https://arxiv.org/abs/2405.16444},
  author = {Yao, Jiayi and Li, Hanchen and Liu, Yuhan and Ray, Siddhant and Cheng, Yihua and Zhang, Qizheng and Du, Kuntai and Lu, Shan and Jiang, Junchen},
  title = {CacheBlend: Fast Large Language Model Serving for RAG with Cached Knowledge Fusion},
  publisher = {arXiv},
  year = {2024},
  copyright = {Creative Commons Attribution 4.0 International}
}

@misc{https://doi.org/10.48550/arxiv.2311.04934,
  doi = {10.48550/ARXIV.2311.04934},
  url = {https://arxiv.org/abs/2311.04934},
  author = {Gim, In and Chen, Guojun and Lee, Seung-seob and Sarda, Nikhil and Khandelwal, Anurag and Zhong, Lin},
  title = {Prompt Cache: Modular Attention Reuse for Low-Latency Inference},
  publisher = {arXiv},
  year = {2023},
  copyright = {arXiv.org perpetual, non-exclusive license}
}

@misc{https://doi.org/10.48550/arxiv.2402.05099,
  doi = {10.48550/ARXIV.2402.05099},
  url = {https://arxiv.org/abs/2402.05099},
  author = {Juravsky, Jordan and Brown, Bradley and Ehrlich, Ryan and Fu, Daniel Y. and R{\'e}, Christopher and Mirhoseini, Azalia},
  title = {Hydragen: High-Throughput LLM Inference with Shared Prefixes},
  publisher = {arXiv},
  year = {2024},
  copyright = {Creative Commons Attribution 4.0 International}
}

@misc{https://doi.org/10.48550/arxiv.2402.15220,
  doi = {10.48550/ARXIV.2402.15220},
  url = {https://arxiv.org/abs/2402.15220},
  author = {Ye, Lu and Tao, Ze and Huang, Yong and Li, Yang},
  title = {ChunkAttention: Efficient Self-Attention with Prefix-Aware KV Cache and Two-Phase Partition},
  publisher = {arXiv},
  year = {2024},
  copyright = {Creative Commons Attribution 4.0 International}
}

@misc{https://doi.org/10.48550/arxiv.2404.04793,
  doi = {10.48550/ARXIV.2404.04793},
  url = {https://arxiv.org/abs/2404.04793},
  author = {Wang, Zihao and Cui, Bin and Gan, Shaoduo},
  title = {SqueezeAttention: 2D Management of KV-Cache in LLM Inference via Layer-wise Optimal Budget},
  publisher = {arXiv},
  year = {2024},
  copyright = {Creative Commons Attribution Share Alike 4.0 International}
}

@misc{https://doi.org/10.48550/arxiv.2310.01801,
  doi = {10.48550/ARXIV.2310.01801},
  url = {https://arxiv.org/abs/2310.01801},
  author = {Ge, Suyu and Zhang, Yunan and Liu, Liyuan and Zhang, Minjia and Han, Jiawei and Gao, Jianfeng},
  title = {Model Tells You What to Discard: Adaptive KV Cache Compression for LLMs},
  publisher = {arXiv},
  year = {2023},
  copyright = {Creative Commons Attribution 4.0 International}
}

@misc{https://doi.org/10.48550/arxiv.2312.04985,
  doi = {10.48550/ARXIV.2312.04985},
  url = {https://arxiv.org/abs/2312.04985},
  author = {Ribar, Luka and Chelombiev, Ivan and Hudlass-Galley, Luke and Blake, Charlie and Luschi, Carlo and Orr, Douglas},
  title = {SparQ Attention: Bandwidth-Efficient LLM Inference},
  publisher = {arXiv},
  year = {2023},
  copyright = {Creative Commons Attribution Share Alike 4.0 International}
}

@misc{https://doi.org/10.48550/arxiv.2211.17192,
  doi = {10.48550/ARXIV.2211.17192},
  url = {https://arxiv.org/abs/2211.17192},
  author = {Leviathan, Yaniv and Kalman, Matan and Matias, Yossi},
  title = {Fast Inference from Transformers via Speculative Decoding},
  publisher = {arXiv},
  year = {2022},
  copyright = {Creative Commons Attribution 4.0 International}
}

@misc{https://doi.org/10.48550/arxiv.2302.01318,
  doi = {10.48550/ARXIV.2302.01318},
  url = {https://arxiv.org/abs/2302.01318},
  author = {Chen, Charlie and Borgeaud, Sebastian and Irving, Geoffrey and Lespiau, Jean-Baptiste and Sifre, Laurent and Jumper, John},
  title = {Accelerating Large Language Model Decoding with Speculative Sampling},
  publisher = {arXiv},
  year = {2023},
  copyright = {Creative Commons Attribution 4.0 International}
}

@misc{https://doi.org/10.48550/arxiv.2401.10774,
  doi = {10.48550/ARXIV.2401.10774},
  url = {https://arxiv.org/abs/2401.10774},
  author = {Cai, Tianle and Li, Yuhong and Geng, Zhengyang and Peng, Hongwu and Lee, Jason D. and Chen, Deming and Dao, Tri},
  title = {Medusa: Simple LLM Inference Acceleration Framework with Multiple Decoding Heads},
  publisher = {arXiv},
  year = {2024},
  copyright = {Creative Commons Attribution 4.0 International}
}

@misc{https://doi.org/10.48550/arxiv.2401.15077,
  doi = {10.48550/ARXIV.2401.15077},
  url = {https://arxiv.org/abs/2401.15077},
  author = {Li, Yuhui and Wei, Fangyun and Zhang, Chao and Zhang, Hongyang},
  title = {EAGLE: Speculative Sampling Requires Rethinking Feature Uncertainty},
  publisher = {arXiv},
  year = {2024},
  copyright = {arXiv.org perpetual, non-exclusive license}
}

@misc{https://doi.org/10.48550/arxiv.2406.16858,
  doi = {10.48550/ARXIV.2406.16858},
  url = {https://arxiv.org/abs/2406.16858},
  author = {Li, Yuhui and Wei, Fangyun and Zhang, Chao and Zhang, Hongyang},
  title = {EAGLE-2: Faster Inference of Language Models with Dynamic Draft Trees},
  publisher = {arXiv},
  year = {2024},
  copyright = {arXiv.org perpetual, non-exclusive license}
}

@misc{https://doi.org/10.48550/arxiv.2503.01840,
  doi = {10.48550/ARXIV.2503.01840},
  url = {https://arxiv.org/abs/2503.01840},
  author = {Li, Yuhui and Wei, Fangyun and Zhang, Chao and Zhang, Hongyang},
  title = {EAGLE-3: Scaling up Inference Acceleration of Large Language Models via Training-Time Test},
  publisher = {arXiv},
  year = {2025},
  copyright = {arXiv.org perpetual, non-exclusive license}
}

@misc{https://doi.org/10.48550/arxiv.2402.02057,
  doi = {10.48550/ARXIV.2402.02057},
  url = {https://arxiv.org/abs/2402.02057},
  author = {Fu, Yichao and Bailis, Peter and Stoica, Ion and Zhang, Hao},
  title = {Break the Sequential Dependency of LLM Inference Using Lookahead Decoding},
  publisher = {arXiv},
  year = {2024},
  copyright = {arXiv.org perpetual, non-exclusive license}
}

@inproceedings{https://doi.org/10.48550/arxiv.2305.09781, series={ASPLOS '24}, title={SpecInfer: Accelerating Large Language Model Serving with Tree-based Speculative Inference and Verification}, url={http://dx.doi.org/10.1145/3620666.3651335}, DOI={10.1145/3620666.3651335}, booktitle={Proceedings of the 29th ACM International Conference on Architectural Support for Programming Languages and Operating Systems, Volume 3}, publisher={ACM}, author={Miao, Xupeng and Oliaro, Gabriele and Zhang, Zhihao and Cheng, Xinhao and Wang, Zeyu and Zhang, Zhengxin and Wong, Rae Ying Yee and Zhu, Alan and Yang, Lijie and Shi, Xiaoxiang and Shi, Chunan and Chen, Zhuoming and Arfeen, Daiyaan and Abhyankar, Reyna and Jia, Zhihao}, year={2024}, month=Apr, pages={932--949}, collection={ASPLOS '24} }

@misc{https://doi.org/10.48550/arxiv.2311.08252,
  doi = {10.48550/ARXIV.2311.08252},
  url = {https://arxiv.org/abs/2311.08252},
  author = {He, Zhenyu and Zhong, Zexuan and Cai, Tianle and Lee, Jason D. and He, Di},
  title = {REST: Retrieval-Based Speculative Decoding},
  publisher = {arXiv},
  year = {2023},
  copyright = {arXiv.org perpetual, non-exclusive license}
}

@inproceedings{https://doi.org/10.48550/arxiv.2309.08168, title={Draft\& Verify: Lossless Large Language Model Acceleration via Self-Speculative Decoding}, url={http://dx.doi.org/10.18653/v1/2024.acl-long.607}, DOI={10.18653/v1/2024.acl-long.607}, booktitle={Proceedings of the 62nd Annual Meeting of the Association for Computational Linguistics (Volume 1: Long Papers)}, publisher={Association for Computational Linguistics}, author={Zhang, Jun and Wang, Jue and Li, Huan and Shou, Lidan and Chen, Ke and Chen, Gang and Mehrotra, Sharad}, year={2024}, pages={11263--11282} }

@inproceedings{https://doi.org/10.48550/arxiv.2404.16710, title={LayerSkip: Enabling Early Exit Inference and Self-Speculative Decoding}, url={http://dx.doi.org/10.18653/v1/2024.acl-long.681}, DOI={10.18653/v1/2024.acl-long.681}, booktitle={Proceedings of the 62nd Annual Meeting of the Association for Computational Linguistics (Volume 1: Long Papers)}, publisher={Association for Computational Linguistics}, author={Elhoushi, Mostafa and Shrivastava, Akshat and Liskovich, Diana and Hosmer, Basil and Wasti, Bram and Lai, Liangzhen and Mahmoud, Anas and Acun, Bilge and Agarwal, Saurabh and Roman, Ahmed and Aly, Ahmed and Chen, Beidi and Wu, Carole-Jean}, year={2024}, pages={12622--12642} }

@misc{https://doi.org/10.48550/arxiv.2402.11131,
  doi = {10.48550/ARXIV.2402.11131},
  url = {https://arxiv.org/abs/2402.11131},
  author = {Bhendawade, Nikhil and Belousova, Irina and Fu, Qichen and Mason, Henry and Rastegari, Mohammad and Najibi, Mahyar},
  title = {Speculative Streaming: Fast LLM Inference without Auxiliary Models},
  publisher = {arXiv},
  year = {2024},
  copyright = {arXiv.org perpetual, non-exclusive license}
}

@misc{https://doi.org/10.48550/arxiv.2403.09919,
  doi = {10.48550/ARXIV.2403.09919},
  url = {https://arxiv.org/abs/2403.09919},
  author = {Cheng, Yunfei and Zhang, Aonan and Zhang, Xuanyu and Wang, Chong and Wang, Yi},
  title = {Recurrent Drafter for Fast Speculative Decoding in Large Language Models},
  publisher = {arXiv},
  year = {2024},
  copyright = {arXiv.org perpetual, non-exclusive license}
}

@misc{https://doi.org/10.48550/arxiv.1711.02281,
  doi = {10.48550/ARXIV.1711.02281},
  url = {https://arxiv.org/abs/1711.02281},
  author = {Gu, Jiatao and Bradbury, James and Xiong, Caiming and Li, Victor O. K. and Socher, Richard},
  title = {Non-Autoregressive Neural Machine Translation},
  publisher = {arXiv},
  year = {2017},
  copyright = {arXiv.org perpetual, non-exclusive license}
}

@misc{https://doi.org/10.48550/arxiv.1904.09324,
  doi = {10.48550/ARXIV.1904.09324},
  url = {https://arxiv.org/abs/1904.09324},
  author = {Ghazvininejad, Marjan and Levy, Omer and Liu, Yinhan and Zettlemoyer, Luke},
  title = {Mask-Predict: Parallel Decoding of Conditional Masked Language Models},
  publisher = {arXiv},
  year = {2019},
  copyright = {arXiv.org perpetual, non-exclusive license}
}

@misc{https://doi.org/10.48550/arxiv.1905.11006,
  doi = {10.48550/ARXIV.1905.11006},
  url = {https://arxiv.org/abs/1905.11006},
  author = {Gu, Jiatao and Wang, Changhan and Zhao, Jake},
  title = {Levenshtein Transformer},
  publisher = {arXiv},
  year = {2019},
  copyright = {arXiv.org perpetual, non-exclusive license}
}

@misc{https://doi.org/10.48550/arxiv.1902.03249,
  doi = {10.48550/ARXIV.1902.03249},
  url = {https://arxiv.org/abs/1902.03249},
  author = {Stern, Mitchell and Chan, William and Kiros, Jamie and Uszkoreit, Jakob},
  title = {Insertion Transformer: Flexible Sequence Generation via Insertion Operations},
  publisher = {arXiv},
  year = {2019},
  copyright = {arXiv.org perpetual, non-exclusive license}
}

@misc{https://doi.org/10.48550/arxiv.1808.08583,
  doi = {10.48550/ARXIV.1808.08583},
  url = {https://arxiv.org/abs/1808.08583},
  author = {Wang, Chunqi and Zhang, Ji and Chen, Haiqing},
  title = {Semi-Autoregressive Neural Machine Translation},
  publisher = {arXiv},
  year = {2018},
  copyright = {arXiv.org perpetual, non-exclusive license}
}

@misc{https://doi.org/10.48550/arxiv.2205.14217,
  doi = {10.48550/ARXIV.2205.14217},
  url = {https://arxiv.org/abs/2205.14217},
  author = {Li, Xiang Lisa and Thickstun, John and Gulrajani, Ishaan and Liang, Percy and Hashimoto, Tatsunori B.},
  title = {Diffusion-LM Improves Controllable Text Generation},
  publisher = {arXiv},
  year = {2022},
  copyright = {Creative Commons Attribution 4.0 International}
}

@misc{https://doi.org/10.48550/arxiv.2310.16834,
  doi = {10.48550/ARXIV.2310.16834},
  url = {https://arxiv.org/abs/2310.16834},
  author = {Lou, Aaron and Meng, Chenlin and Ermon, Stefano},
  title = {Discrete Diffusion Modeling by Estimating the Ratios of the Data Distribution},
  publisher = {arXiv},
  year = {2023},
  copyright = {Creative Commons Attribution 4.0 International}
}

@misc{https://doi.org/10.48550/arxiv.2406.07524,
  doi = {10.48550/ARXIV.2406.07524},
  url = {https://arxiv.org/abs/2406.07524},
  author = {Sahoo, Subham Sekhar and Arriola, Marianne and Schiff, Yair and Gokaslan, Aaron and Marroquin, Edgar and Chiu, Justin T and Rush, Alexander and Kuleshov, Volodymyr},
  title = {Simple and Effective Masked Diffusion Language Models},
  publisher = {arXiv},
  year = {2024},
  copyright = {Creative Commons Attribution 4.0 International}
}

@misc{https://doi.org/10.48550/arxiv.2502.09992,
  doi = {10.48550/ARXIV.2502.09992},
  url = {https://arxiv.org/abs/2502.09992},
  author = {Nie, Shen and Zhu, Fengqi and You, Zebin and Zhang, Xiaolu and Ou, Jingyang and Hu, Jun and Zhou, Jun and Lin, Yankai and Wen, Ji-Rong and Li, Chongxuan},
  title = {Large Language Diffusion Models},
  publisher = {arXiv},
  year = {2025},
  copyright = {arXiv.org perpetual, non-exclusive license}
}

@misc{https://doi.org/10.48550/arxiv.2109.05093,
  doi = {10.48550/ARXIV.2109.05093},
  url = {https://arxiv.org/abs/2109.05093},
  author = {Scholak, Torsten and Schucher, Nathan and Bahdanau, Dzmitry},
  title = {PICARD: Parsing Incrementally for Constrained Auto-Regressive Decoding from Language Models},
  publisher = {arXiv},
  year = {2021},
  copyright = {Creative Commons Attribution Non Commercial Share Alike 4.0 International}
}

@misc{https://doi.org/10.48550/arxiv.2201.11227,
  doi = {10.48550/ARXIV.2201.11227},
  url = {https://arxiv.org/abs/2201.11227},
  author = {Poesia, Gabriel and Polozov, Oleksandr and Le, Vu and Tiwari, Ashish and Soares, Gustavo and Meek, Christopher and Gulwani, Sumit},
  title = {Synchromesh: Reliable code generation from pre-trained language models},
  publisher = {arXiv},
  year = {2022},
  copyright = {Creative Commons Attribution Non Commercial No Derivatives 4.0 International}
}

@misc{https://doi.org/10.48550/arxiv.2501.10868,
  doi = {10.48550/ARXIV.2501.10868},
  url = {https://arxiv.org/abs/2501.10868},
  author = {Geng, Saibo and Cooper, Hudson and Moskal, Micha{\l{}} and Jenkins, Samuel and Berman, Julian and Ranchin, Nathan and West, Robert and Horvitz, Eric and Nori, Harsha},
  title = {JSONSchemaBench: A Rigorous Benchmark of Structured Outputs for Language Models},
  publisher = {arXiv},
  year = {2025},
  copyright = {Creative Commons Attribution 4.0 International}
}

@misc{https://doi.org/10.48550/arxiv.2310.11324,
  doi = {10.48550/ARXIV.2310.11324},
  url = {https://arxiv.org/abs/2310.11324},
  author = {Sclar, Melanie and Choi, Yejin and Tsvetkov, Yulia and Suhr, Alane},
  title = {Quantifying Language Models' Sensitivity to Spurious Features in Prompt Design or: How I learned to start worrying about prompt formatting},
  publisher = {arXiv},
  year = {2023},
  copyright = {Creative Commons Attribution 4.0 International}
}

@misc{https://doi.org/10.48550/arxiv.2408.02442,
  doi = {10.48550/ARXIV.2408.02442},
  url = {https://arxiv.org/abs/2408.02442},
  author = {Tam, Zhi Rui and Wu, Cheng-Kuang and Tsai, Yi-Lin and Lin, Chieh-Yen and Lee, Hung-yi and Chen, Yun-Nung},
  title = {Let Me Speak Freely? A Study on the Impact of Format Restrictions on Performance of Large Language Models},
  publisher = {arXiv},
  year = {2024},
  copyright = {Creative Commons Attribution Share Alike 4.0 International}
}

@misc{https://doi.org/10.48550/arxiv.2307.03172,
  doi = {10.48550/ARXIV.2307.03172},
  url = {https://arxiv.org/abs/2307.03172},
  author = {Liu, Nelson F. and Lin, Kevin and Hewitt, John and Paranjape, Ashwin and Bevilacqua, Michele and Petroni, Fabio and Liang, Percy},
  title = {Lost in the Middle: How Language Models Use Long Contexts},
  publisher = {arXiv},
  year = {2023},
  copyright = {arXiv.org perpetual, non-exclusive license}
}

@misc{https://doi.org/10.48550/arxiv.2102.09690,
  doi = {10.48550/ARXIV.2102.09690},
  url = {https://arxiv.org/abs/2102.09690},
  author = {Zhao, Tony Z. and Wallace, Eric and Feng, Shi and Klein, Dan and Singh, Sameer},
  title = {Calibrate Before Use: Improving Few-Shot Performance of Language Models},
  publisher = {arXiv},
  year = {2021},
  copyright = {arXiv.org perpetual, non-exclusive license}
}

@misc{https://doi.org/10.48550/arxiv.2104.08786,
  doi = {10.48550/ARXIV.2104.08786},
  url = {https://arxiv.org/abs/2104.08786},
  author = {Lu, Yao and Bartolo, Max and Moore, Alastair and Riedel, Sebastian and Stenetorp, Pontus},
  title = {Fantastically Ordered Prompts and Where to Find Them: Overcoming Few-Shot Prompt Order Sensitivity},
  publisher = {arXiv},
  year = {2021},
  copyright = {Creative Commons Attribution 4.0 International}
}

@misc{https://doi.org/10.48550/arxiv.2309.03882,
  doi = {10.48550/ARXIV.2309.03882},
  url = {https://arxiv.org/abs/2309.03882},
  author = {Zheng, Chujie and Zhou, Hao and Meng, Fandong and Zhou, Jie and Huang, Minlie},
  title = {Large Language Models Are Not Robust Multiple Choice Selectors},
  publisher = {arXiv},
  year = {2023},
  copyright = {arXiv.org perpetual, non-exclusive license}
}
